\declaretheorem[style=definition,name=Definition]{defn}
\declaretheorem[style=definition,name=Fact]{fact}
\declaretheorem[style=definition,name=Example]{example}
\newtheoremstyle{examplecon}
{\topsep} {\topsep}%
{\upshape}
{}
{\bfseries\scshape}
{.}
{ }
{\thmname{#1} \thmnumber{ #2}\thmnote{#3}\normalfont\enspace(continued)}
\theoremstyle{examplecon}
\newtheorem*{examplecont}{Example}
\def\XC{\mathcal X}
\def\AC{\mathcal A}
\def\EC{\mathcal E}
\def\FC{\mathcal F}
\def\MC{\mathcal M}
\def\KC{\mathcal K}
\def\set#1{\{#1\}}              
\def\tuple#1{\langle#1\rangle}  
\def\And{\wedge}                
\def\LAND{\bigwedge}
\def\Or{\vee}                   
\def\LOR{\bigvee}                
\newcommand{\card}[1]{|#1|}
\newcommand{\virg}[1]{``#1''}
\newcommand{\inv}[1]{\underline{#1}^{+}}
\newcommand{\grounded}[1]{\ensuremath{G_{#1}}}
\newcommand{\logicalcomplement}[1]{\ensuremath{{#1}^{*}}}
\newcommand{\varx}[0]{\ensuremath{\bar{x}}}
\newcommand{\darg}[2]{\ensuremath{(\set{#1}, #2)}}
\newcommand{\n}[0]{\ensuremath{not~}}
\newcommand{\truen}[0]{\ensuremath{\lnot}}
\newcommand{\name}[0]{\ensuremath{\triangleq}}
\newcommand{\wseq}[2]{\ensuremath{\underbrace{#1\cdot \ldots \cdot #1}_{#2}}}
\newcommand{\cone}[0]{\ensuremath{A}}
\newcommand{\ctwo}[0]{\ensuremath{B}}
\newcommand{\cthree}[0]{\ensuremath{C}}
\newcommand{\cfour}[0]{\ensuremath{D}}
\newcommand{\Brian}{\textit{Brian}}
\newcommand{\Brianphone}{\textit{Brianphone}}
\newcommand{\office}{\textit{office}}
\journal{Artificial Intelligence}
\begin{document}

\begin{frontmatter}



\title{Automata for Infinite Argumentation Structures}

\author[unibs]{Pietro Baroni}
\ead{pietro.baroni@ing.unibs.it}

\author[unibs]{Federico Cerutti\corref{cor}}
\ead{federico.cerutti@ing.unibs.it}

\author[liv]{Paul E. Dunne}
\ead{P.E.Dunne@liverpool.ac.uk}

\author[unibs]{Massimiliano Giacomin}
\ead{massimiliano.giacomin@ing.unibs.it}

\address[unibs]{Dipartimento di Ingegneria dell'Informazione, University of Brescia, via Branze, 38, 25123, Brescia, Italy}
\address[liv]{Department of Computer Science, Ashton Building, University of Liverpool, Liverpool L69 7ZF, United Kingdom}

\cortext[cor]{Corresponding author}

\begin{abstract}
The theory of abstract argumentation frameworks ({\sc af}s) has, in the main, focused on finite
structures, though there are many significant contexts where argumentation can be regarded as a process involving
\emph{infinite} objects. 
To address this limitation, in this paper we propose a novel approach for describing infinite {\sc af}s using tools from formal language theory.
In particular, the possibly infinite set of arguments is specified through the language recognized by a deterministic finite automaton while a suitable formalism, called \emph{attack expression}, is introduced to describe the relation of attack between arguments.
The proposed approach is shown to satisfy some desirable properties which can not be achieved through other ``naive'' uses of formal languages.
In particular, the approach is shown to be expressive enough to capture (besides any arbitrary finite structure) a large variety of infinite {\sc af}s including two major examples from previous literature and two sample cases from the domains of multi-agent negotiation and ambient intelligence.
On the computational side, we show that several decision and construction problems which are
known to be polynomial time solvable in finite {\sc af}s are decidable in the context of the proposed formalism and we provide the relevant algorithms. Moreover we obtain additional results concerning the case of \emph{finitary} {\sc af}s.
\end{abstract}

\begin{keyword}
Infinite argumentation frameworks \sep Automata-based representation \sep Argumentation semantics computation

\end{keyword}

\end{frontmatter}

%
%
\section{Introduction}
\label{section:sec-introduction}
The theory of abstract argumentation frameworks ({\sc af}s) has advanced considerably since its
original formulation in the work of Dung~\cite{dung:1995}. Now recognised as a core research topic within the
field of AI in general and its sub-disciplines concerned with knowledge representation and multiagent systems
in particular, {\sc af}s have proven a powerful modelling tool to address reasoning
issues in contexts where classical deductive logic is not the most suitable technique. An overview
of the role of argumentation in AI may be found in recent surveys such as that
of Bench-Capon and Dunne~\cite{Bench-Capon2007} or the comprehensive collection of introductory articles in
Rahwan and Simari~\cite{RahwanSimari:2009}. In total matters of semantics, algorithms and computational
complexity have occupied many researchers to the extent that their key properties are, now, reasonably
well understood. Partly in consequence of such understanding, a rich body of subsequent work has emerged 
promoting developments of Dung's basic formalism in order to encompass scenarios within which the purely abstract
approach of \cite{dung:1995} is felt to be too limiting. Among the many notable contributions
of this nature one finds proposals such as the preference-based {\sc af}s of Amgoud and Cayrol~\cite{amgoudcayrol2002};
the value-based model of Bench-Capon~\cite{BC03a}; {\sc eaf}s from Modgil~\cite{Modgil:2009}; recursive attacks in
the {\sc afra} mechanism from Baroni~\emph{et al.}~\cite{Baroni-et-al-IJAR:2010}; divers treatments of
weighted frameworks such as Dunne~\emph{et al.}~\cite{DHMcBPW:2011}; as well as sophisticated developments
of the basic ``binary attack'' concept from \cite{dung:1995} such as the {\sc adf} model from
Brewka and Woltran~\cite{BW-KR:2010,BDW:2011} and the constrained {\sc af}s of Coste-Marquis {\em et al.}~\cite{CMDM-KR:2006,Devred-et-al:2010}.

Amidst the wealth and variety of treatments stemming from \cite{dung:1995} one can, however, note a
common factor: invariably discussion is focused on \emph{finite} environments be they finite sets of
basic arguments or finite attack relationships over these. In contrast, consideration of \emph{infinite} scenarios
has been largely neglected.
This turns out to be a limitation from a theoretical, conceptual, and practical perspective.

From a theoretical viewpoint, infinite frameworks extend (and, in a sense, complete) the range of investigation on abstract argumentation semantics and their properties.
In fact, infinite {\sc af}s have been the subject of specific attention in the seminal paper by Dung, whose fundamental results do not rely on finiteness. Subsequently, infinite {\sc af}s have sometimes been considered \emph{per se} as significant testbeds for examining semantics properties which, though holding in the finite case, may be challenging to prove or fail to hold in the infinite case. For instance, the existence of semi-stable extensions \cite{Caminada:2006} is guaranteed for finite frameworks, while an infinite framework admitting no semi-stable extensions has been devised in \cite{camver:2010} (and will be recalled in Section \ref{subsec:camver}) and the existence of semi-stable extensions for \emph{finitary} frameworks has been proved in \cite{Weydert:2011}.
Apart from theoretical interest, this kind of results may be useful to shed new light on fundamental issues underlying the definition of different semantics, thus enabling a broader view and deeper understanding in comparing and assessing them.

From a conceptual perspective, considering finite frameworks corresponds to (i) adopting a closed view of the argumentation process, which is bounded to terminate after a finite number of steps and (ii) excluding reasoning about infinite domains.
Assumption (i) contrasts with the intrinsically open nature of the argumentation process, arising from the fundamental distinction between the concepts
of ``demonstration by \emph{proof}'' and ``persuasion through \emph{argument}''. That is to say, as noted
in \cite[p.~620]{Bench-Capon2007}: ``Arguments are \emph{defeasible}: the reasoning that formed a persuasive case, in
the light of changes of viewpoint or awareness of information not previously available, may fail to convince. This defeasibility
is never removed: an argument may cease to be challenged and so accepted, but the \emph{possibility} of challenge remains.''.
In other words, the finite view can (at best)
describe a ``snapshot'' of the notional complete context within which an argumentation process could evolve due to potential (but as yet unvoiced) challenge to the conclusions derived.
On the other hand, limitation (ii) prevents the use of the abstract argumentation formalism in contexts where reasoning has to deal with open-ended scenarios.
For instance, considering an open time horizon, one may want to encompass the existence of infinite arguments associated with infinitely many time instants: this kind of approach has been proposed by Pollock to model reasoning about temporal persistence of beliefs \cite{Pollock98}.

From the practical perspective, it has to be remarked that systems giving rise to a potentially infinite automated production of arguments may well occur in actual applications.
On one hand, there are well-known correspondences between argumentation frameworks and other kinds of reasoning systems potentially producing infinite derivations: an example concerning logic programming is in fact given in Appendix A of \cite{dung:1995} and will be recalled in this paper.
On the other hand, abstract argumentation is widely adopted as a general tool to model dialogues with different purposes (e.g. deliberation, negotiation, persuasion) between self-interested agents in a multi-agent system.
In such a context the opportunistic behavior of each agent, driven by \virg{selfish} criteria, and the absence of global coordination and shared information may lead to non-terminating argument exchanges \cite{baroni-et-al:self-stab}.

Adding to the considerations above the fact that, as already recalled, Dung's original work not only addresses infinite {\sc af}s as objects of interest but also establishes a number of fundamental properties of such in the context of the basic semantics put forward for {\sc af}s, it appears that the very limited coverage of infinite frameworks in the subsequent literature represents an important and, to some extent, surprising lacuna in the field.

This work contributes to fill this gap by addressing the problem of defining finite specifications of infinite {\sc af}s through formal languages and proposing an approach based on finite automata.
This proposal consists of two basic elements: a description of the (infinite) set of arguments through a finite automaton, and a description of the attack relations linking arguments together through an \emph{attack expression}.
More precisely, the attack expression specifies a mapping between regular expressions describing sets of arguments, with the intended meaning that a set $S$ is attacked by the elements of the set obtained from $S$ through the mapping. The combination of the automaton describing the set of arguments and of the attack expression will be called the \emph{{\sc af} specification} and will be shown to be expressive enough to encompass a variety of infinite argumentation frameworks, including the major examples available in the literature.

Clearly, a sufficiently expressive specification formalism needs to be complemented by some suitable computational mechanism for the evaluation of argument justification status, which represents the main goal of any application of computational argumentation, either in finite or infinite contexts.
The proposed approach is shown to be satisfactory also from this viewpoint, being able to support the definition of suitable algorithms for some \virg{standard} computational problems in argumentation semantics.

The paper is organised as follows. 
We lay down the context of the work, discuss motivations, and introduce application examples in Section \ref{section:motivations}.
We then recall the necessary technical background on Dung's argumentation framework and on formal languages in 
Section \ref{section:sec-background}.
In Section \ref{section:naive-rep} we state a set of basic requirements for a representation formalism for infinite frameworks and show that they are not satisfied by a straightforward \emph{naive} approach one may adopt.
Section \ref{section:sec-generic-representation} introduces and illustrates with a trailing set of examples the \emph{{\sc af} specification} formalism for infinite frameworks and shows that it is expressive enough to capture also \virg{regular} finite frameworks and frameworks that can be regarded as composed by a finite subframework and one or more infinite subframeworks.
Section \ref{sec:computing} introduces effective computational procedures\footnote{According to \cite[p.~55]{Rosser1939} (but also \cite[p.~210]{Mendelson2010} and others) an \emph{effective computational procedure} is ``a method each step of which is precisely predetermined and which is certain to produce the answer in a finite number of steps''.} for several standard decision problems in argumentation semantics in the context of the {\sc af} specification formalism.
Section \ref{section:example-cases} demonstrates the suitability of the approach, both on the representation and on the computational side, by analyzing in detail its application to four examples: two infinite frameworks previously introduced in the literature for the sake of theoretical analysis and two examples of infinite argumentation in multi-agent systems taken from Section \ref{section:motivations}.
Section~\ref{sec-related} discusses related works  
whilst Section \ref{sec-conclusions} concludes and
proposes some areas for future work. \ref{subsection:flt} recalls the basics of formal language and automata theory to make the paper self-contained, while proofs of all technical results are collected in \ref{appendix-proofs}.

%
%

\section{Context and motivations}\label{section:motivations}

Quoting Prakken \cite{Prakken2010}, Dung's paper on abstract argumentation framework \virg{was a breakthrough in three ways: it provided a general and intuitive semantics for the consequence notions of argumentation logics (and for nonmonotonic logics in general); it made a precise comparison possible between different systems (by translating them into his abstract format); and it made a general study of formal
properties of systems possible, which are inherited by instantiations of his framework.}
Due to its abstract nature, Dung's formalism \virg{is best seen not as a formalism for directly representing argumentation-based inference problems but as a tool for analysing particular argumentation systems and for developing a metatheory of such systems}.
In this perspective, investigation of infinite {\sc af}s finds its motivations in the variety of more concrete contexts and systems where infinite structures play a role and that can be modeled using {\sc af}s. A (non exhaustive) account is given in the following subsections.

\subsection{Argumentation models and systems}

Infinite entities have been encompassed at a foundational level by all the main literature approaches to formal argumentation.
In fact, both before and after Dung's work, other \virg{less abstract} approaches have been defined which formalize arguments and their structure in various ways, with the common property of referring to a generic (and often only partially specified) language, encompassing infinite structures.

In the formalization of defeasible reasoning by Simari and Loui \cite{SimariLoui92} an \emph{argument structure} comprises a possibly infinite set of sentences supporting a conclusion, and, in turn, the set of all possible argument structures is infinite.

In the theory of \emph{assumption-based argumentation} \cite{bondar-etal:1997} an assumption-based framework consists of a set of beliefs and a set of assumptions which are possibly infinite subsets of a language consisting of countably many sentences. Again the infinite case is explicitly considered in the theoretical analysis of semantics properties in the framework.

In Vreeswijk's \emph{abstract argumentation systems} \cite{vreeswijk:1997a,Baronietal2000} arguments are restricted to have a finite set of premises, but the set of arguments is possibly infinite and infinite argumentation sequences (involving either finite or infinite sets of arguments) are used as a formal tool for extension evaluation, by introducing a notion of \virg{limit} for infinite argumentation sequences. In \cite{vreeswijk:1997a} it is remarked in particular that some desirable limit properties of infinite argumentation sequences may not hold when an infinite set of arguments is considered. This is left as an open problem, which, as to our knowledge, has still to be solved.

In DEFLOG \cite{Verheij03} the central notion is the \virg{dialectical interpretation} of a theory, which basically is a (possibly infinite) set of sentences related by two connectives representing support and defeat relations respectively.

Finally, in the more recent ASPIC formalism \cite{Amgoudetal2006} and its ASPIC+ evolution \cite{Prakken2010} arguments with an infinite number of subarguments (and hence infinite sets of arguments) are encompassed and none of the main properties relies on argument finiteness.

It can hence be stated that the consideration of infinite structures and derivations has been consistently regarded as a basic, one could even say \virg{natural}, feature in argumentation formalisms. It has however to be acknowledged that this feature has typically been regarded as problematic when moving from (more or less abstract) theory to complete specification (and implementation) of actual argumentation-based reasoning systems.
In fact, the unbounded open nature of argumentative reasoning has often been contrasted with the practical needs and limitations of resource-bounded agents.

From a philosophical stance, this contrast has been pointed out by Pollock \cite{Pollock92} by introducing the distinction between \emph{justified beliefs} and \emph{warranted propositions}.
Quoting Pollock, \virg{at each stage of reasoning, if the reasoning is correct then a belief held on the basis of that reasoning is justified, even if subsequent reasoning will mandate its retraction} while \virg{in contrast to justification, warrant is what the system of reasoning is ultimately striving for. A proposition is warranted in a particular epistemic situation if and only if, starting from that epistemic situation, an ideal reasoner unconstrained by time or resource limitations would ultimately be led to believe the proposition}.
In this view, warrant may be regarded as a sort of unattainable goal for a pratical resource-bounded agent, which needs to be content with the more limited notion of justification.

From the practical side, an example of the problems in dealing with infinite structures is presented in \cite{GarciaSimariDeLP} in the context of the \emph{DeLP (Defeasible Logic Programming)} system.
Here, one of the central notions is the one of an \virg{argumentation line}, which is basically a sequence of argument structures where each element of the sequence is a defeater of its predecessor.
As discussed in \cite{GarciaSimariDeLP} infinite argumentation lines may easily emerge for various reasons (e.g. self-defeating arguments, reciprocal defeaters, non-concordant sets of supporting arguments). Since managing infinite argumentation lines is regarded as undesirable, suitable restrictions are introduced in the formal definition of argumentation line to avoid these cases.

A different kind of restriction is adopted in the argumentation-based approach to \emph{Defeasible Logic} proposed in \cite{Governatorietal04}. Here, an argument is a possibly infinite proof tree for a literal $p$ (associated with the tree root). By definition, however, only finite arguments can be acceptable (this rather drastic choice is motivated by the goal of avoiding the risk of supporting \virg{well-known fallacies such as circular argument and infinite regress}) while infinite arguments keep anyway the power to prevent justification of other arguments.

In the context of the logic-based approach to argumentation of Besnard and Hunter \cite{BesnardHunter2008} the classical chicken and egg dilemma is used as a common sense reasoning example giving rise to an infinite sequence of arguments, each being a counterargument to the preceding one (such a sequence is called \emph{dispute} in this context).
In fact, dilemmas (of various nature and possibly more interesting than chicken and egg) are recognized as a significant case of infinite reasoning with conflicting arguments also in non-technical literature.

An example is given in the novel \virg{Runaround} by Isaac Asimov where, on the planet Mercury, a robot, called SPD-13, receives by two spacemen the order to accomplish a mission which requires to collect selenium from a pool. 
The robot is programmed to obey three basic rules which can be synthesized as follows. The first rule states that the robot has to protect human lives.
The second rule states that the robot has to obey orders unless they conflict with the first rule.
The third rule states that the robot has to protect itself unless this conflicts with the first rule.
After a long wait, the spacemen send out another (less capable) robot to look for SPD-13. From the report of the second robot they realize that the mind of SPD-13 is in a loop (which has caused a sort of \virg{drunkenness}) which can be described as follows: when the robot gets near the selenium pool it perceives an unforeseen danger, the third rule is activated and the robot builds an argument to go away, prevailing on the previous decision to obey the order. When the robot is sufficiently far from the pool, its danger perception decreases and so according to the second rule it builds a new argument which leads it to turn back towards the pool, prevailing over the previous decision. When it gets sufficiently closer to the pool, it feels the danger again and the process restarts.

From the representation point of view in \cite{BesnardHunter2008} this kind of problems is tackled by imposing some restriction in the definition of the \emph{argument tree} structure, which is meant to capture all the disputes concerning a specific argument (represented by the root of the tree). More precisely, the premises of an argument added to the tree are forbidden to be a subset of the union of the premises of its ancestors. With this constraint, the argument tree for the chicken and egg dilemma reduces to a two-length chain, but, as observed in \cite[p.~62]{BesnardHunter2008} \virg{the argument tree is merely a representation of the argumentation} and \virg{altough the argument tree is finite, the argumentation here is infinite and unresolved}.

An explicit restriction to finite structures is also adopted in a recent work concerning the study of postulates and properties of logic-based instantiations of abstract argumentation \cite{GorogiannisHunter11}. In this work, a propositional logic with a countable set of propositional letters is used as a basis for the argumentation process. It ensues that the set of all arguments is countably infinite (though, by definition, an argument is assumed to be built on a finite set of formulae). However, when introducing the notion of \emph{argument graph}, where nodes are arguments and the arcs represent the attack relation, the authors restrict the consideration to graphs with a finite number of nodes.  

\subsection{Multi-agent systems}\label{subsec:multiag}

From the previous subsection it appears that while the potential existence of infinite structures is widely acknowledged in non-abstract argumentation contexts too, there is a prevailing attitude to overlook the difficult problem of actually managing them, by ascribing their genesis to undesirable/pathological conditions which can be avoided at the implementation level with proper programming and preventive checks on the knowledge base.
While it is certainly true that infinite argumentation structures may arise from uninteresting/undesirable conditions, we remark that these do not exhaust the range of cases where such structures may arise and that, whatever the underlying reason, their emergence can not always be prevented.
In fact, there are concrete situations where systematic well-founded argument generation mechanisms may incur in an open-ended non-terminating behavior.

Multi-agent (and more generally distributed) systems provide a major case for this statement, under the non restrictive and fairly standard assumptions of self-interest and absence of a global reasoner to which all information is available.

For instance in \cite{BikakisAntoniouKDE2010} argumentation semantics of Defeasible Logic is extended to the case of a multi-context system for distributed ambient intelligence. Each context corresponds to the local knowledge and reasoning of an agent and arguments
of different contexts are interrelated through mapping rules. As to undesirable circularities, it is observed that \virg{loops in the local knowledge bases can be easily detected and removed without needing to interact with other agents. However, even if there are no loops in the local theories, the global knowledge base may contain loops caused by mapping rules.}
Such loops in the global knowledge base may cause infinite argumentation lines. 
In \cite{BikakisAntoniouKDE2010} this problem is dealt with by adopting the specific assumptions that (i) each agent uses its own vocabulary and is therefore the unique responsible of the evaluation of some literals, (ii) the agents behave in a fully cooperative manner in the process of justification evaluation.

In fact, unless one adopts some restrictive assumptions of this kind, the possible onset of non terminating behaviors in argument defeat status computation is inherent to multi-agent system.
This is formally proved in \cite{baroni-et-al:self-stab} where it is shown that approaches to distributed defeat status computation (see for instance \cite{McBurneyParsons01,Parsonsetal98}) usually rely on assumptions like a predefined unmodifiable number of agents, the existence of a centralized structure, or the obligation to reveal the entire inner structure of the arguments an agent has built.
Removing these restrictions and assuming a multi-agent system with the general properties of \emph{unlimited cardinality, autonomy, asynchronism, dynamism, and uncostrained communication} (see \cite{baroni-et-al:self-stab} for details), an impossibility result is obtained showing that no self-stabilizing algorithm can exist for defeat status computation according to any semantics which is \emph{valid} (namely obeys some fairly general constraint on the defeat status assignment).
The paper provides two practical examples of non terminating behavior: a distributed version of the three liars paradox introduced by Pollock \cite{Pollock94} and a negotiation dialogue for resource exchange among three agents\footnote{By the way, in the context of this example, in \cite{baroni-et-al:self-stab} it is remarked that the existence of circularities at the global level is not critical \emph{per se}, since they do not give rise to any problem if there are further attacking arguments breaking the cycle. Thus, simply forbidding cycles turns out to be a too drastic measure in general.}.

In the following subsections we provide two extended examples of infinite argumentation in multi-agent systems, namely an infinite negotiation process and a distributed reasoning process involving the components of an ambient intelligence system. These examples are inspired to the application contexts considered in \cite{baroni-et-al:self-stab} and \cite{BikakisAntoniouKDE2010} and will be used in Section \ref{section:example-cases} to demonstrate the application of the formalism proposed in this paper.

\subsubsection{An example in multi-agent negotiation}\label{subsubsec:intr-ex-negot}

In multi-agent systems, independent and possibly self-interested components strive to achieve common or individual goals by various forms of interaction (cooperation, negotiation, persuasion, resource exchange, task allocation, \ldots) for most of which argumentation is considered a suitable model in the literature (see for instance \cite{Rahwan05} and the references thereof).

If one considers interactions involving more than two agents and removes some not always realistic assumptions (e.g. that information on all argument exchanges is available to all agents) the interaction process may not reach a solution and continue forever with an infinite production of arguments.

To exemplify, consider a simple negotiation setting where three agents $A_1$, $A_2$ and $A_3$ may exchange resources called $R_a, R_b, R_c$. Each agent possesses some resource and has its own preference ordering on resources. Each agent is only partially informed on the resources owned by other agents and can not know the preferences of other agents.
At a given time instant, an agent $A_x$ builds an argument for proposing to $A_y$ an exchange between resources $R_x$ and $R_y$ if the following conditions holds: (i) $A_x$ owns $R_x$; (ii) $A_x$ knows that $A_y$ owns $R_y$; (iii) $A_x$ prefers $R_y$ to $R_x$.
An agent receiving a proposal may accept or reject it at a later time: the agent who has sent the proposal is free to withdraw it before receiving confirmation of acceptance (typically because the agent has received a more convenient exchange proposal which conflicts with the previous one). 
An agent is free to reiterate a proposal after having withdrawn it (typically because the reason to withdraw does not hold any more) and is obliged to withdraw an offer s/he has made before accepting an incompatible offer s/he has received.
Message exchanges between two agents are not available to other agents but they are collected by an authority supervising the negotiation arena. The authority is informed on all the exchanged messages and on the resources possessed by all agents but has not access to agents' preference rankings.
The authority may therefore build an argumentation framework representing the evolution of the negotiation process and may help agents to overcome critical situations. 
We suppose that the attack relation in the argumentation framework managed by the authority is defined on the basis of the two following rules:
\begin{itemize}

\item a (possibly reiterated) proposal $P_1$ (received or sent by an agent $A_i$) attacks a (possibly reiterated) proposal $P_2$ (received or sent by the same agent $A_i$) if accepting the exchange proposed in $P_1$ makes impossible the exchange proposed in $P_2$;

\item a withdrawal obviously attacks the withdrawn proposal, a reiterated proposal attacks the corresponding previous withdrawal.
\end{itemize}

Suppose now that the initial situation is the one described in Table \ref{tab:negotiation-start}.

\begin{table}[htb]
  \centering
  \begin{tabular}{| c | c | c | c |}
  \hline
  Agent ID & Owns  & Knows & Preference rank \\
  \hline
  $A_1$    & $R_c$ & $A_2$ owns $R_b$  & $R_a > R_b > R_c$ \\
\hline 
  $A_2$    & $R_b$ & $A_3$ owns $R_a$  & $R_c > R_a > R_b$ \\
\hline 
  $A_3$    & $R_a$ & $A_1$ owns $R_c$  & $R_b > R_c > R_a$ \\
  \hline
  \end{tabular}
  \caption{Initial situation of the negotiation example.}
  \label{tab:negotiation-start}
\end{table}

Then each agent builds an offer as follows:
\begin{itemize}

\item $A_1$ sends an offer to $A_2$ proposing an exchange between $R_c$ and $R_b$: $O_1 = Off(t_0, (A_1,A_2, Exch(R_c,R_b)))$

\item $A_2$ sends an offer to $A_3$ proposing an exchange between $R_b$ and $R_a$: $O_2 = Off(t_0,(A_2,A_3, Exch(R_b,R_a)))$

\item $A_3$ sends an offer to $A_1$ proposing an exchange between $R_a$ and $R_c$: $O_3 = Off(t_0, (A_3,A_1, Exch(R_a,R_c)))$
\end{itemize}

Clearly each offer is incompatible with the two others.

It can be seen that each agent prefers the status resulting from the exchange in the offer s/he has received wrt the one resulting from the offer s/he has made.
For instance, agent $A_1$ prefers exchanging $R_c$ with $R_a$ (as proposed by $A_3$) than exchanging $R_c$ with $R_b$ (as s/he has proposed to $A_2$).
Then, let say at time $t_1$, each agent sends a message of withdrawal of the previous offer: $W_1 = Wd(t_1, (A_1,A_2, Exch(R_c,R_b)))$, $W_2 = Wd(t_1,(A_2,A_3, Exch(R_b,R_a)))$, 
$W_3 = Wd(t_1, (A_3,A_1, Exch(R_a,R_c)))$.

As a consequence of the withdrawal and of its local view, each agent is now, let say at time $t_2$, in a position where the only reasonable move is to reiterate the initial offer (let say that these messages are denoted as $O_4$, $O_5$, $O_6$): clearly this reproduces the initial situation, causes three further withdrawals and the process goes on forever\footnote{Note that a similar situation occurs also if we assume that each agent updates its knowledge on who owns what after the first round of offers. In that case the roles of bidder and addressee would be interchanged (e.g. the exchange of $R_b$ with $R_c$ would be proposed by $A_2$ to $A_1$ and so on), but the non-terminating sequence of offers and withdrawals would occur in the same way.}.

\subsubsection{An example in ambient intelligence}\label{subsubsec:intr-ex-ambient}

Consider a system of ambient intelligence consisting of several independently developed interacting components, some of which join and leave dynamically the system, as described in \cite{BikakisAntoniouKDE2010}.

Adapting an example presented in \cite{BikakisAntoniouKDE2010} suppose that the system includes the following components:
\begin{itemize}
\item a people locator;
\item a video surveillance system for each room;
\item a lighting management system for each room;
\item personal smartphones.
\end{itemize}

The components interact as follows:

\begin{itemize}
\item personal smartphones notify their position to the people locator;
\item the video surveillance system notifies the results about people detection to the people locator;
\item the lighting management system has a light sensor and informs the video surveillance system whether each room is dark or not;
\item the people locator informs the lighting system about people's presence in each room.
\end{itemize}

The people locator uses the following rules:

\begin{itemize}
\item[R1:] if a smartphone is in a room the smartphone owner is in the room
\item[R2:] if the video surveillance notifies the presence of a person in a room then there is a person in the room
\item[R3:] if the video surveillance notifies the absence of any person in a room then there is no person in the room
\item[R4:] if a person is present in a room at time t then the person is present in the room at time (t+1)
\item[R5:] if a person is not present in a room at time t then the person is not present in the room at time (t+1)
\end{itemize}

The video surveillance system uses the following rules: 

\begin{itemize}
\item[R6:] if it is not dark and the image processing system recognizes a person in a room then the video surveillance system notifies the presence of a person in the room 
\item[R7:] if it is not dark and the image processing system does not recognize any person in a room then the video surveillance system notifies the absence of any person in the room 
\end{itemize}

The lighting system uses the following rules:

\begin{itemize}
\item[R8:] if it is dark in a room and a person is in the room switch the room lights on
\item[R9:] if the lights are on in a room and no person is in the room switch the room lights off
\end{itemize}

The people locator uses two default persistence rules (R4 and R5) which can be applied in absence of new information and are the weakest ones: in case of conflicting conclusions those derived using R4 and R5 are overruled by those derived using R1, R2, and R3.
Moreover video surveillance is regarded as providing more reliable information than the mere presence of a smartphone, hence R3 is stronger than R1.

To make the presentation compact, let us omit the details concerning message exchanges among the various components and, as a consequence, combine rules together where possible. The set of rules presented above can be represented by the following logic program\footnote{The program, with the restriction to a finite time horizon, has been run in DLV.} $M$, where $\n$ denotes \emph{negation as failure} and $\truen$ denotes \emph{explicit} negation.

{\small
\noindent
$\begin{array}{l l l}
M: & & \\
in(p, r, t) & \leftarrow phone(x), person(p), owner(x, p), room(r), phonein(x, r, t), & \\
& \n videovalid(r, t) & (r1) \\
in(p, r, t) & \leftarrow room(r), person(p), videorecogn(p, r, t), videovalid(r, t) & (r2)\\
\truen in(p, r, t) & \leftarrow  room(r), person(p), \truen videorecogn(p, r, t), videovalid(r, t) & (r3) \\
in(p, r, s(t)) & \leftarrow  person(p), phone(x), room(r), owner(x, p), in(p, r, t), & \\
& \n videovalid(s(t)), \n phlocated(x, s(t)) & (r4)\\
\truen in(p, r, s(t)) &\leftarrow  person(p), phone(x), room(r), owner(x, p), \truen in(p, r, t), & \\
& \n videovalid(r, s(t)), \n phlocated(x, s(t)) & (r5)\\
lighton(r, s(t)) &\leftarrow  room(r), person(p), in(p, r, t), dark(r, t) & (r6)\\
\truen lighton(r, s(t)) & \leftarrow  room(r),  person(p), \truen in(p, r, t) & (r7)\\
phlocated(x, t) & \leftarrow phone(x), room(r), phonein(x, r, t) & (r8)\\
videovalid(r, t) & \leftarrow \n dark(r, t), room(r) & (r9)\\
\end{array}$
}

Most predicates in $M$ have self-explaining names. We assume that variable $t$ refers to time instants, which are discrete and totally ordered, and that $s(t)$ denotes the successor of instant $t$.
We assume that information about the presence of smartphones, darkness, and the outcome of the video recognition system is available for each room in the form of asserted or explicitly negated facts at each time instant, as provided by the relevant devices. In particular, we assume that the image processing component returns either $videorecogn$ or $\truen videorecogn$ for each triple $(p, r, t)$ and, of course, does not recognize any person when it is dark.
We also assume that all devices are properly working so that, in particular, when light is on in a room $r$ at time $t$ the predicate $dark(r, t)$ is false.

R1 is represented by line $(r1)$ of $M$ with the condition $\n videovalid(r, t)$ to ensure priority to the video surveillance system when its output is valid, namely when it is not dark in the room, as specified by $(r9)$.
Line $(r2)$ synthesises rules R2 and R6, and, similarly, $(r3)$ synthesises R3 and R7. 
The persistence rules R4 and R5 are represented by lines $(r4)$ and $(r5)$ with the conditions $\n videovalid(s(t))$ and $\n phlocated(s(t))$ to ensure that other rules prevail when information from devices is available and valid (for phones $(r8)$ applies, where the predicate $phlocated(x, t)$ means that the location of phone $x$, whatever it is, is known at instant $t$).
Rules R8 and R9, concerning the lighting system are represented respectively by lines $(r6)$ and $(r7)$. 

Suppose now that Brian at time instant $0$ (when outside is dark) exits the office, switches the light off and forgets his smartphone. 

It follows that, applying $(r1)$, $in(\Brian, \office, 0)$ is derived and as a consequence, by $(r6)$, light is switched on at instant $1$.
Then the room is no more dark, $videovalid(\office, 1)$ holds by $(r9)$ and since $\truen videorecogn(\Brian, \office, 1)$ holds, by $(r3)$ $\truen in(\Brian, \office, 1)$ is derived and, applying $(r7)$, the light is switched off.
As a consequence at instant $2$ the room is dark and $videovalid(\office, 2)$ can not be derived. $(r1)$ then applies and $in(\Brian, \office, 2)$ is derived, it follows that, by $(r6)$, light is switched on at instant $3$, and so on.

\subsubsection{Non-cooperative dialogues}

To avoid situations of the kind described in Sections \ref{subsubsec:intr-ex-negot} and \ref{subsubsec:intr-ex-ambient}, most argumentation-related dialogue protocols in the literature (see for instance \cite{Vreeswijk93, Vrees-Prakk:2000, dunnebc:2003, Prakken2005, McBurneyParsons09}) concern the two-party case, which implies in particular that all moves are known to all dialogue participants, and assume that both participants accept some rules (in particular some kind of non repetition constraint) in order to guarantee termination\footnote{In \cite{Vrees-Prakk:2000} it is remarked however that the case of infinite proofs is problematic and is left for future developments.}.
While, as already remarked, this guarantee can not be extended to more general contexts, it can also be observed that works encompassing non-cooperative, and hence potentially infinite, two-party dialogues have been considered in the literature (see for instance \cite{GabbayWoods2001a,GabbayWoods2001c}).
In particular, 
in \cite{GabbayWoods2001a} the authors describe several kinds of non-cooperative dialogue games, such as so-called \emph{stone-walling} tactics. 
An example \cite{GabbayWoods2001a}[p.178] is the following game between two agents, the proponent ($P$) and the opponent ($O$):

\begin{example}
\label{ex-gabbay1}
Example of stone-walling:

\begin{enumerate}[${m}_1$:]
\item \textbf{Assert} $\alpha$ ($P$)
\item \textbf{Reject} $\alpha$ ($O$)
\item \textbf{Assert} $\alpha$ ($P$)
\item \textbf{Assert} $\beta$ ($O$)
\item \textbf{Argue since} $\alpha$, \textbf{either} $\lnot \beta$ \textbf{or} $\beta \nvdash \lnot \alpha$ ($P$)
\item \textbf{Assert} $\lambda$ ($O$)
\item \textbf{Argue since} $\alpha$, \textbf{either} $\lnot \lambda$ \textbf{or} $\lambda \nvdash \lnot \alpha$ ($P$)
\item \ldots	
\end{enumerate}
\end{example}

$P$ makes substantially the same move from $m_5$ onwards: this can be interpreted as being convinced that 
$\alpha$ cannot be false and that no case against $\alpha$ can succeed. This kind of non-cooperation structure is 
called \emph{mind closed} and it is easy to imagine such a dialogue continuing forever if $P$'s mind does not change.

In the previous example, stone-walling is done by one party, $P$. In \cite{GabbayWoods2001a}[p.181], however, 
the authors define a quarrel as a reciprocal stone-walling dialogue.
As the authors remark, the quarrel model shows up frequently in actual dialogical practice, and they suggest that this 
is an efficient way of playing the game of \emph{dialectical fatigue}. Dialectical fatigue settles a 
dispute, and declares a win for the party whose opponent just gives up\footnote{Though not explicitly mentioned, the notion of ``dialectical fatigue'' and its exploitation by self-interested agents underpins the examples discussed
in \cite{Dunne-icail-2005}.}:

\begin{example}
\label{ex-gabbay2}
Example from \cite{GabbayWoods2001a}[p.182]:

\begin{enumerate}[${m}_1$:]
\item \textbf{Assert} $\alpha$ ($P$)
\item \textbf{Reject} $\alpha$ ($O$)
\item \textbf{Assert} $\alpha$ ($P$)
\item \textbf{Reject} $\alpha$ ($O$)
\item \ldots
\item[$m_n$:] \textbf{Assert} $\alpha$ ($P$)
\item[$m_{n+1}$:] \textbf{Perhaps you are right} ($O$)
\end{enumerate}
\end{example}

One may wonder what the outcome of the dialogues in Examples \ref{ex-gabbay1} and \ref{ex-gabbay2} ought to
be from a computational perspective. If the traditional \emph{termination rule} is adopted, 
the outcome is that the proponent wins if the opponent concedes the main claim, and the opponent wins
if the proponent retracts the main claim \cite{Prakken2005}.
However, this rule can not be applied to the non-terminating Example \ref{ex-gabbay1}. Moreover termination in Example \ref{ex-gabbay2} is due to fatigue of one of the players, so that this outcome may be regarded as \virg{non-rational}, while a \virg{rational} development of Example \ref{ex-gabbay2} would be non-terminating too.
One may also observe that the termination rule imposes that one position prevails over the other one, while one might consider also the case where the two positions are regarded as equally acceptable.

Besides issues concerning dialogues, multi-agent systems provide a further case for non terminating argumentation in presence of reasoning about mutual beliefs. In fact an agent $ag1$ able to reason about the beliefs of another agent $ag2$, may take into account also the ability of $ag2$ to reason about the beliefs of $ag1$ in turn, then both $ag1$  and $ag2$ may reason about the mutual beliefs about beliefs an so on \emph{ad libitum}.
This kind of problem is exemplified, in a common sense setting, by the novel by the Argentine writer Osvaldo Soriano \virg{The longest penalty ever} \cite{Soriano93}.
Here, in a football game, a goal keeper has to reason about whether to dive to the left or to the right.
The keeper knows that the kicker in the past has always kicked to the right: this would be a reason to dive to the right, but the keeper also knows the kicker knows that the keeper knows his past records (and might therefore decide to kick to the left) and this would be a reason to dive to the left, but in turn the kicker knows that the keeper knows that the kicker knows $\ldots$, and the chain of mutually attacking arguments supporting the decision of diving to the left or to the right grows to infinity. Of course, in a non cooperative context this growth can not be prevented by mutual agreement and, for either agent, stopping the reasoning at a given level represents an arbitrary, and possibly not appropriate, choice.

\subsection{Reasoning with unbounded domains}

Leaving apart non-terminating situations in multi-agent systems, a further example of infinite argumentation concerns reasoning with unbounded domains like time or space.

For instance, reactive systems are characterized by \virg{their perpetual interaction with their environment as well as their nonterminating behaviour}\cite{Gradeletal2002} and as such require models able to encompass infinite objects like automata over infinite words or infinite games.  
While these models are suitable to analyze properties of these systems in a \emph{monotonic} reasoning context, different issues arise and different formalisms are needed in case some kind of nonmonotonic reasoning is carried out.

An example of use of argumentation in a nonmonotonic open-horizon context is provided by Pollock \cite{Pollock98}, who introduced a \emph{temporal projection principle} to address the problem of argumentation-based reasoning on \virg{stable} properties of the world. Quoting Pollock \virg{the built-in epistemic
arsenal of a rational agent must include reason-schemas of the following sort for at least some
choices of P: if $t_0 < t_1$, believing $P-at-t_0$ is a defeasible reason for the agent to believe $P-at-t_1$}. To allow new information to override presumptions based on out-of-date perceptions it is necessary that \virg{the strength of the presumption that a stable property will continue to hold over time decays as the time interval increases}. 
Though not explicitly addressed by \cite{Pollock98}, it is straightforward to consider a spatial version of this projection principle too.
For instance if one has a reason to believe that a certain site is highly dangerous due to pollution or contamination, then this belief can be reasonably projected to all the neighbour locations with strength decaying as the distance from the contaminated site increases.
The set of arguments (with different strength) that can be produced on the basis of this kind of projection principles is, in general, unbounded. To cope with this, in the OSCAR implementation described in \cite{Pollock98}, Pollock restricts the use of the temporal projection principle to a specific form of backward reasoning: the agent is interested in the value of a property at a specific time instant $t$ and checks whether there are reasons to believe that the property had a certain value at an instant $t_0<t$. If this is the case, the reason can be projected, with decreased strength, from $t_0$ to $t$.
However explicit representations of infinite arguments are needed to go beyond this specific form of reasoning. 

Formulating (defeasible) previsions on the basis of (discrete) series of past observations is a further form of reasoning involving similar issues as the set of possible observations is countably infinite and the set of actual observations may, in general, grow indefinitely.
To give an example, consider previsions concerning sport events (e.g. soccer matches) based on previous performances of the teams with defeasible rules of the kind \virg{A team which has won the majority of past matches will win future matches} and \virg{A team which has lost the last three matches will loose next match} (or more complex ones with similar structure).
Here the observation of the outcomes for a team is constantly updated after each match giving rise to new arguments representing new previsions (possibly conflicting each other and with previous ones). The set of generated \virg{previsional} arguments is (at least in principle) infinite at each step as some of the previsions could be projected over the set of all future matches, e.g. for a very strong team with very long tradition one can, reasonably but defeasibly, foresee further wins for many years to come.
It can be observed that open-ended horizons of this kind can be managed in practice by considering a finite temporal window excluding time instants which are \virg{too far} in the past or in the future.
It can be also observed, however, that if such a temporal window is very large it can be anyway more convenient in practice to adopt compact specification techniques for infinite frameworks of the kind we propose in this paper rather than to deal explicitly with all the elements of finite (but very large) sets and that, in any case, an open-ended representation is more appropriate for reasoning concerning long-term trends and scenarios.

As already mentioned, unbounded time horizons have been considered also in game theory, where infinite games are meant to represent open-ended (e.g. life-long) interactions between the players.
Different kinds of infinite games can be considered.
In iterated variable-sum non-cooperative games, like the iterated prisoner dilemma \cite{AxelrodHamilton81}, each player has a payoff at each turn and seeks strategies maximizing the value of the infinite series of the payoffs.
Quite differently, in Gale-Stewart two-player zero-sum games, one of the two player wins depending on the membership of the infinite sequence of the moves played by both players to a predefined \emph{payoff} set. Here a winning strategy for a player is a function for choosing the next move which ensures the membership (or non-membership) of the resulting infinite sequence to the payoff set.
There is a potential rich interplay between infinite games and infinite argumentation. On one hand, infinite argumentation frameworks can be used as an abstract model for some game-theoretic problems and, especially thanks to their rich endowment of alternative semantics, may suggest variants and open new perspectives for these problems, in the same spirit as done by Dung for the stable marriage problem (see \cite[Sect. 3.2]{dung:1995}).
On the other hand existing results on infinite games may provide a formal basis for the open investigation area on infinite argumentation games and the relevant strategies, building on the standpoint that non-termination does not mean necessarily indeterminacy.

\subsection{Motivation summary}

Summing up, it appears that investigation on infinite argumentation structures got somehow stuck in a sort of deadlock situation.
From a theoretical point of view, their fundamental role has been consistently acknowledged and they have been universally encompassed at a definitional level, but actual formalisms to deal with them at an operational level, i.e. compact representations along with computational procedures, have not been developed, possibly due to a \virg{lack of pressure} from the application side.
In turn, the potential emergence of infinite argumentation structures has been evidenced in a variety of application contexts, but, possibly due to the lack of suitable operational approaches from the theoretical side, they have generally been disregarded as problematic or dealt with by adopting specific workarounds.

The present work contributes to overcome this situation by proposing an approach to compact specification of infinite abstract argumentation frameworks endowed with effective computational procedures.

The approach is suitable, in general, to describe infinite argumentation frameworks with some kind of regular structure.
This covers, in fact, the cases of practical interest, since it corresponds to the generation of arguments (and of the attacks between them) by some systematic non terminating mechanism, as it may occur in a multi-agent system or in other automated reasoning contexts, as described above. As it will be better commented later, the approach is also suitable to manage cases where argument generation terminates but the resulting framework is so large to make a compact representation advantageous.
Our proposal can therefore be regarded as a novel enabling technique with respect to the long-term goal of deploying extended argumentation-based reasoners covering also the case of infinite (or very large) frameworks.
In a shorter-term perspective, the results in this paper provide a formal basis for incorporating the management of infinite frameworks in existing implementations of Dung's style argumentation like ASPARTIX \cite{Eglyetal2010} or Dungine \cite{Southetal08}.

%
%

\section{Background notions}\label{section:sec-background}

\subsection{Argumentation frameworks}
In this section we review the elements of Dung's abstract argumentation frameworks \cite{dung:1995} and the relevant semantics notions and basic computational issues.

\begin{defn}\label{defn:standard-afs}
An \emph{argumentation framework} ({\sc af}) is defined as a pair $\tuple{\XC,\AC}$ in which $\XC$
is a set of \emph{arguments} and $\AC\subseteq\XC\times\XC$ describes the \emph{attack relation} between
arguments in $\XC$, so that $\tuple{x,y}\in\AC$ indicates ``the argument $x$ \emph{attacks} the argument $y$'' (or, equivalently,
``the argument $y$ \emph{is attacked by} the argument $x$'').

For $S\subseteq\XC$ we use the notations $S^{-}$ (resp. $S^{+}$) to indicate
\[
\begin{array}{lcl}
S^{-}&\mbox{ $=$ }&\set{~x\in\XC~:~\exists~y\in S\mbox{ for which }\tuple{x,y}\in\AC}\\
S^{+}&\mbox{ $=$ }&\set{~x\in\XC~:~\exists~y\in S\mbox{ for which }\tuple{y,x}\in\AC}
\end{array}
\]
The arguments in $S^{-}$ (resp. $S^{+}$) are said to attack (resp. be attacked by) $S$.

When for any argument $x \in \XC$, the set $\set{x}^{-}$ of its attackers is finite, the argumentation framework is said to be \emph{finitary}. Formally, an {\sc af}, $\tuple{\XC,\AC}$, is \emph{finitary} iff for each argument $x \in \XC$ $|\set{~y~:~\tuple{y,x}\in\AC}|$ is finite.

A subset $S\subseteq\XC$ is \emph{conflict-free} if no argument in $S$ attacks another argument in $S$, i.e.
$(S\times S)\cap\AC$ is empty. An argument $x$ is said to be \emph{acceptable} with respect to $S\subseteq\XC$ if
for any $y\in\XC$ such that $\tuple{y,x}\in\AC$ there is some $z\in S$ for which $\tuple{z,y}\in\AC$, i.e.
$x$ is acceptable wrt to $S$ if any attacker ($y$) of $x$ is counterattacked by an argument ($z$) of $S$.

The \emph{characteristic function} of an {\sc af} is the mapping $\FC~:~2^{\XC}~\rightarrow~2^{\XC}$ where
\[
\FC(S)~~=~~\set{~x\in\XC~:~\mbox{$x$ is acceptable wrt $S$}}
\]
\end{defn}

Much of the development of {\sc af}s has focused on the study of \emph{argumentation semantics} which can be regarded as refining the informal idea of ``collection of justifiable
arguments in an {\sc af}''. Typically this has been achieved by considering predicates that
such collections must satisfy, i.e. mappings $\sigma~:~2^{\XC}\rightarrow\set{\top,\bot}$ so that
$\EC_{\sigma}(\tuple{\XC,\AC})$ describes the set of subsets of $\XC$ that satisfy the criteria
given by $\sigma$ within the {\sc af} $\tuple{\XC,\AC}$. A review of
the many choices that have been considered for $\sigma$ may be found in Baroni and Giacomin~\cite{bg:2009}. 
\begin{defn}\label{defn:sigma-choice}
Let $\tuple{\XC,\AC}$ be an {\sc af} and $S$ a subset of $\XC$.
\begin{enumerate}
\item[a.]
$S$ is \emph{admissible} (denoted as $S \in \EC_{adm}(\tuple{\XC,\AC})$) if $S$ is conflict-free and every argument in $S$ is acceptable wrt $S$, i.e. $S\subseteq\FC(S)$.
\item[b.]
$S$ is a \emph{complete extension}, (denoted as $S\in\EC_{comp}(\tuple{\XC,\AC})$) if $S$ is conflict-free and $x\in S$ if and only if $x$ is acceptable wrt $S$, i.e. $S=\FC(S)$.
\item[c.]
$S$ is a \emph{preferred extension} (denoted as $S \in \EC_{pr}(\tuple{\XC,\AC})$) if $S$ is a \emph{maximal} (wrt $\subseteq$) admissible set. 

\item[d.]
$S$ is a \emph{stable extension} (denoted as $S \in \EC_{stab}(\tuple{\XC,\AC})$) if $S$ is conflict-free and for any $y\not\in S$, there is some
$x\in S$ that attacks $y$, i.e. $S^{+}~=~\XC\setminus S$.

\item[e.]
$S$ is the \emph{grounded extension} of $\tuple{\XC,\AC}$ (denoted as $S \in \EC_{gr}(\tuple{\XC,\AC})$) if it is the (unique) least fixed point of $\FC$, i.e. $S=\FC(S)$ and there is no $S' \subsetneq S$ such that $S'=\FC(S')$. 
\end{enumerate}

\end{defn}

The existence and uniqueness of the grounded extension is
established in \cite{dung:1995} for all {\sc af}s.

Finally, we recall the various ways in which a given argument may relate to these sets in an {\sc af} $\tuple{\XC,\AC}$.
\begin{defn}\label{defn:acc-status}
Let $x\in\XC$ and $\sigma~:~2^{\XC}\rightarrow\set{\top,\bot}$. The argument $x$ is \emph{credulously accepted}
wrt $\sigma$ if there is some $S$ in $\EC_{\sigma}(\tuple{\XC,\AC})$ such that $x\in S$.
It is said to be \emph{sceptically accepted} wrt $\sigma$ if  every $S$ in $\EC_{\sigma}(\tuple{\XC,\AC})$ satisfies $x\in S$.
\end{defn}
The concepts of credulous and sceptical acceptance, together with the various semantics that have been
put forward, naturally motivate a number of computational problems involving {\sc af}s.
\begin{defn}\label{defn:comp-probs}
Let $\sigma~:~2^{\XC}\rightarrow\set{\top,\bot}$.
\begin{enumerate}
\item[a.]
$\mbox{{\sc ca}}_{\sigma}$ is the decision problem whose instances, $\tuple{\tuple{\XC,\AC},x}$, are accepted
if and only if $x$ is credulously accepted wrt $\sigma$ in $\tuple{\XC,\AC}$.
\item[b.]
$\mbox{{\sc sa}}_{\sigma}$ is the decision problem whose instances, $\tuple{\tuple{\XC,\AC},x}$, are accepted
if and only if $x$ is sceptically accepted wrt $\sigma$ in $\tuple{\XC,\AC}$.
\item[c.]
$\mbox{{\sc ver}}_{\sigma}$ is the decision problem whose instances, $\tuple{\tuple{\XC,\AC},S}$, are
accepted if and only if $S\in\EC_{\sigma}(\tuple{\XC,\AC})$.
\item[d.]
$\mbox{{\sc exist}}_{\sigma}$ is the decision problem whose instances, $\tuple{\XC,\AC}$, are
accepted if and only if $\EC_{\sigma}(\tuple{\XC,\AC})\not=\emptyset$.
\item[e.]
$\mbox{{\sc non-empty}}_{\sigma}$ is the decision problem whose instances, $\tuple{\XC,\AC}$, are
accepted if and only if $\EC_{\sigma}(\tuple{\XC,\AC})\not\in\set{\emptyset,\set{\emptyset}}$.
\end{enumerate}
\end{defn}
As well as the \emph{decision problems} described in Defn.~\ref{defn:comp-probs} there are a range
of \emph{function} (or \emph{construction}) problems. We focus on that
of given $\tuple{\XC,\AC}$ (for which $\EC_{\sigma}(\tuple{\XC,\AC})\not=\emptyset$) identifying all sets
$S\subseteq\XC$ for which $S\in\EC_{\sigma}(\tuple{\XC,\AC})$, denoting this problem $\mbox{{\sc cons}}_{\sigma}$.

For each of the semantics $\sigma$ presented in Defn.~\ref{defn:sigma-choice}, these computational problems
have been studied in depth (within \emph{finite} {\sc af}s)
and their general properties are now well understood. We summarise these results in Fact \ref{fact:alg-complexity} and Table \ref{tab:alg-complexity}.

\begin{fact}\label{fact:alg-complexity}
$\ \ \ $
\begin{enumerate}
\item[a.]
The function $\mbox{{\sc cons}}_{gr}$ is polynomial time computable, hence
all of the cases (a) though (e) of Defn.~\ref{defn:comp-probs} are polynomial time decidable
for the grounded extension~\cite{dung:1995}.
\item[b.]
For $\sigma\in\set{\mbox{\textit{adm, pr, comp, gr}}}$, $\mbox{{\sc exist}}_{\sigma}$ is trivial (i.e. it is always verified as a consequence of well-known results).
\item[c.]
$\mbox{{\sc exist}}_{stab}$ is {\sc np}--complete \cite{DimopTorres:1996} (see also \cite{Fraenkel:1981}).
\item[d.]
For $\sigma\in\set{\mbox{\textit{adm, stab, comp}}}$, $\mbox{{\sc ver}}_{\sigma}$ is decidable in polynomial-time,
however $\mbox{{\sc ver}}_{pr}$ is co{\sc np}--complete~\cite{DimopTorres:1996}.
\item[e.]
For $\sigma\in\set{\mbox{\textit{adm, pr, stab, comp}}}$, $\mbox{{\sc ca}}_{\sigma}$ is {\sc np}--complete~\cite{DimopTorres:1996}.
\item[f.]
$\mbox{{\sc sa}}_{\sigma}$ is trivial for $\sigma=\mbox{\textit{adm}}$, polynomial for $\sigma=\mbox{\textit{comp}}$, co{\sc np}--complete 
for $\sigma=stab$\footnote{Note, however, the \emph{caveat} raised in \cite{dw:2009}.}, and 
$\Pi_{2}^{p}$--complete for $\sigma=pr$~\cite{dunnebc:2002}.
\item[g.]
$\mbox{{\sc non-empty}}_{\sigma}$ is {\sc np}--complete for $\sigma\in\set{\mbox{\textit{adm,~pr, comp, stab}}}$~\cite{DimopTorres:1996}.
\end{enumerate}
\end{fact}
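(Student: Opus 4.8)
This concluding statement is a compendium of established complexity results for \emph{finite} frameworks, each attached to a citation, so my plan is to recall the arguments that underpin the individual claims rather than to derive anything new, grouping them by the proof technique they rely on. For the grounded semantics (part a) I would exploit that $\FC$ is monotone with respect to $\subseteq$ and that, for finite $\XC$, the grounded extension is the limit of the chain $\emptyset \subseteq \FC(\emptyset) \subseteq \FC(\FC(\emptyset)) \subseteq \cdots$, which stabilises after at most $\card{\XC}$ iterations. Each application of $\FC$ is polynomial (testing acceptability of an argument amounts to inspecting its attackers and their counterattackers), so $\mbox{{\sc cons}}_{gr}$ is polynomial; since the grounded extension is unique, credulous and sceptical acceptance both coincide with membership in it, and the remaining decision problems reduce to one fixed-point computation followed by a trivial check. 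Part b is even more immediate: $\emptyset$ is vacuously admissible, every finite {\sc af} has a grounded (hence complete) extension, and a standard maximality argument yields a preferred extension, so $\mbox{{\sc exist}}_{\sigma}$ is always answered affirmatively for these $\sigma$.

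For the membership side of the NP/coNP results (parts c--e, g) I would use guess-and-verify. The verification problems for $adm$, $stab$ and $comp$ (part d) are polynomial because conflict-freeness, the acceptability condition $S \subseteq \FC(S)$, the stability condition $S^{+} = \XC \setminus S$, and the fixed-point condition $S = \FC(S)$ are all checkable in polynomial time; hence $\mbox{{\sc exist}}_{stab}$, $\mbox{{\sc ca}}_{\sigma}$ and $\mbox{{\sc non-empty}}_{\sigma}$ lie in NP by guessing a witness extension and verifying it. A subtlety worth isolating for the preferred semantics is that credulous acceptance and non-emptiness need not guess a \emph{preferred} set: since every admissible set extends to some preferred extension, a polynomially-checkable admissible witness already suffices, keeping these problems inside NP rather than $\Sigma_2^p$. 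For the hardness half I would encode propositional satisfiability into an {\sc af} in the standard fashion (mutually attacking arguments for each literal and its negation, together with clause arguments) so that stable/admissible/preferred extensions correspond to satisfying assignments; the NP-completeness of stable existence alternatively follows from NP-completeness of kernel existence in digraphs. The coNP-completeness of $\mbox{{\sc ver}}_{pr}$ is the first genuinely harder point: verifying that $S$ is preferred requires confirming admissibility (polynomial) and then \emph{maximality}, i.e. that no admissible proper superset exists, whose complement is an NP question, with coNP-hardness obtained by dualising the encoding.

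The main obstacle, and the technically heaviest part, is the placement of sceptical acceptance (part f) in the polynomial hierarchy. For $\sigma = adm$ the claim is trivial because $\emptyset$ is always admissible, so no argument is sceptically accepted; for $\sigma = comp$ one observes that sceptical acceptance coincides with membership in the grounded extension (the least complete extension, equal to the intersection of all complete extensions), reducing to part a; the coNP-completeness for $\sigma = stab$ mirrors the verification/hardness pattern above. The hard case is $\sigma = pr$, where $\mbox{{\sc sa}}_{pr}$ is $\Pi_2^p$-complete: membership holds because the complement --- existence of a preferred extension \emph{omitting} $x$ --- is the $\exists\forall$ statement ``there is an admissible $S$ with $x \notin S$ such that no admissible $T \supsetneq S$ exists'', placing it in $\Sigma_2^p$; $\Pi_2^p$-hardness requires a reduction from the validity of $\forall\exists$-quantified Boolean formulas, with an {\sc af} construction in which the universally quantified variables drive the maximal admissible sets and the existentially quantified ones encode the inner satisfiability, so that $x$ lies in every preferred extension exactly when the quantified formula is valid. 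Assembling this reduction correctly, and checking that the preferred extensions faithfully capture the quantifier alternation, is where the real work lies; every remaining claim in the statement follows from the elementary observations above.
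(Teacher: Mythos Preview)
Your proposal is correct and captures the standard arguments from the cited literature. Note, however, that the paper does not actually prove this statement: it is presented as a \emph{Fact} collecting known results, each supported by a citation (Dung~1995, Dimopoulos--Torres~1996, Fraenkel~1981, Dunne--Bench-Capon~2002), with no argument given in the text or the appendix. Your sketch is therefore not to be compared against a proof in the paper but rather against the original sources, and on that score it is an accurate high-level summary of the techniques used there: the fixed-point iteration for the grounded case, guess-and-verify for the NP upper bounds, the observation that an admissible witness suffices for credulous acceptance under preferred semantics, the SAT/kernel encodings for hardness, and the QBF reduction for the $\Pi_2^p$-hardness of $\mbox{{\sc sa}}_{pr}$.
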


\begin{table}[htb]
  \centering
  \begin{tabular}{| l | c | c | c | c | c |}
  \hline
                                    &   $\sigma=$\textit{adm} & $\sigma=$\textit{pr} & $\sigma=$\textit{comp} & $\sigma=$\textit{stab} & $\sigma=$\textit{gr} \\
  \hline
  $\mbox{\sc exist}_{\sigma}$         &  trivial        & trivial      & trivial        & {\sc np}--complete  & trivial     \\
  \hline
  $\mbox{\sc ver}_{\sigma}$           &  polynomial     & co{\sc np}--complete & polynomial & polynomial      & polynomial      \\
  \hline
  $\mbox{\sc ca}_{\sigma}$            & {\sc np}--complete & {\sc np}--complete & {\sc np}--complete & {\sc np}--complete & polynomial      \\
  \hline
  $\mbox{\sc sa}_{\sigma}$            & trivial      & $\Pi_{2}^{p}$  & polynomial        & co{\sc np}--complete & polynomial         \\
  \hline
  $\mbox{\sc non-empty}_{\sigma}$     & {\sc np}--complete & {\sc np}--complete & {\sc np}--complete & {\sc np}--complete & polynomial      \\
  \hline
  \end{tabular}
  \caption{Computational problems in finite {\sc af}s.}
  \label{tab:alg-complexity}
\end{table}

We emphasise that the classifications in Fact~\ref{fact:alg-complexity} are with respect
to finite {\sc af}s. For a more detailed summary of complexity and algorithms within
{\sc af} semantics we refer the reader to the overview of Dunne and Wooldridge~\cite{dw:2009}; complexity-theoretic
treatments of both novel semantics and developments of Dung's original proposals may be found in, among others,
\cite{Baroni-et-al:2011,dw:2010,Dunne:2009b,DHMcBPW:2011,ddw:2011}.

\subsection{Formal languages}\label{subsection:back-flt}

As to the required background on formal languages, which will be heavily used in the paper, we assume that the reader is familiar with the standard concepts and basic results in the field (to make the paper self-contained the necessary ones are provided in \ref{subsection:flt}). We recall only the basic definitions in this section, in order to introduce the reader to the notation used in the sequel of the paper.

\begin{defn}\label{defn:alpha-word}
An \emph{alphabet} is a \emph{finite} set of symbols. For an arbitrary alphabet, the notation
$\Sigma~=~\set{\sigma_1,\sigma_2,\ldots,\sigma_k}$ will be used. A \emph{word}, $w$,
over an alphabet $\Sigma$ is a \emph{finite sequence}, $w=w_{i_1}w_{i_2}\cdots w_{i_r}$ of symbols from $\Sigma$. The
set of all possible words is denoted as $\Sigma^{*}$. The \emph{length}, $|w|$, of $w\in\Sigma^{*}$
is the total number of symbols occuring in its definition. The word of length $0$ in $\Sigma^{*}$
is called the \emph{empty word} and is denoted as $\varepsilon$.

For $u=u_{i_1}\cdots u_{i_r}$ and $v=v_{j_1}\cdots v_{j_s}$ words in $\Sigma^{*}$ the word
$w\in\Sigma^{*}$ formed by \emph{concatenating} $u$ with $v$ (denoted $u\cdot v$) is
the word $u_{i_1}\ldots u_{i_r}~v_{j_1}\ldots v_{j_s}$ whose length is $|u|+|v|=r+s$. For any $u\in\Sigma^{*}$,
$u\cdot\varepsilon=\varepsilon\cdot u=u$, i.e. $\varepsilon$ is an identity element in $\Sigma^{*}$
with respect to the operation $\cdot$ of concatenation.
\end{defn}
\begin{defn}\label{defn:languages-ops}
A \emph{language}, $L$, over an alphabet $\Sigma$, is a subset of $\Sigma^{*}$. For languages $L_1$ and $L_2$
we define languages $L_1\cup L_2$, $L_1\cap L_2$, and $L_1\setminus L_2$ in the obvious way so that the operations $\set{\cup,\cap,\setminus}$ are the standard set-theoretic ones.

In addition, specific to languages,
\[
\begin{array}{lcl}
L_1\cdot L_2&\mbox{ $=$ }&\set{u\cdot v~:~u\in L_1,~v\in L_2}\\
\overline{L}&\mbox{ $=$ }&\set{u~:~u\in\Sigma^{*},~u\not\in L}\\
L^{*}&\mbox{ $=$ }&\cup_{k=0}^{\infty}~\set{~w~:~w=u_1\cdot u_2\cdots u_k,~u_i\in L}\\
L_1/L_2&\mbox{ $=$ }&\set{~u~:~\exists~v\in L_2\mbox{ s.t. }u\cdot v\in L_1}\\
rev(L)&\mbox{ $=$ }&\set{~\sigma_1\sigma_2\cdots\sigma_{m-1}\sigma_m~:~\sigma_m\sigma_{m-1}\cdots\sigma_2\sigma_1\in L}
\end{array}
\]
\end{defn}

The language $L^{*}$ is sometimes referred to as the \emph{Kleene closure} (or $*$-closure) of $L$, while
$L_1/L_2$ is called the \emph{quotient} of $L_1$ wrt $L_2$.\footnote{Some authors distinguish
so-called \emph{left} and \emph{right} quotients of $L_1$ wrt $L_2$, the latter being
$L_1/L_2$ (as given in the definition), the former $\set{~u~:~\exists v\in L_1\mbox{ s.t. }v\cdot u\in L_2}$. We use
only the notion of (right) quotient.}

\begin{defn}\label{defn:rl}
A language $L\subseteq\Sigma^{*}$ is a \emph{regular language} if $L$ satisfies any of the following requirements:
\begin{itemize}
\item[R1.]
$L=\emptyset$ or $L=\set{\varepsilon}$ or $L=\set{\sigma}$ for any $\sigma\in\Sigma$.
\item[R2.]
$L~=~L_1\cup L_2$ where $L_1$ and $L_2$ are regular languages.
\item[R3.]
$L~=~L_1\cdot L_2$ where $L_1$ and $L_2$ are regular languages.
\item[R4.]
$L=(L_1)^{*}$ where $L_1$ is a regular language.
\end{itemize}
\end{defn}

\begin{defn}\label{defn:fg}
A \emph{formal grammar} is defined via a 4-tuple, $\tuple{\Sigma,V,P,S}$ where $\Sigma=\set{\sigma_1,\sigma_2,\ldots,\sigma_n}$ is
a \emph{finite alphabet} of \emph{terminal symbols}; $V=\set{V_1,\ldots,V_m}$ a finite set of \emph{variable symbols}, $P$ is a finite
set of \emph{production rules}, $\set{p_1,p_2,\ldots,p_r}$ of the form $p_i~:~\alpha_i\rightarrow\beta_i$ where
$\alpha_i\in(V\cup\Sigma)^{*}\setminus\Sigma^{*}$ and
$\beta_i\in(V\cup \Sigma)^{*}$ and $S\in V$ is the \emph{start symbol}.
The language generated (see \ref{subsection:flt})) by a grammar $G$ is denoted as $L(G)$.
\end{defn}

\begin{defn}\label{defn:dfa}
A \emph{deterministic finite automaton} ({\sc dfa}) is defined via a $5$-tuple, $M=\tuple{\Sigma,Q,q_0,F,\delta}$ where
$\Sigma=\set{\sigma_1,\ldots,\sigma_k}$ is a finite set of input symbols, $Q=\set{q_0,q_1,\ldots,q_m}$ a finite set of \emph{states};
$q_0\in Q$ the \emph{initial} state; $F\subseteq Q$ the set of \emph{accepting} states; and
$\delta~:~Q\times\Sigma~\rightarrow~Q$ the \emph{state transition} function.
A word $w=w_n w_{n-1}\ldots w_1\in \Sigma^{*}$ is accepted by the {\sc dfa} $\tuple{\Sigma,Q,q_0,F,\delta}$ if
the sequence of states $q_{i_1}q_{i_2}\ldots q_{i_n}$ consistent with the
state transition function $\delta$ which processes every symbol in $w$, i.e. satisfying
$q_{i_1}=\delta(q_0,w_1)$ and $q_{i_j}=\delta(q_{i_{j-1}},w_j)$ for each $2\leq j\leq n$, has $q_{i_n}\in F$.
For a {\sc dfa}, $M=\tuple{\Sigma,Q,q_0,F,\delta}$, $L(M)$ is the subset of $\Sigma^{*}$ accepted by $M$.
\end{defn}


%
%
\section{Formalism requirements and weaknesses of naive representations}\label{section:naive-rep}

Given the goal of investigating novel approaches to deal with argumentation frameworks with a countably infinite set of arguments, we need to establish some basic criteria to evaluate the approaches themselves.

A first basic criterion is \emph{expressiveness}, namely the ability to encompass the description of a sufficiently large variety of infinite argumentation frameworks so as to cover those cases which are meaningful from a theoretical or practical perspective. These include in particular the cases of infinite argumentation frameworks already considered in the literature.

A second criterion is \emph{tractability}: the use of the formalism should not raise intractable computational problems making it impractical.
In particular, we have to notice the arousal of a problem not occurring in the finite case: given that an argumentation framework $\tuple{\XC,\AC}$ involving infinite sets can only be given through a finite encoding $\eta(\tuple{\XC,\AC})$, it must be validated that an encoding $\eta(\tuple{\XC,\AC})$ is indeed a valid description of \emph{some} {\sc af}.

Further, computational requirements related to the basic decision problems listed in Definition \ref{defn:comp-probs} have to be taken into account.
A third criterion is therefore \emph{closure} wrt set-theoretical operations, as they are involved in the definition and/or characterization of the fundamental properties in argumentation semantics and hence in the relevant decision procedures. 
To exemplify, testing whether a set of arguments is conflict-free corresponds to test whether the intersection between this set and the set of its attackers is empty.
Hence, given the specification of two infinite sets of arguments in a formalism, the specification of their intersection should be captured (and, hopefully, be easily constructable) within the same formalism.

In the view of satisfying the above requirements, a standard approach to the problem of representing an infinite collection of objects via a finite specification is to exploit formal grammars and their associated machine models\footnote{Considering other possible choices of formal tools for the specification of infinite structures is beyond the scope of the present paper and is left for future work.}.

In this context, we now consider and criticize a rather straightforward approach one might adopt in describing $\tuple{\XC,\AC}$ where
the supporting set of arguments is an infinite, but enumerable, set.
The idea, introduced in Definition \ref{defn:naive} consists in describing the attack relation with a language referring to indexes in the argument enumeration.

\begin{defn}\label{defn:naive}
Let $\XC=\set{x_1,x_2,\ldots,x_n,\ldots}$ be a countably infinite set of atomic arguments. A subset $\AC\subseteq\XC\times\XC$
is \emph{naively encoded} if described as the language $L_{\AC}$ over the two symbol alphabet $\set{0,1}$ for which
\[
L_{\AC}~~=~~\set{0^i\cdot1\cdot0^j~:~\tuple{x_i,x_j}\in\AC}
\]
\end{defn}
Thus the naive encoding of a set of attacks uses a unary\footnote{We could, of course, use an
arbitrary number base, however, to do so adds nothing in the way of
expressive power and can, in fact, reduce this considerably.} form to describe the (indices) of the source and destination
arguments involved in the attack with the symbol $1$ used to separate these two components (in the absence of any attack $L_{\AC}~=~\emptyset$).

For naive encodings one can consider formal grammars and their associated languages as a means of presenting a finite
specification of $\tuple{\XC,\AC}$, i.e. as a grammar $G$ over alphabet $\set{0,1}$ for which $L(G)=L_{\AC}$.
We show (proofs are given in \ref{proofs-naive}) that different choices for the family of grammars $G$ belongs to lead invariably to the violation of (at least) one of the three criteria above, making this approach unsuitable in spite of its apparent simplicity.

We start with an unsurprising property of naive encodings.
\begin{restatable}{propn}{noformalgrammar}
\label{prop:unrecognisable}
For $\XC$ as introduced in Definition \ref{defn:naive}, there are choices of $\AC\subseteq\XC\times\XC$ such that there is no formal grammar, $G$
with $L(G)=L_{\AC}\subseteq\set{0^i\cdot1\cdot0^j~:~i,~j\geq1}$.
\end{restatable}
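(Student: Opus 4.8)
The plan is to prove this by a straightforward counting (cardinality) argument. The key observation is that formal grammars, as defined in Definition \ref{defn:fg}, are specified by finite 4-tuples $\tuple{\Sigma, V, P, S}$ over a fixed finite terminal alphabet (here $\Sigma = \set{0,1}$). Since each grammar is a finite object drawn from a countable universe of possible finite specifications, the collection of \emph{all} formal grammars over $\set{0,1}$ is countable. Consequently, the collection of languages $L(G)$ that are \emph{generated} by some formal grammar is also countable. On the other hand, I would show that the collection of candidate languages $L_{\AC}$ arising from arbitrary choices of $\AC \subseteq \XC \times \XC$ is uncountable, and hence strictly larger; therefore some $L_{\AC}$ cannot equal $L(G)$ for any grammar $G$.

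First I would make the ``grammars are countable'' step precise. A grammar is determined by a finite variable set $V$, a finite production set $P$, a start symbol, over the fixed finite $\Sigma$. Each production is a pair of finite strings over the finite alphabet $V \cup \Sigma$, so there are only countably many possible productions and hence only countably many finite sets of productions; similarly there are only countably many finite variable sets. A finite product of countable sets is countable, so the set of all grammars is countable, and thus $\set{L(G) : G \text{ a formal grammar over } \set{0,1}}$ is a countable family of languages.

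Next I would establish that the family $\set{L_{\AC} : \AC \subseteq \XC \times \XC}$ is uncountable, by exhibiting an injection from an uncountable set into it. The cleanest route is to note that the map $\AC \mapsto L_{\AC}$ is injective: from $L_{\AC}$ one recovers $\AC$ exactly, since $0^i \cdot 1 \cdot 0^j \in L_{\AC}$ if and only if $\tuple{x_i, x_j} \in \AC$, so distinct attack relations yield distinct languages. Since $\XC \times \XC$ is countably infinite, its power set $2^{\XC \times \XC}$ has the cardinality of the continuum and is uncountable; composing with the injection shows $\set{L_{\AC} : \AC \subseteq \XC \times \XC}$ is uncountable as well. (Even restricting to the diagonal attacks $\tuple{x_i, x_i}$ already gives an uncountable subfamily indexed by subsets of $\Nat$, which suffices.)

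The conclusion then follows immediately by comparing cardinalities: an uncountable family of languages $\set{L_{\AC}}$ cannot be contained in the countable family $\set{L(G)}$ of grammar-generated languages, so there must exist at least one choice of $\AC \subseteq \XC \times \XC$ for which no grammar $G$ satisfies $L(G) = L_{\AC}$, and by construction $L_{\AC} \subseteq \set{0^i \cdot 1 \cdot 0^j : i, j \geq 1}$. I expect the only genuinely delicate point to be arguing carefully that the set of grammars is countable — specifically, handling the fact that $V$, $P$, and the individual production strings all range over arbitrary finite sizes — but this is a standard ``finite objects over a finite alphabet form a countable set'' argument, so there is no real obstacle; the heart of the proof is simply that $2^{\aleph_0}$ strictly exceeds $\aleph_0$.
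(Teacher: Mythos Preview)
Your cardinality argument is correct and complete: the set of grammars over $\set{0,1}$ is countable, the map $\AC\mapsto L_{\AC}$ is injective, and $2^{\XC\times\XC}$ is uncountable, so some $L_{\AC}$ is not of the form $L(G)$.

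However, your route is genuinely different from the paper's. The paper gives an \emph{explicit} construction: it fixes an effective enumeration $M_1,M_2,\ldots$ of Turing machines and sets
\[
L_{\AC}~=~\set{\,0\cdot 1\cdot 0^{k}\;:\;M_k\text{ does not halt on input }\varepsilon\,},
\]
so that a grammar for $L_{\AC}$ would yield a grammar for the non-r.e.\ language $\textsc{non-halt-empty}$. Your argument is more elementary (it needs only Cantor, no computability theory) and in fact proves more, namely that uncountably many such $\AC$ exist. What the paper's approach buys is a concrete witness: the text explicitly refers back to this proof to argue that the unrepresentable frameworks are computationally pathological and ``unlikely to feature in applications'', a point your non-constructive argument cannot supply. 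Both proofs are valid for the proposition as stated.
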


The issue of infinite attack structures which cannot be described within naive encodings may, justifiably,
be seen as a purely technical limitation as far as the families of {\sc af}s so affected are unlikely to feature in applications (see the proof of Proposition \ref{prop:unrecognisable}).
It turns out however that, for unrestricted grammars, the criterion of tractability is not satisfied since the problem of determining if a naive encoding does indeed describe \emph{some} {\sc af} is not semi-decidable.

\begin{restatable}{propn}{unrestrictednondecidable}
\label{propn:valid-tm}
Given an arbitrary (i.e. unrestricted) grammar $G$ over the alphabet $\set{0,1}$ the problem of determining
if $L(G)\subseteq\set{0^i\cdot1\cdot0^j~:~i,~j\geq 1}$ is not semi-decidable, i.e. there is no TM program
which given (a description of) $G$ as input halts and accepts precisely those $G$ for which 
$L(G)\subseteq\set{0^i\cdot1\cdot0^j~:~i,~j\geq 1}$.
\end{restatable}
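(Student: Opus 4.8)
The plan is to establish non-semi-decidability by a many-one reduction from a problem already known to fail semi-decidability, namely the \emph{emptiness} problem for unrestricted grammars, $\mathrm{EMPTY} = \set{G : L(G) = \emptyset}$. The key starting fact is that $\mathrm{EMPTY}$ is not semi-decidable: its complement $\mathrm{NONEMPTY} = \set{G : L(G) \neq \emptyset}$ is recursively enumerable, since a TM can dovetail over all derivations of $G$ and accept as soon as one of them yields a terminal word; on the other hand emptiness is a nontrivial property of the (recursively enumerable) language $L(G)$ and is therefore undecidable by Rice's theorem, so if $\mathrm{EMPTY}$ were semi-decidable as well it would be decidable, a contradiction. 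It is essential to reduce \emph{from} a non-semi-decidable problem \emph{to} our containment problem, and not the other way round: a reduction in the opposite direction would only show that containment is co-semi-decidable, which is not what is claimed.

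Write $R = \set{0^i \cdot 1 \cdot 0^j : i,j \geq 1}$ for the fixed regular target language, and observe that every word of $R$ contains \emph{exactly one} occurrence of the symbol $1$; I will exploit this by forcing candidate words to carry two copies of $1$. Given an arbitrary grammar $G = \tuple{\set{0,1}, V, P, S}$, define $G^{*}$ by adjoining a fresh start symbol $S^{*}$ together with the single production $S^{*} \to 1\,1\,S$, while keeping every production of $P$. Since $S^{*}$ occurs on the left-hand side of only this production, and $S$ generates precisely $L(G)$, one checks at once that $L(G^{*}) = \set{1\,1\,w : w \in L(G)} = \set{11} \cdot L(G)$; moreover $G^{*}$ is a legitimate grammar (the new rule has a variable left-hand side) and the map $G \mapsto G^{*}$ is plainly computable.

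The reduction then follows from a two-case analysis. If $L(G) = \emptyset$ then $L(G^{*}) = \emptyset \subseteq R$. If $L(G) \neq \emptyset$ then $L(G^{*})$ is nonempty and each of its words begins with $11$, hence contains at least two occurrences of $1$; as no word of $R$ has more than one $1$, we get $L(G^{*}) \cap R = \emptyset$ and therefore $L(G^{*}) \not\subseteq R$. Consequently $L(G) = \emptyset$ if and only if $L(G^{*}) \subseteq R$, so $G \mapsto G^{*}$ is a many-one reduction from $\mathrm{EMPTY}$ to the containment problem in the statement. Since the semi-decidable sets are closed downward under many-one reductions and $\mathrm{EMPTY}$ is not semi-decidable, the containment problem is not semi-decidable either.

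I expect the only delicate point to be the justification of the starting fact that $\mathrm{EMPTY}$ (equivalently, emptiness for Turing-recognisable languages) is not semi-decidable; the grammar transformation itself is elementary. The real design choice is using the prefix $11$ to push every witness word outside $R$, rather than attempting to \emph{erase} a derived word of $G$ and replace it by a fixed out-of-$R$ string, which would force a far more cumbersome sweeping construction while yielding no logical advantage.
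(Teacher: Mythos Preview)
Your proof is correct. The reduction from $\mathrm{EMPTY}$ via the prefix $11$ works exactly as you describe, and the justification that $\mathrm{EMPTY}$ is not semi-decidable (its complement is r.e., while it is undecidable by Rice) is sound.

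The paper takes a different, much terser route for this proposition: it simply invokes Rice's Theorem for r.e.\ Index Sets, which gives necessary and sufficient conditions for a property of r.e.\ languages to be semi-decidable, and observes that the property ``$L \subseteq R$'' fails those conditions. Your approach is more elementary and self-contained --- it relies only on the standard Rice theorem plus an explicit many-one reduction --- whereas the paper's is a one-line citation to a more specialised meta-theorem. Interestingly, your construction is essentially the same device the paper uses in its \emph{next} proposition (the context-sensitive case), where Rice-for-index-sets is unavailable and an explicit reduction from emptiness is made via a fresh start rule $S' \to 1\cdot S$. Your choice of the prefix $11$ rather than $1$ is a harmless variant: both force every nonempty output outside $R$, yours by symbol count, the paper's by the leading symbol.
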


In fact, results analogous to Proposition~\ref{propn:valid-tm} continue to hold even if we use the less expressive class of context-sensitive grammars. 
\begin{restatable}{propn}{contextsensitivenonsemidecidable}
\label{propn:valid-csl}
Given an arbitrary \emph{context-sensitive} grammar, $G$, over the alphabet $\set{0,1}$ the problem of determining
if $L(G)\subseteq\set{0^i\cdot1\cdot0^j~:~i,~j\geq 1}$ is not semi-decidable.
\end{restatable}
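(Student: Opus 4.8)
The plan is to give a computable many-one reduction from the \emph{emptiness} problem for context-sensitive grammars over $\set{0,1}$ to the problem in the statement, writing $T~=~\set{0^i\cdot1\cdot0^j~:~i,~j\geq 1}$ for the target set. The first thing I would establish is that emptiness for context-sensitive grammars is itself not semi-decidable. Its complement, non-emptiness, \emph{is} semi-decidable: since membership in a context-sensitive language is decidable, one can enumerate the words of $\set{0,1}^{*}$ and test each in turn for membership in $L(G)$, halting and accepting as soon as some word is found to lie in $L(G)$. Hence non-emptiness is recursively enumerable and emptiness is therefore co-r.e. On the other hand, emptiness for context-sensitive grammars is undecidable, by the classical argument that the valid accepting computation histories of a Turing machine $M$ on an input $x$ form a context-sensitive language which is non-empty precisely when $M$ accepts $x$. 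A set that is co-r.e. but undecidable cannot be r.e.; consequently emptiness is not semi-decidable.

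The second step is the reduction itself. Given a context-sensitive grammar $G$ over $\set{0,1}$ I would effectively construct a context-sensitive grammar $G'$ with $L(G')~=~\set{1}\cdot L(G)$; this is possible because the context-sensitive languages are effectively closed under concatenation with a one-word language, the effect here being simply to prepend a single terminal $1$ in front of every word of $L(G)$. I would then verify the equivalence $L(G)=\emptyset \iff L(G')\subseteq T$. If $L(G)=\emptyset$ then $L(G')=\emptyset$, which is vacuously contained in $T$. If instead there is some $w\in L(G)$, then $1\cdot w\in L(G')$; since every word of $T$ begins with $0$ (because $i\geq 1$), the word $1\cdot w$ begins with $1$ and hence does not belong to $T$, so that $L(G')\not\subseteq T$.

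Putting the two steps together, $G\mapsto G'$ is a computable many-one reduction of the (non-semi-decidable) emptiness problem to the problem of deciding $L(G')\subseteq T$, with both problems restricted to context-sensitive grammars over $\set{0,1}$. Since a many-one reduction cannot take a non-r.e. set to an r.e. one, and emptiness is not r.e., the problem of deciding $L(G)\subseteq T$ is not r.e., i.e. not semi-decidable. The part I expect to be the crux, and which makes this an adaptation of Proposition~\ref{propn:valid-tm} rather than a mere reuse of it, is keeping the whole argument inside the context-sensitive class: establishing that emptiness for context-sensitive grammars is genuinely not semi-decidable rests essentially on the decidability of context-sensitive membership (to obtain the co-r.e. side), while the reduction must be checked to output a context-sensitive grammar, which is exactly what the effective closure under concatenation guarantees.
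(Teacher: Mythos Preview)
Your proposal is correct and follows essentially the same approach as the paper: reduce context-sensitive emptiness (which is not semi-decidable) to the target problem by prepending the terminal $1$ to every word of $L(G)$, so that $L(G')\subseteq T$ iff $L(G')=\emptyset$ iff $L(G)=\emptyset$. The only differences are cosmetic: the paper states the non-semi-decidability of CSG emptiness as a known fact rather than justifying it, and it gives the explicit grammar tweak (add $S'\rightarrow 1\cdot S$) rather than invoking effective closure under concatenation.
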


The problem evidenced in Propositions \ref{propn:valid-tm} and \ref{propn:valid-csl} does not hold when considering context-free languages.

\begin{restatable}{propn}{contextfreedecidable}
\label{propn:valid-cfg}
Given an arbitrary \emph{context-free} grammar ({\sc cfg}), $G$, over the alphabet $\set{0,1}$ the problem of determining
if $L(G)\subseteq\set{0^i\cdot1\cdot0^j~:~i,~j\geq 1}$ is decidable.
\end{restatable}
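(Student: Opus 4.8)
The plan is to reduce the stated problem to the decidability of the emptiness problem for context-free languages, exploiting the closure properties of the context-free class together with the fact that the target set $T = \set{0^i\cdot1\cdot0^j~:~i,~j\geq 1}$ is itself regular. The key observation is that $L(G)\subseteq T$ holds if and only if $L(G)\cap\overline{T}=\emptyset$, where $\overline{T}$ is the complement of $T$ in $\set{0,1}^{*}$. So the first step is to recognise that we merely need an algorithm that decides emptiness of $L(G)\cap\overline{T}$.

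\emph{First} I would note that $T$ is a regular language: it is described by the regular expression $00^{*}\cdot1\cdot00^{*}$ (equivalently, it is $L_1\cdot\set{1}\cdot L_1$ with $L_1 = \set{0}\cdot\set{0}^{*}$, which is regular by repeated application of R2, R3, R4 of Definition \ref{defn:rl}). Consequently $\overline{T}$ is also regular, since the regular languages are effectively closed under complementation (one constructs a {\sc dfa} for $T$, then swaps accepting and non-accepting states to obtain a {\sc dfa} for $\overline{T}$). \emph{Second}, I would invoke the standard closure property that the intersection of a context-free language with a regular language is again context-free, and that a {\sc cfg} for the intersection can be \textbf{effectively} constructed from a {\sc cfg} for $L(G)$ and a {\sc dfa} for $\overline{T}$ (the usual product/triple construction on a grammar in suitable normal form together with the automaton states). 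This yields an effectively obtainable {\sc cfg} $G'$ with $L(G') = L(G)\cap\overline{T}$.

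\emph{Third}, I would apply the well-known decidability of the emptiness problem for context-free grammars: given a {\sc cfg} $G'$, it is decidable whether $L(G')=\emptyset$, by the standard terminating marking algorithm that computes which variables can derive a terminal string and then checks whether the start symbol is among them. Since $L(G')=\emptyset$ is equivalent to $L(G)\cap\overline{T}=\emptyset$, which is equivalent to $L(G)\subseteq T$, chaining the three effective steps gives a total decision procedure: construct a {\sc dfa} for $\overline{T}$, form the product grammar $G'$, and test $G'$ for emptiness.

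The contrast with Propositions \ref{propn:valid-tm} and \ref{propn:valid-csl} is instructive and explains where the main conceptual content lies: the whole argument hinges on the effective closure of the relevant class under intersection with a regular set \emph{combined with} a decidable emptiness test, and both of these fail precisely for context-sensitive and unrestricted grammars (for those classes emptiness is undecidable). So the only genuinely substantive point to verify carefully is that every step is \textbf{effective} rather than merely existential, i.e. that one can algorithmically produce the {\sc dfa}, the product {\sc cfg}, and run the emptiness test. I expect no real obstacle here, since each ingredient is classical; the routine part is merely confirming the product construction handles $\varepsilon$-productions and normal-form requirements correctly, which I would not grind through in detail.
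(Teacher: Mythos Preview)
Your proposal is correct and follows essentially the same argument as the paper: observe that $T$ is regular, so $\overline{T}$ is regular, intersect $L(G)$ with $\overline{T}$ to obtain an effectively constructible context-free language, and then apply the decidability of emptiness for context-free grammars. The paper's proof is in fact a terser version of exactly these three steps.
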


Context-free languages, however, do not satisfy the property of closure:
it is well-known that they are not closed under intersection, complement and set difference.

Turning to regular languages, as with the context-free case one can decide
if a given {\sc dfa} accepts the naive encoding of some $\tuple{\XC,\AC}$.

\begin{restatable}{propn}{dfadecidable}
\label{prop-dfa-decidable}
Given $M=\tuple{Q,\set{0,1},q_0,F,\delta }$ a {\sc dfa} over the alphabet $\set{0,1}$ there is a polynomial (in $|Q|$) algorithm
that decides $L(M)\subseteq\set{0^i\cdot1\cdot0^j~:~i,~j\geq 1}$.
\end{restatable}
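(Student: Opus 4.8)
The plan is to reduce the containment question to an emptiness test on a product automaton whose size is linear in $|Q|$. Write $T = \set{0^i \cdot 1 \cdot 0^j : i, j \geq 1}$ for the target language. The key observation is that $T$ is itself regular --- it is denoted by the regular expression $00^* 1 00^*$ --- and, crucially, it is recognised by a {\sc dfa} $M_T = \tuple{\set{0,1}, Q_T, p_0, F_T, \delta_T}$ whose number of states is a fixed constant independent of $M$. Concretely, five states suffice: an initial state demanding at least one leading $0$; a state recording that a nonempty block of $0$s has been read but no $1$ yet; a state recording that the single separating $1$ has just been read; an accepting state recording that at least one trailing $0$ has followed the $1$; and an absorbing dead state entered whenever the pattern is violated (e.g.\ a second $1$, or a $1$ before any $0$). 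The unique accepting state is the one of the fourth kind.

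First I would use the identity $L(M) \subseteq T \iff L(M) \cap \overline{T} = \emptyset$. Since the class of regular languages is closed under complement, $\overline{T}$ is recognised by the {\sc dfa} $\overline{M_T}$ obtained from $M_T$ by exchanging accepting and non-accepting states; this again has a constant number of states. Next I would form the standard product automaton $M \times \overline{M_T}$, with state set $Q \times Q_T$, transition function acting componentwise under $\delta$ and $\delta_T$, initial state $\tuple{q_0, p_0}$, and accepting states $F \times (Q_T \setminus F_T)$. This automaton recognises exactly $L(M) \cap \overline{T}$, and because $|Q_T|$ is a constant, its state set has cardinality $O(|Q|)$.

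It then remains to decide whether the product automaton accepts the empty language, i.e.\ whether no accepting state $\tuple{q, p}$ with $q \in F$ and $p \notin F_T$ is reachable from $\tuple{q_0, p_0}$. This is settled by a single graph reachability computation (breadth- or depth-first search) on the transition graph of the product, which runs in time linear in the number of states and transitions, hence polynomial (indeed linear) in $|Q|$. The algorithm accepts precisely when no such accepting state is reachable, since reachability of an accepting state is equivalent to nonemptiness of $L(M) \cap \overline{T}$.

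There is no genuine obstacle here beyond bookkeeping: the whole content of the argument is the observation that the fixed target $T$ has a {\sc dfa} of constant size, so that forming the product with $M$ and complementing do not blow up the state count beyond a constant factor, keeping the final emptiness-by-reachability test polynomial in $|Q|$. The only points requiring care are verifying that the five-state {\sc dfa} for $T$ is correct --- in particular that the $i, j \geq 1$ conditions, rather than $\geq 0$, are enforced by demanding a nonempty leading and trailing block of $0$s --- and that the product construction correctly realises intersection.
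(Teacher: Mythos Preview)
Your proposal is correct and follows essentially the same approach as the paper: both reduce the containment $L(M)\subseteq T$ to emptiness of $L(M)\setminus T = L(M)\cap\overline{T}$, use that $T$ is regular with a fixed-size {\sc dfa}, build the difference/intersection automaton, and test emptiness. You are simply more explicit than the paper about the constant-size {\sc dfa} for $T$ and the product construction, which makes the polynomial-in-$|Q|$ bound transparent.
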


Moreover, regular languages are fully satisfactory as far as closure properties are concerned (see \ref{subsection:flt} Fact \ref{fact:properties}).
Unfortunately, however, they feature very limited expressive power in terms of describing naive encodings.

\begin{restatable}{propn}{limiteddfa}
\label{prop-limited-dfa}
Let $L$ be any subset of $\set{0^i\cdot1\cdot0^j~:~i,~j\geq 1}$ with the following property:
there are infinitely many values of $k$ such that $\set{0^k\cdot1\cdot0^m~:~m\geq1}\cap L\not=\emptyset$
\emph{and} for all $0^n\cdot1\cdot0^m\in L$, $n\leq m$.
Then $L$ is \emph{not} a regular language.
\end{restatable}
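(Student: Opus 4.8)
The plan is to assume, for contradiction, that $L$ is regular and to derive a contradiction from the pumping lemma for regular languages. Recall that the pumping lemma supplies a constant $p \geq 1$ (the pumping length) such that every $w \in L$ with $|w| \geq p$ admits a decomposition $w = x \cdot y \cdot z$ with $|y| \geq 1$, $|x \cdot y| \leq p$, and $x \cdot y^i \cdot z \in L$ for every $i \geq 0$. The whole argument then rests on exploiting the two hypotheses in complementary roles: the first (that infinitely many left-indices $k$ occur) lets me select a witness whose leading block of zeros exceeds $p$, forcing the pumped factor to lie entirely within that block, while the second (that $n \leq m$ holds for every $0^n \cdot 1 \cdot 0^m \in L$) is what the pumped-up string will violate.

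First I would choose the witness. Since there are infinitely many values $k$ with $\set{0^k \cdot 1 \cdot 0^m : m \geq 1} \cap L \neq \emptyset$, I can pick some $w = 0^n \cdot 1 \cdot 0^m \in L$ with $n \geq p$; by the second hypothesis $m \geq n$, so $w$ is a genuine member of $L$ and $|w| = n + 1 + m \geq p$, meeting the length requirement of the pumping lemma.

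Next I would analyze the guaranteed decomposition $w = x \cdot y \cdot z$. Because $|x \cdot y| \leq p \leq n$, both $x$ and $y$ fall strictly inside the leading run of zeros, so $y = 0^t$ for some $1 \leq t \leq p$, and pumping affects only the left index. Hence $x \cdot y^i \cdot z = 0^{\,n + (i-1)t} \cdot 1 \cdot 0^m$ for all $i \geq 0$. Choosing $i$ large enough that $n + (i-1)t > m$ (possible since $t \geq 1$; for instance $i = m - n + 2$ gives $n + (i-1)t \geq m + 1$) produces a string which the pumping lemma forces to lie in $L$, yet whose left index strictly exceeds its right index. This contradicts the second hypothesis, and the contradiction completes the proof.

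I do not expect a serious obstacle here, as the statement is tailored to a direct pumping-lemma argument. The one point needing care is the precise interplay of the two hypotheses: the first is used only to guarantee a witness with leading block longer than $p$, ensuring that $y$ consists solely of left-hand zeros, whereas the second is what converts the pumped string into a violation of membership. It is worth noting that pumping \emph{down} (taking $i = 0$) would not suffice, since shrinking the left index preserves the inequality $n \leq m$; the contradiction genuinely requires pumping the left block \emph{upward} past $m$.
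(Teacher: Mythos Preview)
Your proposal is correct and follows essentially the same approach as the paper's own proof: assume regularity, invoke the pumping lemma, use the first hypothesis to secure a witness whose leading block of zeros exceeds the pumping constant so that the pumped factor lies entirely in that block, then pump upward until the left index exceeds the right, contradicting the second hypothesis. Your write-up is in fact slightly cleaner in spelling out why pumping down would not help.
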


Notice that one consequence of Proposition~\ref{prop-limited-dfa} is that the naive encoding of the infinite {\sc af} whose only
attacks are \emph{self-attacks}, i.e. $\tuple{p,p}$ for all $p\in\XC$ fails to be a regular language.\footnote{This language,
$\set{0^m\cdot1\cdot0^m~:~m\geq1}$ is, however, context-free.}

In summary, although naive representations have an appealing structural simplicity, if adopted one has to contend with
issues of undecidability (for the most expressive grammar classes), lack of closure (for context-free languages) or limited expressiveness
(for regular languages). 

%
%
\section{A Generic Regular Expression Formalism and its Properties}\label{section:sec-generic-representation}
The issues identified with so-called naive representations in the preceding section largely stem from
the following fact: given that encodings of \emph{arguments}, $p_i\in\XC$ are effectively achieved for free -- that is,
for all natural numbers $k$, $p_k\in\XC$ and no further analysis is needed -- the task of describing
$\tuple{\XC,\AC}$ comes down to describing the (infinite) set $\AC$. In assuming that $p_k\in\XC$ for any $k$, however,
this severely limits the extent to which $\AC$ can be described in a computationally useful manner.

In this section we present an alternative method for describing infinite {\sc af}s, $\tuple{\XC,\AC}$. The basic idea is
that, rather than assuming $\XC$ is understood simply as $\set{p_1,p_2,\ldots,p_n,\ldots}$, we consider
arguments in $\XC$ to be specified so that there is some structural aspect linking them. In this way
we can then present very general specifications of the attack structure that are conditioned solely in
terms of the \emph{specific} arguments in $\XC$. 

In a nutshell, the proposal consists of two basic elements: a description of the (infinite) set of arguments through an appropriate \emph{argument encoding} relying on some finite automaton, and a description of the attack relations linking arguments together through an \emph{attack expression}.
More precisely, the attack expression specifies a mapping between regular expressions describing sets of arguments, with the intended meaning that the set $S$ is attacked by the elements of the set obtained from $S$ through the mapping.
The combination of the automaton describing the set of arguments and of the attack expression will be called the \emph{{\sc af} specification}.

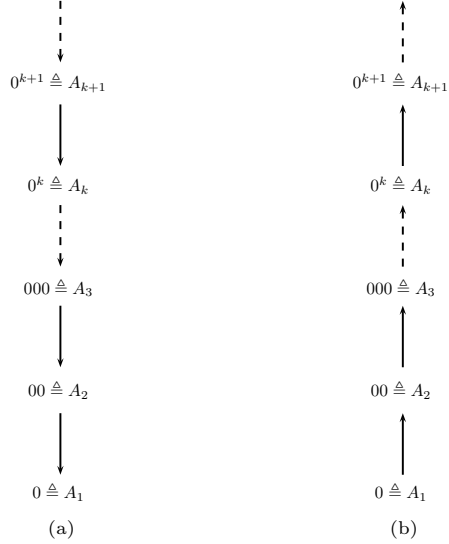
\begin{figure}[thb]
\begin{center}

\subfigure[]{\label{fig:AFL}
\scalebox{0.65} 
{
\begin{pspicture}(0,-5.156875)(5.1471877,5.136875)
\usefont{T1}{ptm}{m}{n}
\rput(2.5154688,3.446875){$0^{k+1} \triangleq A_{k+1}$}
\usefont{T1}{ptm}{m}{n}
\rput(2.5154688,1.351875){$0^k \triangleq A_k$}
\usefont{T1}{ptm}{m}{n}
\rput(2.5154688,-0.743125){$000 \triangleq A_3$}
\usefont{T1}{ptm}{m}{n}
\rput(2.5154688,-2.838125){$00 \triangleq A_2$}
\usefont{T1}{ptm}{m}{n}
\rput(2.5154688,-4.933125){$0 \triangleq A_1$}
\psline[linewidth=0.04cm,arrowsize=0.05291667cm 2.0,arrowlength=1.4,arrowinset=0.4]{->}(2.5535936,2.996875)(2.5535936,1.736875)
\psline[linewidth=0.04cm,linestyle=dashed,dash=0.16cm 0.16cm,arrowsize=0.05291667cm 2.0,arrowlength=1.4,arrowinset=0.4]{->}(2.5535936,0.936875)(2.5535936,-0.323125)
\psline[linewidth=0.04cm,arrowsize=0.05291667cm 2.0,arrowlength=1.4,arrowinset=0.4]{->}(2.5535936,-1.123125)(2.5535936,-2.383125)
\psline[linewidth=0.04cm,arrowsize=0.05291667cm 2.0,arrowlength=1.4,arrowinset=0.4]{->}(2.5535936,-3.323125)(2.5535936,-4.583125)
\psline[linewidth=0.04cm,linestyle=dashed,dash=0.16cm 0.16cm,arrowsize=0.05291667cm 2.0,arrowlength=1.4,arrowinset=0.4]{->}(2.5535936,5.116875)(2.5535936,3.856875)
\end{pspicture} 
}}
\hspace{2em}
\subfigure[]{\label{fig:AFR}
\scalebox{0.65} 
{
\begin{pspicture}(0,-5.156875)(5.1471877,5.136875)
\usefont{T1}{ppl}{m}{n}
\rput(2.5154688,3.446875){$0^{k+1} \triangleq A_{k+1}$}
\usefont{T1}{ppl}{m}{n}
\rput(2.5154688,1.351875){$0^k \triangleq A_k$}
\usefont{T1}{ppl}{m}{n}
\rput(2.5154688,-0.743125){$000 \triangleq A_3$}
\usefont{T1}{ppl}{m}{n}
\rput(2.5154688,-2.838125){$00 \triangleq A_2$}
\usefont{T1}{ppl}{m}{n}
\rput(2.5154688,-4.933125){$0 \triangleq A_1$}
\psline[linewidth=0.04cm,arrowsize=0.05291667cm 2.0,arrowlength=1.4,arrowinset=0.4]{<-}(2.5535936,2.996875)(2.5535936,1.736875)
\psline[linewidth=0.04cm,linestyle=dashed,dash=0.16cm 0.16cm,arrowsize=0.05291667cm 2.0,arrowlength=1.4,arrowinset=0.4]{<-}(2.5535936,0.936875)(2.5535936,-0.323125)
\psline[linewidth=0.04cm,arrowsize=0.05291667cm 2.0,arrowlength=1.4,arrowinset=0.4]{<-}(2.5535936,-1.123125)(2.5535936,-2.383125)
\psline[linewidth=0.04cm,arrowsize=0.05291667cm 2.0,arrowlength=1.4,arrowinset=0.4]{<-}(2.5535936,-3.323125)(2.5535936,-4.583125)
\psline[linewidth=0.04cm,linestyle=dashed,dash=0.16cm 0.16cm,arrowsize=0.05291667cm 2.0,arrowlength=1.4,arrowinset=0.4]{<-}(2.5535936,5.116875)(2.5535936,3.856875)
\end{pspicture} 
}}

\caption{Two infinite attack chains: $AF_D$ \subref{fig:AFL}, and $AF_U$ \subref{fig:AFR}.}
\label{fig:AFD-AFU}

\end{center}
\end{figure}

\begin{example}
\label{ex:chains}
To illustrate our approach we will use some \virg{simple} infinite structures, which can be regarded as basic patterns, possibly to be reused in the context of more articulated infinite argumentation frameworks.
A first example of basic structure (related to endless debates, or \virg{chiken and egg} style dilemmas, or the kicker and goalkeeper example) is an infinite sequence of arguments linked by the attack relation. We note that this kind of structure admits two different instantiations: one where the sequence \virg{starts} with an attacked argument (which corresponds to the examples mentioned above) and a dual one where the sequence \virg{starts} with an attacking argument.
They correspond respectively to the argumentation frameworks $AF_D$ and $AF_U$ (see Figure \ref{fig:AFD-AFU}) defined as follows: $AF_D = \tuple{\XC,\AC_D}$ with $\XC=\set{A_1, A_2, \ldots, A_n , \ldots}$ and $\AC_D=\set{\tuple{A_{i+1}, A_i}~:~i \geq 1}$ and $AF_U = \tuple{\XC,\AC_U}$ with $\XC$ as above and $\AC_U=\set{\tuple{A_i,A_{i+1}}~:~i \geq 1}$.
Another example of basic structure, related to temporal projection, is a couple of \virg{parallel} sequences of arguments where corresponding arguments in the sequences mutually attack each other. This kind of structure may correspond to conflicting information acquired at the same time for the same entity (e.g. a physical quantity) from two different and equally reliable sources $A$ and $B$ (e.g. two different experiments or measurements).
The conflict between the initial arguments corresponding to the readings from the two sources is then projected over time.
This can be represented by the argumentation framework $AF_M = \tuple{\XC_M,\AC_M}$ with $\XC_M=\set{A_1, A_2, \ldots, A_n , \ldots} \cup \set{B_1, B_2, \ldots, B_n , \ldots}$ and $\AC_M=\set{\tuple{A_i, B_i}~:~i \geq 1} \cup \set{\tuple{B_i, A_i}~:~i \geq 1}$ (see Figure \ref{fig:AF-M}).
To provide a slightly more complicated structure, we will consider also a variant of this temporal projection situation, with a third source $C$ which is in agreement with the source $A$ and is considered more reliable than $A$ and $B$.
This can be represented by the argumentation framework $AF_R = \tuple{\XC_R,\AC_R}$ with $\XC_R=\set{A_1, A_2, \ldots, A_n , \ldots} \cup \set{B_1, B_2, \ldots, B_n , \ldots} \cup \set{C_1, C_2, \ldots, C_n , \ldots}$ and $\AC_R=\set{\tuple{A_i, B_i}~:~i \geq 1} \cup \set{\tuple{B_i, A_i}~:~i \geq 1} \cup \set{\tuple{C_i, B_i}~:~i \geq 1}$ (see Figure \ref{fig:AF-R}).
\end{example}

\begin{figure}[!htb]
\begin{center}

\subfigure[]{\label{fig:AF-M}
\scalebox{0.65} 
{
\begin{pspicture}(0,-3.87)(7.0487266,3.87)
\usefont{T1}{ppl}{m}{n}
\rput(2.3454688,2.799742){$0 \triangleq A_1$}
\usefont{T1}{ppl}{m}{n}
\rput(2.2754688,0.62364745){$000 \triangleq A_2$}
\usefont{T1}{ppl}{m}{n}
\rput(4.7494073,0.62364745){$0000 \triangleq B_2$}
\usefont{T1}{ppl}{m}{n}
\rput(2.2254686,-1.9563526){$0\cdots0 \triangleq A_k$}
\usefont{T1}{ppl}{m}{n}
\rput(4.6170077,-1.9563526){$00\cdots0 \triangleq B_k$}
\psbezier[linewidth=0.04,arrowsize=0.05291667cm 2.0,arrowlength=1.4,arrowinset=0.4]{->}(2.584454,3.0726087)(2.6012225,3.85)(4.4709167,3.737121)(4.504454,3.1275759)
\psbezier[linewidth=0.04,arrowsize=0.05291667cm 2.0,arrowlength=1.4,arrowinset=0.4]{<-}(2.554454,2.4225857)(2.5803928,1.73)(4.5085154,1.7397369)(4.534454,2.47)
\psline[linewidth=0.04cm,linestyle=dashed,dash=0.16cm 0.16cm](3.454454,-0.45)(3.454454,-0.87)
\psline[linewidth=0.04cm,linestyle=dashed,dash=0.16cm 0.16cm](3.454454,-2.97)(3.454454,-3.85)
\psbezier[linewidth=0.04,arrowsize=0.05291667cm 2.0,arrowlength=1.4,arrowinset=0.4]{->}(2.624454,0.9326087)(2.6412225,1.71)(4.5109167,1.5971212)(4.544454,0.98757577)
\psbezier[linewidth=0.04,arrowsize=0.05291667cm 2.0,arrowlength=1.4,arrowinset=0.4]{<-}(2.594454,0.2825858)(2.6203928,-0.41)(4.5485153,-0.40026313)(4.574454,0.33)
\psbezier[linewidth=0.04,arrowsize=0.05291667cm 2.0,arrowlength=1.4,arrowinset=0.4]{->}(2.604454,-1.6273913)(2.6212225,-0.85)(4.4909167,-0.96287876)(4.524454,-1.5724243)
\psbezier[linewidth=0.04,arrowsize=0.05291667cm 2.0,arrowlength=1.4,arrowinset=0.4]{<-}(2.574454,-2.2774143)(2.6003928,-2.97)(4.5285153,-2.960263)(4.554454,-2.23)
\usefont{T1}{ppl}{m}{n}
\rput(4.489407,2.7636476){$00 \triangleq B_1$}
\end{pspicture} 
}}
\hspace{0em}
\subfigure[]{\label{fig:AF-R}
\scalebox{0.65} 
{
\begin{pspicture}(0,-3.88)(10.428726,3.88)
\usefont{T1}{ppl}{m}{n}
\rput(2.3454688,2.809742){$0 \triangleq A_1$}
\usefont{T1}{ppl}{m}{n}
\rput(4.901483,2.829742){$00 \triangleq B_1$}
\usefont{T1}{ppl}{m}{n}
\rput(7.691483,2.769742){$000 \triangleq C_1$}
\usefont{T1}{ppl}{m}{n}
\rput(2.3854687,0.63364744){$0000 \triangleq A_2$}
\usefont{T1}{ppl}{m}{n}
\rput(4.7994075,0.63364744){$00000 \triangleq B_2$}
\usefont{T1}{ppl}{m}{n}
\rput(7.7294073,0.57364744){$000000 \triangleq C_2$}
\usefont{T1}{ppl}{m}{n}
\rput(2.2254686,-1.9463526){$0\cdots0 \triangleq A_k$}
\usefont{T1}{ppl}{m}{n}
\rput(4.6170077,-1.9463526){$00\cdots0 \triangleq B_k$}
\usefont{T1}{ppl}{m}{n}
\rput(7.8670077,-1.9863526){$000\cdots0 \triangleq C_k$}
\psbezier[linewidth=0.04,arrowsize=0.05291667cm 2.0,arrowlength=1.4,arrowinset=0.4]{->}(2.584454,3.0826087)(2.6012225,3.86)(4.4709167,3.747121)(4.504454,3.1375759)
\psbezier[linewidth=0.04,arrowsize=0.05291667cm 2.0,arrowlength=1.4,arrowinset=0.4]{<-}(2.554454,2.4325857)(2.5803928,1.74)(4.5085154,1.7497369)(4.534454,2.48)
\psline[linewidth=0.04cm,linestyle=dashed,dash=0.16cm 0.16cm](5.674454,0.04)(5.674454,-1.44)
\psline[linewidth=0.04cm,linestyle=dashed,dash=0.16cm 0.16cm](5.674454,-2.38)(5.674454,-3.86)
\psline[linewidth=0.04cm,arrowsize=0.05291667cm 2.0,arrowlength=1.4,arrowinset=0.4]{->}(6.584454,2.78)(5.774454,2.78)
\psline[linewidth=0.04cm,arrowsize=0.05291667cm 2.0,arrowlength=1.4,arrowinset=0.4]{->}(6.624454,0.60770994)(5.814454,0.60770994)
\psline[linewidth=0.04cm,arrowsize=0.05291667cm 2.0,arrowlength=1.4,arrowinset=0.4]{->}(6.604454,-1.9722902)(5.794454,-1.9722902)
\psbezier[linewidth=0.04,arrowsize=0.05291667cm 2.0,arrowlength=1.4,arrowinset=0.4]{->}(2.624454,0.9426087)(2.6412225,1.72)(4.5109167,1.6071212)(4.544454,0.99757576)
\psbezier[linewidth=0.04,arrowsize=0.05291667cm 2.0,arrowlength=1.4,arrowinset=0.4]{<-}(2.594454,0.2925858)(2.6203928,-0.4)(4.5485153,-0.39026314)(4.574454,0.34)
\psbezier[linewidth=0.04,arrowsize=0.05291667cm 2.0,arrowlength=1.4,arrowinset=0.4]{->}(2.604454,-1.6173913)(2.6212225,-0.84)(4.4909167,-0.9528788)(4.524454,-1.5624242)
\psbezier[linewidth=0.04,arrowsize=0.05291667cm 2.0,arrowlength=1.4,arrowinset=0.4]{<-}(2.574454,-2.267414)(2.6003928,-2.96)(4.5285153,-2.950263)(4.554454,-2.22)
\end{pspicture} 
}}

\caption{Two infinite argumentation frameworks: $AF_M$ \subref{fig:AF-M}, and $AF_R$ \subref{fig:AF-R}.}

\end{center}
\end{figure}
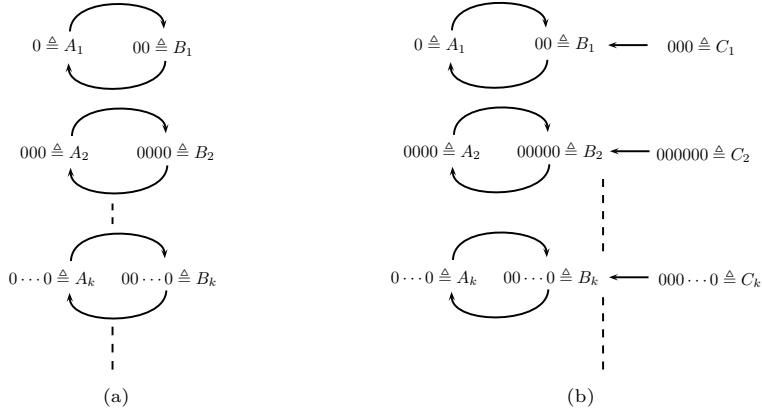

\subsection{Argument encoding}

Central to our formalism is the notion of ``argument encoding'' as a given set of words over some base set of symbols.
\begin{defn}\label{defn:arg-encoding}
Let $\Sigma~=~\set{\sigma_1,\sigma_2,\ldots,\sigma_k}$ be an alphabet. An \emph{argument encoding}
over $\Sigma$ is any regular language $\XC\subseteq\Sigma^{*}\setminus\set{\varepsilon}$.
\end{defn}
Thus a (possibly infinite) set of arguments, $\XC$, corresponds to some regular language over a finite alphabet $\Sigma$.

\begin{examplecont}[\ref{ex:chains}]
The notion of argument encoding is very general and, in fact, leaves completely open the choice of the alphabet and of the regular language to be adopted to represent a given infinite structure.
Considering the frameworks $AF_D$ and $AF_U$, featuring a simple \virg{linear} structure, the straightforward choice we will follow uses an alphabet consisting of a unique symbol, actually $\Sigma=\set{0}$, and 
the regular language $0 \cdot 0^{*}$, by adopting the correspondence $0^{i} \triangleq A_i$.
For frameworks with more articulated structures like $AF_M$ and $AF_R$, one might consider using an alphabet with a symbol for each class of arguments, e.g. 
$\Sigma_M=\set{0,1}$ and $\Sigma_R=\set{0,1,2}$.
Accordingly the regular languages to encode $\XC_M$ and $\XC_R$ would be $0 \cdot 0^{*} \cup 1 \cdot 1^{*}$ and $0 \cdot 0^{*} \cup 1 \cdot 1^{*} \cup 2 \cdot 2^{*}$ respectively, with the correspondences $0^{i} \triangleq A_i$, $1^{i} \triangleq B_i$, $2^{i} \triangleq C_i$.
As we will see later, however, a different approach will be adopted since it is more suitable for the purpose of representation of the attack relation.
In fact, we will use again the alphabet $\Sigma=\set{0}$, and 
the regular language $0 \cdot 0^{*}$ for all arguments, putting the different classes of arguments in correspondence with distinct sublanguages of $0 \cdot 0^{*}$.
In fact for $AF_M$ we will adopt the correspondence  $0 \cdot (00)^i \triangleq A_{i+1}$ and $00 \cdot (00)^i \triangleq B_{i+1}$ for $i \geq 0$, i.e. $0 \triangleq A_1$, $00 \triangleq B_1$, $000 \triangleq A_2$, $0000 \triangleq B_2$ and so on.
Similarly for $AF_R$ we will adopt the correspondence  $0 \cdot (000)^i \triangleq A_{i+1}$, $00 \cdot (000)^i \triangleq B_{i+1}$ and $000 \cdot (000)^i \triangleq C_{i+1}$for $i \geq 0$.
\end{examplecont}

\subsection{Attack expression}

The increased expressivity and computational effectiveness of our approach (in comparison to naive encodings), derives from the
mechanism used to describe sets of attacks. We seek to develop formalisms by which the set
of arguments (in $\XC$) that ``attack'' in the standard sense of \cite{dung:1995} some specified subset $S\subseteq\XC$
may be presented, i.e. for defining ``suitable'' functions $\mu~:~2^{\XC}\rightarrow2^{\XC}$, such that, for a given set of arguments $S$, $\mu(S)$ specifies the set of arguments attacking $S$, i.e. $\mu(S)= S^{-}$. By ``suitable''
we recognise that there are certain natural conditions that such functions ought to respect. 

\begin{defn}\label{defn:reasonable-attack}
A mapping $\mu~:~2^{\Sigma^{*}}\rightarrow2^{\Sigma^{*}}$ is \emph{reasonable} if
\begin{enumerate}
\item[R1.]
$\forall S\subseteq\Sigma^{*}$, $\mu(S)~=~\bigcup_{u\in S}~\mu(\set{u})$. (Additivity)
\item[R2.]
$\forall S\subseteq\Sigma^{*}$ for which $S$ is a regular language, $\mu(S)$ is a regular language. (Closure)
\end{enumerate}
In addition, we say that a reasonable mapping is \emph{invertible} if the function
$\nu~:~2^{\Sigma^{*}}\rightarrow2^{\Sigma^{*}}$  defined via 
\[
\nu(S)~~=~~\set{u\in\Sigma^{*}~:~\exists~v\in S~\mbox{ s.t. }v\in\mu(\set{u})}
\]
is in turn a reasonable mapping.
It is easy to see that $\nu$ satisfies property R1 by definition.
\end{defn}

We observe that all of these conditions hold for the finite instantiations of $\AC$ in Dung's {\sc af}s. For the
development we consider to infinite $\XC$, the additivity restriction states that 
attacks on $S$ must be associated with \emph{individual} arguments in $S$ and not with $S$ as a whole, i.e. we can construct the
set of attacks on $S$ simply by considering attacks on the members of $S$ in turn.\footnote{We note, however,
that settings with attack relations not respecting additivity have been examined in the context
of finite frameworks, e.g. Nielsen and Parsons~\cite{NielsenP06a} and Bochmann~\cite{Bochmann:2003}.} It may be
noted that additivity implies that the mapping is also monotonic, i.e. if $R\subseteq S$ then $\mu(R)\subseteq\mu(S)$. Thus
additive mappings reflect the natural condition that attacks on a set of arguments, $S$,
cannot be eliminated simply by adding more arguments to $S$: notice that by disallowing \emph{explicit removal} of an attack
on a set (in the definition of $\mu$) we require \emph{defences} to attacks to be made by
direct counterattacks. In this way we preserve the concept of ``$u$ is acceptable to $S$'' by identifying any
$v\in S$ such that $v\in\mu(\set{x})$ for each $x\in\mu(\set{u})$.

While the additivity may be justified through semantic considerations, our reason for imposing
``closure wrt regular languages'' is motivated by computational concerns: given that $S\subseteq\XC$ (if regular)
has a simple computational representation (e.g. as a {\sc dfa} accepting exactly the arguments in $S$) it is desirable that
subsets of $\XC$ ``related to'' $S$, e.g. through the property of attacking its members, can also be so described. In principle
this allows for the outcome of $\mu(S)$ to be described as a {\sc dfa}.

Finally the concept of invertibility addresses issues arising
with the ``inverse'' mapping: just as $\mu~:~2^{\XC}\rightarrow2^{\XC}$ describes the subset of $\XC$ that \emph{attacks}
a given $S\subseteq\XC$, so $\nu~:~2^{\XC}\rightarrow2^{\XC}$ describes the subset of $\XC$ that is \emph{attacked by}
$S$. The justification of additivity and closure properties is on similar grounds to those
used with $\mu$.

Given the basic desiderata that mappings defining attacks ought
to satisfy stated in Definition~\ref{defn:reasonable-attack}, we now turn to the issue of how general functions satisfying these
desiderata can be constructed. For this purpose we introduce the concept of attack expressions over a finite alphabet $\Sigma$.
\begin{defn}\label{defn:attack-expr}
A well-formed \emph{attack expression} ({\sc ae}) over $\Sigma$  is a sentence constructed by the following rules.
\begin{enumerate}
\item[1.]
For all $\sigma_i\in\Sigma$, $\sigma_i$ is an attack expression over $\Sigma$.
\item[2.]
The symbol $I$ (for \emph{identity}) is an attack expression
over $\Sigma$.

\item[3.]
If $p$ and $q$ are two attack expressions over $\Sigma$ then $p~\cup~q$ is also
an attack expression over $\Sigma$.
\item[4.]
If $p$ is an attack expression and $K_{\Sigma}$ is a regular expression (using the operations $\set{+,\cdot,*}$) over
\emph{only} symbols from $\Sigma$ (i.e. the identity symbol $I$ does \emph{not} occur in $K_{\Sigma}$)
then all of
$K_{\Sigma}\cdot p$, $p\cdot K_{\Sigma}$, $p/K_{\Sigma}$, $K_{\Sigma}/p$ and $p\cap K_{\Sigma}$ are attack expressions.
\item[5.]
If $p$ is an attack expression over $\Sigma$ then $(p)$ and $\gamma(p)$ for
$\gamma\in\set{hd,~tl}$ are attack expressions over $\Sigma$.
\item[6.]
The only attack expressions over $\Sigma$ are those formed by a finite number of applications
of (1) through (5).
\end{enumerate}
\end{defn}

Let $\AC\EC(\Sigma)$ denote the set of all well-formed attack expressions over $\Sigma$ and for $a\in\AC\EC(\Sigma)$
let $size(a)$ denote the number of \emph{operations} (i.e. applications of rules 3--5) used to
define $a$. The key motivation for this formalism is in describing the \emph{attack} structure relating a set of arguments.
Hence each $a\in\AC\EC(\Sigma)$, defines a mapping $\underline{a}~:~2^{\Sigma^{*}}\rightarrow 2^{\Sigma^{*}}$
as follows
\begin{defn}\label{defn:attackers-of}
Let $p\in\AC\EC(\Sigma)$ and $S\subseteq\Sigma^{*}$, 
the set $\underline{p}(S)$ is given by the rules below:
\[
\underline{p}(S)~~=~~\left\{
{
\begin{array}{ll}
\set{\sigma_i}&\mbox{ if }p=\sigma_i\\
S&\mbox{ if }p=I\\
(\underline{b}(S)~\cup~\underline{c}(S))&\mbox{ if }p=(b~\cup~c)\\
K_{\Sigma}\cdot\underline{b}(S)&\mbox{ if }p=K_{\Sigma}\cdot b\\
\underline{b}(S)\cdot K_{\Sigma}&\mbox{ if }p=b\cdot K_{\Sigma}\\
\underline{b}(S)/K_{\Sigma}&\mbox{ if }p=b/K_{\Sigma}\\
K_{\Sigma}/\underline{b}(S)&\mbox{ if }p=K_{\Sigma}/b\\
\underline{b}(S)\cap K_{\Sigma}&\mbox{ if }p=b\cap K_{\Sigma}\\
\gamma(\underline{b}(S))&\mbox{ if }p=\gamma(b)
\end{array}
}
\right.
\]
The unary operations -- $\set{hd,~tl}$ -- are defined as follows for $S\subseteq\Sigma^{*}$:
\[
hd(S)~~=~~\set{~\sigma_i~\in \Sigma:~\exists w \mbox{ s.t. } \sigma_i\cdot w\in S}
\]
\[
tl(S)~~=~~\set{~w~:~\exists ~\sigma_i~\in \Sigma \mbox{ s.t. } \sigma_i\cdot w\in S}
\]
Note that $tl(\set{\sigma})=\set{\varepsilon}$ and $tl(\emptyset)=hd(\set{\varepsilon})=hd(\emptyset)=\emptyset$.
\end{defn}

Before analysing the properties of the proposed attack expressions, we comment on the set
of operations -- $\set{\cup,~\cdot K_{\Sigma},~K_{\Sigma}\cdot,~/K_{\Sigma},~K_{\Sigma}/,~\cap K_{\Sigma},~hd,~tl}$ that are provided. In particular we note the limited
way in which $\cdot$ and $\cap$ may be used and the absence of complement and Kleene $*$ operators, despite the
fact that regular languages are closed under the last two of these. The immediate problem with allowing
arbitrary usage of $\cdot$ and $\cap$ concerns the fact that, for expressions such as $p\cdot q$ or $p\cap q$
we cannot, in general, guarantee that the mappings $\underline{p}\cdot\underline{q}$ or $\underline{p}\cap\underline{q}$
are \emph{additive}. For example, suppose that $p=q=I$, $\Sigma=\set{0,1}$ and $S=\set{0^k~:~k\geq1}$.
Then $(\underline{p}\cdot\underline{q})(S)=S\setminus\set{0}$, however
\[
\bigcup_{w\in S}~(\underline{p}\cdot\underline{q})(\set{w})~=~\set{0^{2k}~:~k\geq1}~~\not=S\setminus\set{0}
\]
Similarly if $p=I$, $q=tl(I)$, $S=\set{1^{k}~:~k\geq1}\cup\set{0\cdot1^{k}~:~k\geq1}$ we get
\[
(\underline{p}\cap\underline{q})(S)~=~S\cap tl(S)~=\set{1^k~:~k\geq 1}
\]

but
\[
\bigcup_{w\in S}~(\underline{p}\cap\underline{q})(\set{w})~=~\bigcup_{w\in S}\underline{p}(\set{w})\cap\underline{q}(\set{w})~=~
\bigcup_{w\in S}~(\set{w}\cap\set{tl(w)})~~=~~\emptyset
\]
We could, of course, avoid this by directly defining $\mu(S)$ to be $\bigcup_{w\in S}~\mu(\set{w})$, i.e. restricting
the domain of $\mu$ to $\Sigma^{*}$. In this case, however, we cannot always ensure that $\underline{p}\cdot\underline{q}$
preserves regularity. For example, using $p=I$, $q=\set{1}\cdot I$ with
$S=\set{0^k~:~k\geq1}$ (which is a regular language),
\[
\bigcup_{w\in S}~\underline{p}(w)\cdot\underline{q}(w)~~=~~\set{0^k\cdot1\cdot0^k~:~k\geq1}
\]
is not regular. This behaviour arises since regular languages are not closed under \emph{unbounded} union
(only with respect to \emph{finite} union). The issues underpinning the absence of $\cdot$
from the class of allowed operations are easily seen also to arise were $*$ to be added. Finally allowing
complementation would lead to mappings which were not monotonic (and thus could not be additive).

Theorem \ref{thm:reasonableness} provides a first confirmation of the soundness of the proposed approach by showing\footnote{The proofs relevant to this section are given in \ref{sec:proof-sect-representation}.} that the mappings arising via attack expressions have appropriate properties.

\begin{restatable}{thm}{thmreasonablemapping}
\label{thm:reasonableness}
Let $p\in\AC\EC(\Sigma)$ be \emph{any} attack expression over $\Sigma$. The mapping $\underline{p}~:~2^{\Sigma^{*}}\rightarrow2^{\Sigma^{*}}$
is reasonable.

\end{restatable}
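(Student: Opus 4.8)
The plan is to prove that $\underline{p}$ is reasonable by structural induction on the attack expression $p$, following exactly the recursive construction given in Definition~\ref{defn:attack-expr} and the semantic rules of Definition~\ref{defn:attackers-of}. There are two properties to establish simultaneously, R1 (additivity) and R2 (closure wrt regular languages), so each inductive case must verify both. The base cases are the atomic expressions: for $p=\sigma_i$ the mapping sends every $S$ to the fixed singleton $\set{\sigma_i}$, and for $p=I$ it is the identity. Additivity is immediate in both (a constant map satisfies $\mu(S)=\bigcup_{u\in S}\mu(\set{u})$ as long as $S\neq\emptyset$, and one checks the $S=\emptyset$ convention; the identity is trivially additive), and closure is trivial since singletons and $S$ itself are regular whenever $S$ is.

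For the inductive step I would assume that $\underline{b}$ (and $\underline{c}$ where relevant) are reasonable and treat each formation rule in turn. The union case $p=b\cup c$ is easy: additivity distributes over union and regular languages are closed under finite union (Fact~\ref{fact:properties}). The cases $K_\Sigma\cdot\underline{b}(S)$, $\underline{b}(S)\cdot K_\Sigma$, $\underline{b}(S)/K_\Sigma$, $K_\Sigma/\underline{b}(S)$ and $\underline{b}(S)\cap K_\Sigma$ all have the same flavour: each outer operation is applied to $\underline{b}(S)$ with a \emph{fixed} regular language $K_\Sigma$. For closure I invoke the fact that regular languages are closed under concatenation, quotient and intersection (again Fact~\ref{fact:properties}), using that $\underline{b}(S)$ is regular by the induction hypothesis. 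For additivity the crucial observation is that each of these operations \emph{commutes with union in its $\underline{b}(S)$ argument} when the other argument $K_\Sigma$ is held fixed — e.g. $K_\Sigma\cdot\bigl(\bigcup_i T_i\bigr)=\bigcup_i(K_\Sigma\cdot T_i)$, and similarly for right concatenation, both directions of quotient, and intersection with a fixed set. Combined with $\underline{b}(S)=\bigcup_{u\in S}\underline{b}(\set{u})$ from the induction hypothesis, this yields $\underline{p}(S)=\bigcup_{u\in S}\underline{p}(\set{u})$. The unary cases $hd$ and $tl$ are handled the same way: one checks directly from their set-theoretic definitions that both distribute over arbitrary unions.

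The main obstacle, and the reason the operation set in Definition~\ref{defn:attack-expr} is restricted as it is, is precisely guaranteeing that every allowed operation distributes over union while preserving regularity. The paper's own discussion preceding the theorem shows why unrestricted $\cdot$, $\cap$, $*$ and complement are excluded: general binary concatenation $\underline{p}\cdot\underline{q}$ fails additivity (the worked example $p=q=I$), and even after forcing additivity by fiat it can destroy regularity through \emph{unbounded} union (the example producing $\set{0^k\cdot1\cdot0^k~:~k\geq1}$). So the care in the inductive argument lies entirely in exploiting that $K_\Sigma$ is a fixed regular language independent of $S$ — this is what keeps the union finite-in-spirit and lets the distributive identities go through. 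I would therefore make explicit in each of these cases that the second operand is a constant, and point to the motivating counterexamples to confirm that no stronger formation rule could be admitted without breaking one of R1 or R2. A minor bookkeeping point worth handling cleanly is the empty-set and empty-word boundary behaviour flagged in Definition~\ref{defn:attackers-of} (e.g. $tl(\set{\sigma})=\set{\varepsilon}$, $hd(\emptyset)=\emptyset$), which should be checked so that the additivity identity holds even in degenerate cases.
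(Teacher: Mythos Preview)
Your proposal is correct and matches the paper's approach exactly: structural induction on $size(p)$, handling the base cases $\sigma_i$ and $I$ directly and then, for each formation rule, verifying that the outer operation distributes over the union delivered by the inductive hypothesis on $\underline{b}$ (and $\underline{c}$) and preserves regularity via the standard closure properties of regular languages. You are in fact slightly more careful than the paper in flagging the $S=\emptyset$ boundary for the constant map $p=\sigma_i$, which the paper's own proof writes as an unqualified identity without addressing that edge case.
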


\subsection{{\sc af} specification}

We now have the basic elements of our formal descriptive mechanism for infinite frameworks, the idea being that the set of arguments is specified as a regular language $\XC\subseteq\Sigma^{*}$ and the attack relation is specified through an attack expression $a$. 
In fact, given an element $v$ of $\XC$ the set $T$ of attackers of $v$ might be defined as $T = \underline{a}(\set{v})$.
Considering now a set $S \subseteq \XC$, the set of attackers of $S$ is given by 
$\set{v\in\Sigma^{*}~:~\exists~u\in S\mbox{ s.t. } v \in \underline{a}(\set{u})}$ which is equal to $\underline{a}(S)$ by additivity of $\underline{a}$.
However, it should be noted that while the attack relation must be a subset of $\XC \times \XC$, it might be the case that for some $S \subseteq \XC$  $\underline{a}(S) \nsubseteq \XC$. To fix this (actually minor) problem we need to introduce some further notation.

\begin{defn}
Let $\mu~:~2^{\Sigma^{*}}\rightarrow 2^{\Sigma^{*}}$ and $\XC\subseteq\Sigma^{*}$ a regular language, we define $\mu_{\XC}:~2^{\XC}\rightarrow 2^{\XC}$ as
\[
\mu_{\XC}(S)~~=~~\mu(S)\cap\XC 
\]
\end{defn}

Proposition \ref{prop:res-restrict} shows that considering $\mu_{\XC}$ instead of $\mu$ does not affect the property of being a reasonable mapping.

\begin{restatable}{propn}{propreasonablemapping}
\label{prop:res-restrict}
If $\mu$ is a reasonable mapping over the domain $2^{\Sigma^{*}}$ then $\mu_{\XC}$ is a reasonable mapping over the domain $2^{\XC}$.
\end{restatable}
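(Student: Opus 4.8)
The plan is to verify directly the two defining conditions of a reasonable mapping—additivity (R1) and closure (R2) of Definition~\ref{defn:reasonable-attack}—for $\mu_{\XC}$, now read over the domain $2^{\XC}$ in place of $2^{\Sigma^{*}}$. Before that I would record the (trivial) observation that $\mu_{\XC}$ is well defined as a map $2^{\XC}\rightarrow 2^{\XC}$: for any $S\subseteq\XC$ we have $\mu_{\XC}(S)=\mu(S)\cap\XC\subseteq\XC$, so the image indeed lies in $2^{\XC}$.

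For additivity I would take an arbitrary $S\subseteq\XC$ and compute
\[
\mu_{\XC}(S)=\mu(S)\cap\XC=\Big(\bigcup_{u\in S}\mu(\set{u})\Big)\cap\XC=\bigcup_{u\in S}\big(\mu(\set{u})\cap\XC\big)=\bigcup_{u\in S}\mu_{\XC}(\set{u}),
\]
where the second equality is additivity (R1) of $\mu$ and the third is distributivity of intersection over an arbitrary union. This establishes R1 for $\mu_{\XC}$.

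For closure I would take an arbitrary $S\subseteq\XC$ that is a regular language; since $\XC\subseteq\Sigma^{*}$, such an $S$ is a regular language over $\Sigma$ in the usual sense, so the hypothesis of the closure property of $\mu$ applies unchanged. By R2 of $\mu$ the set $\mu(S)$ is regular; by assumption $\XC$ is regular; and regular languages are closed under intersection (Fact~\ref{fact:properties}). Hence $\mu_{\XC}(S)=\mu(S)\cap\XC$ is regular, giving R2 for $\mu_{\XC}$.

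I expect no genuine obstacle here: the statement is a routine verification. The only points requiring a little care are purely bookkeeping—interpreting the two reasonableness conditions with $\Sigma^{*}$ replaced by $\XC$, confirming that regularity of a subset of $\XC$ is simply regularity as a language over $\Sigma$, and appealing to the correct closure property (closure under intersection) of the regular languages.
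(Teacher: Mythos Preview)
Your proof is correct and follows essentially the same approach as the paper's own proof: verify additivity by distributing the intersection with $\XC$ over the union coming from additivity of $\mu$, and obtain closure from regularity of $\mu(S)$, regularity of $\XC$, and closure of regular languages under intersection. The only addition on your side is the explicit well-definedness check, which the paper leaves implicit.
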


We can now formally introduce the notion of {\sc af} specification.

\begin{defn}\label{defn:generic-inf-dfa}
Let $\Sigma$ be a finite alphabet of symbols and $\XC\subseteq\Sigma^{*}$ be a regular language. An {\sc af} \emph{specification} ({\sc afs})
is a pair $\tuple{\MC,a}$ where $\MC=\tuple{Q,\Sigma,q_0,\delta,F}$ is some finite automaton\footnote{We do not
require that $\MC$ be limited to a \emph{specific} class of automata since there is no
expressive gain in imposing such a restriction (see Fact~\ref{fact:reg-fa} in \ref{subsection:flt}).}
for which $L(\MC)=\XC$ and $a\in\AC\EC(\Sigma)$ is a well-formed attack expression over $\Sigma$.  Given
an {\sc afs} $\tuple{\MC,a}$, the relation $\rightarrow_{a}$
over $\XC\times\XC$ is defined by $u\rightarrow_{a}v$ (read as ``$v$ is attacked by $u$'') if $u\in \underline{a}_{\XC}(\set{v})$.
We call $\tuple{\XC, \rightarrow_{a}}$ the argumentation framework induced by $\tuple{\MC,a}$.
\end{defn}

Note that, given an {\sc af} \emph{specification} and a set $S \subseteq \XC$, the set of attackers of $S$ denoted as $\pi_{a}^{-}(S) \triangleq \set{~v\in\Sigma^{*}~:~\exists~u\in S \mbox{ s.t. }v\rightarrow_{a}u}$ is equal to $\underline{a}_{\XC}(S)$, by additivity.

\begin{examplecont}[\ref{ex:chains}]
Continuing with the examples, we can now complete the specification of $AF_U$, $AF_D$, $AF_M$ and $AF_R$ by identifying the relevant attack expressions.

As to $AF_U$ we note that, for a generic $i \geq 2$ the argument (corresponding to) $0^{i}$ is attacked by the argument $0^{i-1}$, leading to the attack expression $a=tl(I)$. Note that $\underline{a}(\set{0})=\set{\varepsilon}$ but $\underline{a}_{\XC}(\set{0})=\emptyset$.
On the other hand, in $AF_D$, for a generic $i \geq 1$, the argument $0^{i}$ is attacked by the argument $0^{i+1}$, leading to the attack expression $I\cdot 0$.\footnote{Of course, we could equally write $0\cdot I$ in this case.}

$AF_M$ requires a more articulated attack expression.
Here each argument $A_i$ corresponding to $0 \cdot (00)^{i-1}$ is attacked by the argument $B_i$ corresponding to $00 \cdot (00)^{i-1}$ and viceversa each argument $B_i$ is attacked by the argument $A_i$.
As to the attacks from a generic $B_i$ to a generic $A_i$ we note that the attacker can simply be obtained by adding a trailing $0$. We can not however use the simple expression $I\cdot 0$ as above since this would entail not only that any $A_i$ is attacked by $B_i$ but also that any $B_i$ is attacked by $A_{i+1}$ which is not the case.
The attack expression has therefore to specify that the trailing $0$ applies only to the elements of the sublanguage $0 \cdot (00)^{*}$, giving rise to the expression $(I \cap (0 \cdot (00)^{*})) \cdot 0$.
Similarly, the attacks from a generic $A_i$ to a generic $B_i$ can be obtained using the $tl$ operator and properly restricting its application, giving rise to $tl(I \cap ((00) \cdot (00)^{*}))$.
The complete attack expression of $AF_M$ is obtained by the union of the two expressions above: $a_M=((I \cap (0 \cdot (00)^{*})) \cdot 0) \cup tl(I \cap ((00) \cdot (00)^{*}))$.

As to $AF_R$, we note first that the attacks between arguments $A_i$ and $B_i$ are analogous to the case of $AF_M$ with the difference that $A_i$ corresponds to $0 \cdot (000)^{i-1}$ and $B_i$ corresponds to $00 \cdot (000)^{i-1}$. Hence, similarly to above, we obtain the expressions $(I \cap (0 \cdot (000)^{*})) \cdot 0$ and $tl(I \cap ((00) \cdot (000)^{*}))$.
As to the attacks from an argument $C_i$, corresponding to $000 \cdot (000)^{i-1}$, to an argument $B_i$ corresponding to $00 \cdot (000)^{i-1}$ we note that they can again be represented through the addition of a trailing $0$ yielding $(I \cap (00 \cdot (000)^{*})) \cdot 0$.
The complete attack expression of $AF_R$ turns out to be $a_R=((I \cap (0 \cdot (000)^{*})) \cdot 0) \cup (tl(I \cap ((00) \cdot (000)^{*}))) \cup ((I \cap (00 \cdot (000)^{*})) \cdot 0)$.
\end{examplecont}
 
As a further remark, we note that the attack expression $I$ captures the case of an argumentation framework where each argument attacks itself (and only itself) which has been shown not to be representable in the naive approach using regular languages in Section \ref{section:naive-rep}.

\subsection{Inverting the attack expression}

Definition \ref{defn:generic-inf-dfa} provides a formal specification of the attackers of an element (subset) of $\Sigma^{*}$. However it is useful to consider also the specification of the arguments attacked by a given element (subset) of $\Sigma^{*}$. This is possible since the function $\underline{a}$ is invertible as shown below by Theorem \ref{thm:attack-comp}. In particular, the proof of the theorem (see \ref{sec:proof-sect-representation}) allows one to construct from a given $a\in\AC\EC(\Sigma)$ a related expression $a^{+}$ with the property that for all $u,~v\in\Sigma^{*}$, $u\in\underline{a}(\set{v})$ if and only if $v\in\inv{a}(\set{u})$.
The expressions $a^{+}$ use the same basic elements as $\AC\EC(\Sigma)$ plus the 
$rev()$ operator\footnote{With a little abuse of notation, in order to simplify the presentation, we will apply the $rev()$ operator also to single words, i.e. for $w \in \Sigma^*$ and $\set{w'}=rev(\set{w})$, $rev(w)=w'$.}, which does not affect any of the desired properties.

The following properties, shown to be valid in the proof of Theorem \ref{thm:attack-comp}, provide the basic elements to derive the expressions $a^{+}$ from $\underline{a}$.

\begin{fact}\mbox{}
\label{conjectures-inverse}

  \begin{enumerate}
  \item If $a = \sigma_{i}$ then $\inv{a}(S) = tl(I \cap \sigma_i)\cdot \Sigma^{*}$; If $a = I$ then $\inv{a}(S) = S$. \label{Prop0}
  \item If $a=b\cup c$, then $\inv{a}(S)~=~\inv{b}(S)\cup\inv{c}(S)$. \label{Prop1}
  \item If $a=b\cdot K_{\Sigma}$, then $\inv{a}(S)~=~\inv{b}(S/K_{\Sigma})$.\label{Prop2}
  \item If $a=K_{\Sigma}\cdot b$, then $\inv{a}(S)~=~\inv{b}(rev(rev(S)/rev(K_{\Sigma})))$. \label{Prop3}
  \item If $a=b/K_{\Sigma}$, then $\inv{a}(S)~=~\inv{b}(S\cdot K_{\Sigma})$. \label{Prop4}
  \item If $a=K_{\Sigma}/b$, then $\inv{a} (S)~=~ \inv{b}(rev(rev(K_{\Sigma})/rev(S)))$. \label{Prop5}
  \item If $a=b\cap K_{\Sigma}$, then $\inv{a}(S)~=~\inv{b}(S\cap K_{\Sigma})$. \label{Prop6}
  \item If $a=hd(b)$, then $\inv{a}(S)~=~\inv{b}(((S\cap\Sigma)\cdot\Sigma^{*}))$. \label{Prop7}
  \item If $a=tl(b)$, then $\inv{a}(S)~=~\inv{b}(\Sigma\cdot S)$. \label{Prop8}
  \end{enumerate}
\end{fact}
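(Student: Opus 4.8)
The plan is to work throughout from the explicit description of the inverse mapping supplied by the invertibility clause of Definition~\ref{defn:reasonable-attack}: instantiating $\mu = \underline{a}$ there gives, for every $S \subseteq \Sigma^{*}$, the mapping $\inv{a}(S) = \set{x \in \Sigma^{*} : \exists s \in S \text{ s.t. } s \in \underline{a}(\set{x})}$. Each of the nine clauses is then a standalone equivalence relating $\inv{a}$ to the inverse of its immediate subexpression(s), so no structural induction is needed here — I would simply unfold $\underline{a}(\set{x})$ according to the matching line of Definition~\ref{defn:attackers-of} and then rewrite the condition ``$\exists s \in S$'' as a condition on the word fed to $\inv{b}$ (or $\inv{c}$). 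The recurring mechanism is to absorb whatever language factor $\underline{a}$ attaches to $\underline{b}(\set{x})$ into the set $S$ itself, via the corresponding adjoint language operation.

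For the base cases, $\underline{I}(\set{x}) = \set{x}$ gives $x \in \inv{I}(S) \iff x \in S$, while $\underline{\sigma_i}(\set{x}) = \set{\sigma_i}$ is independent of $x$, so $x \in \inv{\sigma_i}(S) \iff \sigma_i \in S$ and hence $\inv{\sigma_i}(S)$ is $\Sigma^{*}$ or $\emptyset$; reading $I$ as the identity, so that $\underline{I \cap \sigma_i}(S) = S \cap \set{\sigma_i}$, a one-line check shows $tl(S \cap \set{\sigma_i}) \cdot \Sigma^{*}$ returns exactly this, matching clause~\ref{Prop0}. Clause~\ref{Prop1} is immediate since the existential over $s$ distributes across the union $\underline{b}(\set{x}) \cup \underline{c}(\set{x})$. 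The ``right-handed'' clauses \ref{Prop2}, \ref{Prop4}, \ref{Prop6}, \ref{Prop7}, \ref{Prop8} all follow the same pattern: for $b \cdot K_{\Sigma}$ the requirement that some $s \in S$ factor as $s = w k$ with $w \in \underline{b}(\set{x})$ and $k \in K_{\Sigma}$ is equivalent to $\exists w \in S/K_{\Sigma}$ with $w \in \underline{b}(\set{x})$; for $b / K_{\Sigma}$ one is led to $S \cdot K_{\Sigma}$; for $b \cap K_{\Sigma}$ to $S \cap K_{\Sigma}$; for $tl(b)$, prepending an arbitrary symbol turns the condition into $\exists s' \in \Sigma \cdot S$; and for $hd(b)$, into $\exists s' \in (S \cap \Sigma) \cdot \Sigma^{*}$. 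Each of these is a routine quantifier manipulation.

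The hard part will be the two left-handed clauses, \ref{Prop3} and \ref{Prop5}. Unfolding $\underline{K_{\Sigma} \cdot b}(\set{x}) = K_{\Sigma} \cdot \underline{b}(\set{x})$ leads naturally to the set $\set{w : \exists k \in K_{\Sigma},\ kw \in S}$, that is, a \emph{left} quotient of $S$ by $K_{\Sigma}$, which is not among the primitive operations available to attack expressions (only the right quotient $/$ is). The plan is to realise this left quotient through reversal: since $kw \in S \iff rev(w)\,rev(k) \in rev(S)$, one obtains $\set{w : \exists k \in K_{\Sigma},\ kw \in S} = rev\big(rev(S)/rev(K_{\Sigma})\big)$, which is precisely clause~\ref{Prop3}; the symmetric computation for $K_{\Sigma} / b$, where the emerging condition is $\exists s \in S$ with $sv \in K_{\Sigma}$, yields $rev\big(rev(K_{\Sigma})/rev(S)\big)$ and hence clause~\ref{Prop5}. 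Verifying these two reversal identities carefully — and noting that $rev$, $/$, $\cdot$ and $\cap$ all preserve regularity, so that $\inv{a}$ remains a reasonable mapping as required for Theorem~\ref{thm:attack-comp} — is the only genuinely delicate step; the remaining clauses are mechanical.
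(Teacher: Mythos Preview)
Your proposal is correct and matches the paper's approach closely: the paper derives each identity by unfolding $\inv{a}(S) = \set{v : \exists u \in S,\ u \in \underline{a}(\set{v})}$ case-by-case exactly as you describe, including the reversal trick $rev(rev(S)/rev(K_\Sigma))$ to realise the left quotient in the $K_\Sigma \cdot b$ and $K_\Sigma / b$ cases. The only presentational difference is that the paper embeds these derivations inside the inductive proof of Theorem~\ref{thm:attack-comp} (where the induction is needed to establish regularity preservation), whereas you correctly observe that the nine equalities themselves are standalone and require no induction.
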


\begin{restatable}{thm}{thmclosureregular}
\label{thm:attack-comp}
Let $\tuple{\MC,a}$ be an {\sc afs} with $L(\MC)=\XC$ and $a\in\AC\EC(\Sigma)$. The mapping $\inv{a}:2^{\Sigma^{*}}\rightarrow2^{\Sigma^{*}}$, defined as  $\inv{a}(S) = \set{~v\in\Sigma^{*}~:~\exists~u\in S\mbox{ s.t. }u \in \underline{a}(v)}$ is closed wrt regular languages.
\end{restatable}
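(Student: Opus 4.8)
The plan is to prove the statement by structural induction on the attack expression $a$, using the recursive identities collected in Fact~\ref{conjectures-inverse} as the engine of the induction; establishing those identities is in fact the substance of the argument, since once they are in hand the closure claim is almost immediate. The point of each identity is that it rewrites $\inv{a}(S)$ as $\inv{b}(S')$ (or as a finite union of such terms) for the immediate subexpression(s) $b$ of $a$, where the auxiliary language $S'$ is obtained from $S$ by applying only operations — finite union, intersection with a regular language $K_{\Sigma}$, concatenation with $K_{\Sigma}$ or with $\Sigma$, right quotient by $K_{\Sigma}$, and reversal — under which regular languages are known to be closed (Fact~\ref{fact:properties}).

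First I would establish the identities of Fact~\ref{conjectures-inverse} one by one, by unfolding the definition of $\inv{a}$ together with the relevant clause for $\underline{a}$ from Definition~\ref{defn:attackers-of}, chasing the membership condition $v\in\inv{a}(S)\iff\exists u\in S.\,u\in\underline{a}(\set{v})$. The additive and suffix cases (items~\ref{Prop1}, \ref{Prop2}, \ref{Prop4}, \ref{Prop6}) are routine: one pushes the existential quantifier over $u\in S$ through the defining clause using the language-level definitions of $\cdot$, $/$ and $\cap$; for instance, for $a=b\cdot K_{\Sigma}$ one notes that $u'w\in S$ for some $w\in K_{\Sigma}$ holds exactly when $u'\in S/K_{\Sigma}$, which gives item~\ref{Prop2}. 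The $hd$/$tl$ cases (items~\ref{Prop7}, \ref{Prop8}) amount to observing that $u\in tl(\underline{b}(\set{v}))$ iff $\sigma u\in\underline{b}(\set{v})$ for some $\sigma\in\Sigma$, so that ranging $u$ over $S$ is the same as ranging over $\Sigma\cdot S$ (and analogously $(S\cap\Sigma)\cdot\Sigma^{*}$ for $hd$).

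I expect the main obstacle to be the prefix operations $a=K_{\Sigma}\cdot b$ and $a=K_{\Sigma}/b$ (items~\ref{Prop3}, \ref{Prop5}). The natural manipulation here would require \emph{removing a prefix} from the words of $S$, whereas the only quotient in our toolkit is the right (suffix) quotient. The trick is to conjugate by reversal: stripping a prefix $w\in K_{\Sigma}$ from $u$ is the same as stripping the suffix $rev(w)\in rev(K_{\Sigma})$ from $rev(u)$, which is exactly what the identities $\inv{a}(S)=\inv{b}(rev(rev(S)/rev(K_{\Sigma})))$ and $\inv{a}(S)=\inv{b}(rev(rev(K_{\Sigma})/rev(S)))$ encode. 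Verifying these requires care about which word plays the role of dividend versus divisor, but since both $rev$ and right quotient preserve regularity the resulting $S'$ remains regular.

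With the identities in place the induction closes cleanly. For the base cases, $a=I$ gives $\inv{a}(S)=S$ and $a=\sigma_i$ gives $\inv{a}(S)=tl(I\cap\sigma_i)\cdot\Sigma^{*}$ (item~\ref{Prop0}), both regular whenever $S$ is. For the inductive step, assume the hypothesis that $\inv{b}$ (and $\inv{c}$, when present) maps regular languages to regular languages. Given regular $S$, each identity expresses $\inv{a}(S)$ as $\inv{b}(S')$ or $\inv{b}(S')\cup\inv{c}(S')$, where $S'$ is regular by the closure properties cited above; the induction hypothesis then yields that $\inv{a}(S)$ is regular. Since every $a\in\AC\EC(\Sigma)$ is built by finitely many applications of these constructors (Definition~\ref{defn:attack-expr}), this shows that $\inv{a}$ is closed with respect to regular languages, as required; tracking the operations applied along the induction simultaneously produces the explicit expression $\inv{a}$ promised in the surrounding discussion.
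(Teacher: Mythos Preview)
Your proposal is correct and follows essentially the same approach as the paper: structural induction on the attack expression, establishing each identity of Fact~\ref{conjectures-inverse} by unfolding the definition of $\inv{a}$ together with the relevant clause for $\underline{a}$, and invoking closure of regular languages under the operations appearing in the resulting $S'$. The paper interleaves the verification of each identity with the regularity check case by case, whereas you describe these as two phases, but this is only an organizational difference; in particular, your identification of the reversal trick for the prefix cases $K_{\Sigma}\cdot b$ and $K_{\Sigma}/b$ matches the paper's Cases~2.3 and~2.5 exactly.
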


It remains now to show that the inverse of an attack expression actually provides the set of the arguments attacked by a set. 

\begin{restatable}{propn}{propnmapping}
Let $\tuple{\MC,a}$ be an {\sc afs} with $L(\MC)=\XC$ and $a\in\AC\EC(\Sigma)$. Define the mapping, 
$\pi_{a}^{+}~:~2^{\XC}\rightarrow2^{\XC}$ by
\[
\pi_{a}^{+}(S)\mbox{ $=$ } \set{~v\in\XC~:~\exists~u\in S\mbox{ s.t. }u\rightarrow_{a}v}
\]
It holds that $\pi_{a}^{+}(S) = \inv{a}_{\XC}(S)$.
\end{restatable}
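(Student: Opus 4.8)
The plan is to prove the equality by directly unfolding the definitions of both sides and observing that they reduce to the same set-builder condition. The only genuinely nontrivial ingredient is the characterization of $\inv{a}$ established in Theorem~\ref{thm:attack-comp}; everything else is definitional.

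First I would rewrite the left-hand side. By Definition~\ref{defn:generic-inf-dfa}, $u\rightarrow_a v$ holds iff $u\in\underline{a}_{\XC}(\set{v})$, and by the definition of the restriction $\mu_{\XC}(S)=\mu(S)\cap\XC$ this is equivalent to $u\in\underline{a}(\set{v})$ together with $u\in\XC$. Substituting into the definition of $\pi_a^{+}$ gives
\[
\pi_a^{+}(S)~=~\set{v\in\XC~:~\exists~u\in S\text{ s.t. }u\in\underline{a}(\set{v})\text{ and }u\in\XC}.
\]
Since the domain of $\pi_a^{+}$ is $2^{\XC}$, every $u\in S$ already satisfies $u\in\XC$, so the side condition $u\in\XC$ is redundant and can be dropped.

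Next I would unfold the right-hand side. By the definition of the restriction, $\inv{a}_{\XC}(S)=\inv{a}(S)\cap\XC$, and by Theorem~\ref{thm:attack-comp} we have $\inv{a}(S)=\set{v\in\Sigma^{*}~:~\exists~u\in S\text{ s.t. }u\in\underline{a}(\set{v})}$. Intersecting with $\XC$ yields exactly
\[
\inv{a}_{\XC}(S)~=~\set{v\in\XC~:~\exists~u\in S\text{ s.t. }u\in\underline{a}(\set{v})},
\]
which coincides with the simplified form of $\pi_a^{+}(S)$ obtained above. Hence $\pi_a^{+}(S)=\inv{a}_{\XC}(S)$ for every $S\subseteq\XC$.

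The computation carries no real obstacle: the two mappings are designed to be the \virg{attacked-by} counterpart of the \virg{attacks} mapping, and the substantive content — that $\inv{a}$ correctly inverts the membership relation $u\in\underline{a}(\set{v})\Leftrightarrow v\in\inv{a}(\set{u})$ and preserves regularity — is already discharged by Theorem~\ref{thm:attack-comp}. The one point requiring care is to keep track of the two intersections with $\XC$ (one hidden inside $\underline{a}_{\XC}$ on the left, one explicit in $\inv{a}_{\XC}$ on the right) and to confirm that the extra constraint $u\in\XC$ appearing on the left is automatically satisfied because $S\subseteq\XC$; this is precisely what makes the two side conditions line up exactly, so that the set-builder descriptions become literally identical.
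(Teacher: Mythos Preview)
Your proof is correct and follows essentially the same approach as the paper's own proof: both unfold $u\rightarrow_a v$ to $u\in\underline{a}(\set{v})\cap\XC$, drop the redundant $u\in\XC$ using $S\subseteq\XC$, and then identify the resulting set with $\inv{a}(S)\cap\XC=\inv{a}_{\XC}(S)$. The paper's version is just a one-line compression of what you wrote.
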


By Theorem \ref{thm:attack-comp} and the regularity of $\XC$ it is easy to see that $\pi_{a}^{+}$ is additive and preserves regularity.

\begin{examplecont}[\ref{ex:chains}]
Completing our example, we can now derive the mapping $\pi_{a}^{+}$ for $AF_U$, $AF_D$, $AF_M$ and $AF_R$.

As to $AF_U$ from the attack expression $tl(I)$ applying Fact \ref{conjectures-inverse}.\ref{Prop8} (and \ref{conjectures-inverse}.\ref{Prop0} for $I$) we get $\inv{a}(S)= I(\Sigma \cdot S) = \Sigma \cdot S = 0 \cdot S$.
As to $AF_D$, from the attack expression $I \cdot 0$ applying Fact \ref{conjectures-inverse}.\ref{Prop2} (and \ref{conjectures-inverse}.\ref{Prop0} for $I$) we get $\inv{a}(S) = I(S / 0) = S/0$ which (in view of $S \subseteq 0\cdot0^{*}$) is equivalent to $tl(S)$.

As to $AF_M$, given the attack expression  $a_M =((I \cap (0 \cdot (00)^{*})) \cdot 0) \cup tl(I \cap ((00) \cdot (00)^{*}))$ by Fact \ref{conjectures-inverse}.\ref{Prop1} we can examine separately the two terms $b = ((I \cap (0 \cdot (00)^{*})) \cdot 0)$ and $c = tl(I \cap ((00) \cdot (00)^{*}))$.
As to $\inv{b}$, from  \ref{conjectures-inverse}.\ref{Prop2} we get $\inv{d}(S/0)$ with $d = (I \cap (0 \cdot (00)^{*}))$. Applying then  \ref{conjectures-inverse}.\ref{Prop6} (and \ref{conjectures-inverse}.\ref{Prop0} for $I$) we get $\inv{b}(S) = (S / 0) \cap (0 \cdot (00)^{*})$.
As to $\inv{c}$, from Fact \ref{conjectures-inverse}.\ref{Prop8} we get $\inv{e}(\Sigma \cdot S)$ with $e= I \cap ((00) \cdot (00)^{*})$.
Applying again \ref{conjectures-inverse}.\ref{Prop6} and \ref{conjectures-inverse}.\ref{Prop0} and taking into account $\Sigma = \set{0}$ we get $\inv{c}(S)= (0 \cdot S) \cap ((00) \cdot (00)^{*})$.
Summing up $\inv{a_M}(S)= ((S / 0) \cap (0 \cdot (00)^{*})) \cup ((0 \cdot S) \cap ((00) \cdot (00)^{*}))$.

The case of $AF_R$ is analogous, yielding $\inv{a_R}(S) = ((S / 0) \cap (0 \cdot (000)^{*})) \cup ((0 \cdot S) \cap ((00) \cdot (000)^{*})) \cup ((S / 0) \cap (00 \cdot (000)^{*}))$.
\end{examplecont}

\subsection{Representation of finite {\sc af}s and combination of {\sc af} specifications}

The previous sections show how the proposed {\sc af} specification mechanism can deal with (up to now simple) infinite frameworks. One may then wonder whether this mechanism is suitable to describe \emph{finite} {\sc af}s as well or its structure is somehow bounded to the infinite case.
In fact this problem does not arise: any finite {\sc af} $\tuple{\XC,\AC}$ can be easily described via the mechanisms proposed in this paper.
Noting that $\XC$ is finite simply choose $\Sigma=\XC$ as the underlying alphabet, and let $\MC$ be the trivial associated automaton.
The set $\AC$ is a finite subset of $\Sigma \times \Sigma$ and treated directly as a regular language $L_A~=~\set{x\cdot y~:~\tuple{x,y}\in\AC}$. We then have $a\in\AC\EC(\Sigma)$ specified by
$a = hd((\Sigma\cdot I)\cap L_A)$, giving $\pi_{a}^{-}(S)~=~hd((\Sigma\cdot S)\cap L_A)$ and (after some manipulation)
$\pi_{a}^{+}(S)~=~tl((S \cdot\Sigma)\cap L_A)$.

For example if $\XC~=~\set{w,x,y,z}$ and 
$\AC~=~\set{\tuple{w,x},\tuple{x,y},\tuple{y,z},\tuple{z,w}}$ then
$L_A~=~\set{wx,xy,yz,zw}$ with, for example,
\[
\begin{array}{l}
\begin{array}{lcl}
\pi_{a}^{-}(\set{w,y})&\mbox{ $=$ }&hd((\set{w,x,y,z}\cdot \set{w,y}) \cap\set{wx,xy,yz,zw})\\
&\mbox{ $=$ }&hd(\set{ww,wy,xw,xy,yw,yy,zw,zy}\cap\set{wx,xy,yz,zw})\\
&\mbox{ $=$ }&hd(\set{xy,zw})~~=~~\set{x,z}
\end{array}\\
\\
\begin{array}{lcl}
\pi_{a}^{+}(\set{w,y})&\mbox{ $=$ }&tl((\set{w,y}\cdot\set{w,x,y,z})\cap\set{wx,xy,yz,zw})\\
&\mbox{ $=$ }&tl(\set{ww,yw,wx,yx,wy,yy,wz,yz}\cap\set{wx,xy,yz,zw})\\
&\mbox{ $=$ }&tl(\set{wx,yz})~~=~~\set{x,z}
\end{array}
\end{array}
\]

Having shown that finite {\sc af}s do not raise, \emph{per se} any expressiveness concern, a further important question has to be addressed: one may wonder whether it is possible to give the specification of an {\sc af} resulting from the combination of a finite subframework with one or more infinite subframeworks, with the different subframeworks linked together by finite attack relations.
This kind of combined specification is particularly relevant in practice.
To have an example, consider again the frameworks $AF_M$ and $AF_R$, concerning cases of temporal projection with initial information acquired from different sources at the same time.
Clearly, one has also to cover the case where information is acquired from different sources at different times. 
As a very simple example, consider a slight modification of the situation represented by $AF_R$, so that information from the third more reliable source, $C$, is acquired with some delay (to keep things simple, let say one time instant later) wrt the information from sources $A$ and $B$. 
This situation could be represented with a framework composed by two subframeworks, a finite one, consisting of two mutually attacking arguments corresponding to the information initially acquired from $A$ and $B$, and an infinite one, with the same structure as $AF_R$.
More generally, frameworks with this kind of structure correspond to cases where a reasoning (or dialogue) process enters a non terminating iterative behavior after some initial non iterative steps, which is clearly a more general (and possibly more common) situation wrt the cases of \virg{iterative behavior from the beginning} we have considered in our simple illustrative examples.
We will now show how this kind of structure can be captured in our formalism.

Let $AF_0=\tuple{\XC_0,\AC_0}$ a finite {\sc af} and $AF_1=\tuple{\XC_1,\AC_1}, \ldots, AF_n=\tuple{\XC_n,\AC_n}$ a finite sequence of infinite frameworks with specifications $\tuple{\MC_1, a_1}, \ldots, \tuple{\MC_n, a_n}$ such that for each $1 \leq i \leq n$ $L(\MC_i) = \XC_i \subseteq \Sigma_{i}^{*}$ and $a_i \in \AC\EC(\Sigma_i)$.

Letting $\Sigma_0=\XC_0$ we assume without loss of generality that the alphabets used for the different frameworks are pairwise disjoint namely, for $0 \leq i \leq n$,  $0 \leq j \leq n$, $i \neq j$, $\Sigma_i \cap \Sigma_j = \emptyset$.

Consider now the problem of specifying a framework $AF_{\cup}=\tuple{\XC_{\cup},\AC_{\cup}}$ with the following structure:
\begin{itemize}
\item $\XC_{\cup} = \bigcup_{i \in \set{0, \ldots, n}}\XC_i$;
\item $\AC_{\cup} = \bigcup_{i \in \set{0, \ldots, n}}\AC_i \cup \bigcup_{i,j \in \set{0, \ldots, n}, i \neq j} \AC_{i,j}$, where $\AC_{i,j}$ is an arbitrary \emph{finite} subset of $\XC_i \times \XC_j$.
\end{itemize}

In words, $AF_{\cup}$ includes all the subframeworks $AF_0, \ldots AF_n$ with their \virg{internal} attack relations $\AC_0, \ldots \AC_n$ plus new arbitrary  \emph{finite}  attack relations $\AC_{i,j}$ linking each pair of subframeworks and representing the additional attacks from elements of $\XC_i$ to elements of $\XC_j$.

The question is now how to derive the specification of $AF_{\cup}$ from the specifications of the subframeworks $AF_0, \ldots AF_n$ and from the new attacks $\AC_{i,j}$.
As to the reference alphabet, clearly $\Sigma_{\cup} = \bigcup_{i \in \set{0, \ldots, n}} \Sigma_i$.
Thanks to the hypothesis of disjointness of the alphabets $\Sigma_i$, we know that also the sets of arguments $\XC_i$ are disjoint and we can safely define $\XC_{\cup} = \bigcup_{i \in \set{0, \ldots, n}} \XC_i$.
As $\XC_{\cup}$ is the union of a set of regular languages it is a regular language too (Fact \ref{fact:closure-props}, \ref{subsection:flt}), whose automaton $\MC_{\cup}$ can be effectively derived from the automata $\MC_i$ (Fact \ref{fact:properties}.b, \ref{subsection:flt}).

As to the attack expression, it has to preserve, in the new framework, the attack relations $\AC_0, \ldots \AC_n$ of all the subframeworks and include the new attacks $\AC_{i,j}$.

As to $\AC_0$, the corresponding attack expression in $AF_{\cup}$ is exactly the same as for $\AC_0$ in isolation.
Letting $L_0~=~\set{x\cdot y~:~\tuple{x,y}\in\AC_0}$, the attack expression\footnote{We remark that using $\Sigma_{\cup}$ instead of $\Sigma_0$ would give the same result.} for the finite subframework is $\hat{a}_{0} = hd((\Sigma_0 \cdot I) \cap L_0)$.

As to the attack relations $\AC_1, \ldots \AC_n$ of the infinite subframeworks $AF_1, \ldots AF_n$, each relation $\AC_i$ is described by an attack expression $a_i$ and we need to devise a corresponding attack expression $\hat{a}_{i}$ which preserves exactly the same attacks between the arguments of $AF_i$ in the context of $AF_{\cup}$, i.e. for any set $S \subseteq \Sigma_{\cup}$ it must hold that $\underline{\hat{a}}_{i}(S) = \underline{\hat{a}}_{i}(S \cap \XC_i) = \underline{a}_{i}(S \cap \XC_i)$.
It can be seen that such an expression $\hat{a}_{i}$ can be obtained from $a_i$ by applying two simple replacement operations concerning the basic elements $\sigma$ and $I$ (Rules 1 and 2 of Definition \ref{defn:attack-expr}):
\begin{itemize}
\item each occurrence of $\sigma$ (with $\sigma$ an element of $\Sigma_i$) in $a_i$ is replaced by $hd(\sigma \cdot(I \cap K_i))$ within $\hat{a}_{i}$, where $K_i$ is the regular expression specifying $\XC_i$;
\item each occurrence of $I$ in $a_i$ is replaced by $(I \cap K_i)$ within $\hat{a}_{i}$, where $K_i$ is as above.
\end{itemize}

It is immediate to see that if the attack expression $a_i$ consists exactly of $I$ or $\sigma$, the above replacements ensure that $\hat{a}_{i}$ satisfies the desired property.
By inspection of the rules 3-5 of Definition \ref{defn:attack-expr} it is also easy to see that the desired property is preserved in more articulated expressions, constructed by repeated application of these rules starting from the basic elements, without requiring any further modification.

Let us turn now to the specification of the additional finite attack relations $\AC_{i,j}$ ($i \neq j$) between subframeworks. 
First, observe that each $\AC_{i,j}$ can be specified directly as a regular language $L_{i,j} = \set{ x \cdot y : \tuple{x,y} \in \AC_{i,j}}$ and let again $K_i$ be the regular expression corresponding to $\XC_i$.
Then, note that the expression $(K_i \cdot I) \cap L_{i,j}$ is useful to select words in $L_{i,j}$ when appropriate and that the quotient operator wrt $K_j$ can then be applied to extract the subword referring to the attacker. Finally, we have to ensure that the extracted subword belongs to $K_i$ which can be obtained by an intersection operation.
In summary, the attack expression specifying an attack relations $\AC_{i,j}$ ($i \neq j$) is $\hat{a}_{i,j} = (((K_i \cdot I) \cap L_{i,j}) / K_j) \cap K_i$.

Putting together the various subexpressions we have devised above, the complete attack expression $a_{\cup}$ for $AF_{\cup}$ is given by $a_{\cup} = \bigcup_{i \in \set{0,\ldots,n}} \hat{a}_{i}  \cup \bigcup_{i,j \in \set{0,\ldots,n}, i\neq j} \hat{a}_{i,j}$.

\section{Computing with {\sc af} Specifications}\label{sec:computing}

In this section we show\footnote{The proofs relevant to this section are given in \ref{sec:proofs-sect-computing}.} that a number of problems that are well-known to be efficiently, i.e. polynomial time,
decidable in finite {\sc af}s may be effectively handled within the context of {\sc af} specifications, i.e. there exist procedures which are certain to produce the answer in a finite number of steps.

Theorem \ref{thm:afs-ops} 
provides the main result of this section.

\begin{restatable}{thm}{effectivealghs}
\label{thm:afs-ops}
Let $\tuple{\MC,a}$ be an {\sc afs}, 
with induced argumentation framework $\tuple{\XC, \AC}$ and
$S\subseteq \XC$. 
The following problems are decidable.
\begin{enumerate}
\item[a.]
Deciding if the set $S$ is conflict free 
\item[b.]
For $x\in\XC$, deciding if $x\in\FC(S)$, i.e. whether $x$ is acceptable to $S$ 
\item[c.]
Deciding if $S\in\EC_{adm}(\tuple{\XC,\AC})$, i.e. whether $S$ is admissible 
\item[d.]

Deciding if $S\in\EC_{stab}(\tuple{\XC,\AC})$, i.e. whether $S$ is a stable extension 
\item[e.]
Constructing a {\sc dfa} accepting $\FC(S) = \XC \setminus \pi_{a}^{+}(\XC \setminus \pi_{a}^{+}(S))$, i.e. the set of arguments acceptable to $S$.
\item[f.]
Deciding if $S\in\EC_{comp}(\tuple{\XC,\AC})$, i.e. whether $S$ is a complete extension 
\end{enumerate}
\end{restatable}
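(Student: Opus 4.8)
The plan is to reduce every item to a small fixed toolkit of effective operations on regular languages. The two cornerstones are the maps $\pi_{a}^{-}(S)=\underline{a}_{\XC}(S)$ and $\pi_{a}^{+}(S)=\inv{a}_{\XC}(S)$: by Theorem~\ref{thm:reasonableness} together with Proposition~\ref{prop:res-restrict}, and by Theorem~\ref{thm:attack-comp}, both of these preserve regularity, and since the underlying proofs are constructive, from a {\sc dfa} for a regular $S\subseteq\XC$ one can effectively build {\sc dfa}s for $\pi_{a}^{-}(S)$ and $\pi_{a}^{+}(S)$. I shall assume throughout that the input $S$ is presented as a regular language (a {\sc dfa}), as is standard in this setting. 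Combining these maps with the facts that regular languages are effectively closed under $\cup$, $\cap$, complementation and set difference, and that emptiness and equivalence of {\sc dfa}s are decidable (see \ref{subsection:flt}), reduces each problem to evaluating a finite Boolean combination of such tests.

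Concretely, I would first record the following set-theoretic characterisations, each verified directly from Definitions~\ref{defn:standard-afs} and~\ref{defn:sigma-choice}. For (a), $S$ is conflict free iff no member of $S$ attacks a member of $S$, i.e. iff $\pi_{a}^{-}(S)\cap S=\emptyset$. For (b), $x$ is acceptable to $S$ iff every attacker of $x$ is itself attacked by $S$, i.e. iff $\pi_{a}^{-}(\set{x})\subseteq\pi_{a}^{+}(S)$ (equivalently $\pi_{a}^{-}(\set{x})\setminus\pi_{a}^{+}(S)=\emptyset$), where $\set{x}$ is a singleton and hence regular. For (c), additivity of $\pi_{a}^{-}$ gives that all of $S$ lies in $\FC(S)$ iff $\pi_{a}^{-}(S)\subseteq\pi_{a}^{+}(S)$, so $S$ is admissible iff it is conflict free (as in (a)) and $\pi_{a}^{-}(S)\subseteq\pi_{a}^{+}(S)$. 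For (d), since $S^{+}=\pi_{a}^{+}(S)$, stability amounts to conflict freeness together with $\pi_{a}^{+}(S)=\XC\setminus S$. For (f), completeness is conflict freeness together with $S=\FC(S)$. Each right-hand side is a finite Boolean combination of the regularity-preserving operations above followed by an emptiness or equivalence test, hence decidable.

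The one item requiring genuine argument rather than mechanical translation is (e), the identity $\FC(S)=\XC\setminus\pi_{a}^{+}(\XC\setminus\pi_{a}^{+}(S))$, which also supplies the {\sc dfa} used in the equivalence test for (f). I would prove it by a membership chase: $x\notin\pi_{a}^{+}(\XC\setminus\pi_{a}^{+}(S))$ holds iff no argument lying outside $\pi_{a}^{+}(S)$ attacks $x$, i.e. iff every attacker of $x$ belongs to $\pi_{a}^{+}(S)$ and is therefore attacked by $S$; by Definition~\ref{defn:sigma-choice} this is exactly the condition that $x$ be acceptable to $S$, so $x\in\FC(S)$. Once this identity is in hand, the {\sc dfa} for $\FC(S)$ is obtained by the effective pipeline: build a {\sc dfa} for $\pi_{a}^{+}(S)$, complement it relative to $\XC$, apply $\pi_{a}^{+}$ again, and complement relative to $\XC$ once more; every stage is effective by the constructive form of Theorem~\ref{thm:attack-comp} and the closure properties of regular languages.

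The main obstacle I anticipate is not conceptual but lies in (e): verifying the nested-complement formula for $\FC(S)$ and confirming that the two applications of $\pi_{a}^{+}$ interleaved with complementation genuinely compose into an effective {\sc dfa} construction, being careful that each intermediate complement is taken relative to $\XC$ rather than to $\Sigma^{*}$. All remaining parts then follow as short corollaries, since they merely feed $\pi_{a}^{-}$, $\pi_{a}^{+}$ and the $\FC(S)$ automaton into decidable emptiness and equivalence tests.
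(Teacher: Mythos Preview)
Your proposal is correct and matches the paper's own proof essentially step for step: the same reductions to emptiness/inclusion tests via $\pi_{a}^{-}$ and $\pi_{a}^{+}$, the same identity $\FC(S)=\XC\setminus\pi_{a}^{+}(\XC\setminus\pi_{a}^{+}(S))$ established by the same membership chase, and the same derivation of (f) from (a) and (e). The only cosmetic difference is in (d), where the paper checks $S\cup\pi_{a}^{+}(S)=\XC$ rather than $\pi_{a}^{+}(S)=\XC\setminus S$, but under conflict-freeness these are equivalent and both are decidable equivalence tests on regular languages.
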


The algorithms described in the proof of Theorem~\ref{thm:afs-ops} provide full solutions to the decision problems listed above and are applicable irrespective of whether $\tuple{\MC,a}$ gives rise to finitary frameworks or not. 
For the construction of the grounded extension we obtain a result applicable to finitary argumentation frameworks only. 
In fact, in \cite{dung:1995} it is shown that, letting $\FC^{1}(S)=\FC(S)$, and for $i > 1$ $\FC^{i}(S)=\FC(\FC^{i-1}(S))$, for a finitary argumentation framework the grounded extension is given by $\bigcup_{i \geq 1} \FC^{i}(\emptyset)$.

Using this result we can in some cases obtain (a representation of) the grounded extension through Algorithm \ref{algorithm:grounded}.

\begin{algorithm}[h]
\caption{Computing $\EC_{gr}(\tuple{\XC,\AC})$ from (finitary) {\sc afs} $\tuple{\MC,a}$ }\label{algorithm:grounded}
\begin{algorithmic}[1]
\STATE $X_0~:=L(\MC)$;
\STATE $Y_{0}~:=~X_0\setminus\pi_{a}^{+}(X_0)$;
\STATE $G~:=~Y_0$;
\STATE $k~:=~0$;
\WHILE{$Y_k\not=\emptyset$}
\STATE $X_{k+1}~:=~X_{k}\setminus(Y_{k}\cup\pi_{a}^{+}(Y_{k}))$;
\STATE $Y_{k+1}~:=~X_{k+1}\setminus\pi_{a}^{+}(X_{k+1})$;
\STATE $G~:=~G\cup Y_{k+1}$;
\STATE $k~:=~k+1$;
\ENDWHILE
\STATE \textbf{report} $\MC_G$ where $L(\MC_G)=G$.
\end{algorithmic}
\end{algorithm}

Algorithm~\ref{algorithm:grounded} effectively reproduces the sequence of iterations involving:
\begin{itemize}
\item identifying
unattacked arguments in $\XC$ (in l.2 and l.7), where we note that $Y_0$ computes $\FC(\emptyset)$, 
i.e. $\XC\setminus\pi_{a}^{+}(\XC)$ since
$\pi^{+}_{a}(\emptyset)=\emptyset$;

\item adding these to the extension being accumulated (in l.3 and l.8);

\item repeating this process on the {\sc af} induced by the arguments remaining after removing these and those
they attack (in l.6). 
\end{itemize}

The process terminates when the set of unattacked arguments is empty.
Let us exemplify the application of Algorithm~\ref{algorithm:grounded} to $AF_R$.
Starting with the set of unattacked arguments, we get $Y_{0} = 000 \cdot (000)^{*}$ at l.2.
Then, given that $\pi_{a}^{+}(Y_{0}) = 00 \cdot (000)^{*}$  in the first iteration of the {\bf while} loop we get $X_{1} = 0 \cdot (000)^{*}$ at l.6 and, given that $\pi_{a}^{+}(X_{1}) = 00 \cdot (000)^{*}$, we get $Y_{1}=X_{1}$ at l.7 and $G = (000 \cdot (000)^{*}) \cup (0 \cdot (000)^{*})$ then in the next iteration $X_{2}=Y_{2}=\emptyset$ and the algorithm terminates.

Algorithm~\ref{algorithm:grounded}, however, does not guarantee termination, since it terminates only in the cases where there is $k$ such that $\FC^{k+1}(\emptyset)= \FC^{k}(\emptyset)$. For instance in the framework $AF_U$ of Example \ref{ex:chains}, in which for $i\geq1$, $\FC_i~=\set{~0^{2k-1}~:~1\leq k\leq i}$, no such $k$ exists. 

In such cases we can use properties of the operations involved in defining attack expressions together with known identities
for regular expressions to derive the form taken by arguments in the grounded extension directly. In the case of $AF_U$ we recall that,
\[
\begin{array}{lcl}
\XC&\mbox{ $=$ }&\set{~0^i~:~i\geq1}\\
a&\mbox{ $=$ }&tl(I)\\
\pi_{a}^{-}(S)&\mbox{ $=$ }&tl(S)\\
\pi_{a}^{+}(S)&\mbox{ $=$ }&0\cdot S
\end{array}
\]
For notational ease we write $\overline{~T~}$ for $\XC\setminus T$ so that $\FC_0=\emptyset$, $\FC_i=\FC(\FC_{i-1})$
and
\[
\FC(T)~~=~~\overline{~\pi_{a}^{+}(~\overline{\pi_{a}^{+}(T)}~)~}
\]
For $AF_U$ we get
\[
\begin{array}{lcl}
\FC_1&\mbox{ $=$ }&\overline{~0\cdot\overline{0\cdot\emptyset}~}\\
&\mbox{ $=$ }&\overline{~0\cdot\XC~}\\
&\mbox{ $=$ }&\set{0}\\
\FC_2&\mbox{ $=$ }&\overline{~0\cdot\overline{0\cdot0}~}\\
&\mbox{ $=$ }&\overline{~\set{0\cdot0}\cup\set{0\cdot0\cdot0\cdot0\cdot0^k~:~k\geq0}}\\

&\mbox{ $=$ }&\set{0,0\cdot0\cdot0}\\
&\mbox{ $\ldots$ }&\\
\FC_{i+1}&\mbox{ $=$ }&\overline{~0\cdot\overline{\set{0^{2k}~:~1\leq k\leq i}}~}\\
&\mbox{ $=$ }&\overline{~0\cdot\set{\set{0^{2k-1}~:~1\leq k\leq i}\cup\set{0^{2i+1}\cdot0^j~:~j\geq0}}}\\
&\mbox{ $=$ }&\overline{~\set{0^{2k}~:~1\leq k\leq i}\cup\set{0^{2i+2}\cdot0^{j}~:~j\geq0}}\\
&\mbox{ $=$ }&\set{0^{2k-1}~:~1\leq k\leq i+1}
\end{array}
\]
So that the least fixed-point is the infinite set $\set{0^{2i-1}~:~i\geq1}$: note that, by the analysis given, 
this \emph{is} a fixed-point and it is straightforward to show that no strict subset defines a fixed-point.

The analysis in the infinite case of those problems which are computationally intractable within finite {\sc af}s (cf. the summary presented in Fact~\ref{fact:alg-complexity}) is left to future work.
As a first step in this direction, we can show that, restricting again to finitary frameworks, limited decision procedures are possible for two of these problems. Note that we currently have no result on whether these problems are decidable: Theorem \ref{thm:semi-dec} ensures that, if this is not the case, they are at least semi-decidable.

\begin{restatable}{thm}{thmsemidec}
\label{thm:semi-dec}
Let $\tuple{\MC,a}$ be an {\sc afs} in which the induced argumentation framework $\tuple{\XC,\AC}$ is finitary. The following problems
are all \emph{semi-decidable}.
\begin{enumerate}
\item[a.]
Determining if $\EC_{stab}(\tuple{\XC,\AC})=\emptyset$.
\item[b.]
Given a \emph{finite} $R\subset\XC$, determining if $\forall~w\in R~\neg\mbox{{\sc ca}}_{adm}(\tuple{\XC,\AC},w)$.
\end{enumerate}
\end{restatable}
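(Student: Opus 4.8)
The plan is to reduce both problems to the detection of an \emph{unsatisfiable finite fragment} of an infinite set of propositional clauses, exploiting the fact that finitariness forces every clause to be finite so that the compactness theorem for propositional logic applies. The common idea is that stable (resp.\ admissible) sets are exactly the models of a certain clause set over the variables $\set{X_w : w\in\XC}$ (reading $X_w$ as ``$w$ is in the set''), that compactness turns global unsatisfiability into a finite witness, and that this witness can be searched for by a dovetailed enumerate-and-test procedure.

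For part (a) I would encode stable extensions by a conflict-freeness clause $\neg(X_u\wedge X_w)$ for every attack $u\rightarrow_{a} w$, together with a coverage clause $X_w\vee\bigvee_{u\rightarrow_{a} w}X_u$ for every $w\in\XC$. Because $\tuple{\XC,\AC}$ is finitary, the attacker set $\pi_{a}^{-}(\set{w})=\underline{a}_{\XC}(\set{w})$ is finite, so each coverage clause is a \emph{finite} disjunction and the whole clause set consists of finite clauses whose models are precisely the stable extensions: conflict-freeness gives $S^{+}\subseteq\XC\setminus S$ and the coverage clauses give $\XC\setminus S\subseteq S^{+}$, i.e.\ $S^{+}=\XC\setminus S$. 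By compactness this clause set is satisfiable iff every finite subset is, whence $\EC_{stab}(\tuple{\XC,\AC})=\emptyset$ iff some finite subset is unsatisfiable. The semi-decision procedure enumerates finite subsets of the clause set and tests each for (finite, hence decidable) satisfiability, halting and accepting as soon as an unsatisfiable fragment appears. This enumeration is effective: $\XC=L(\MC)$ is recursive and its members can be listed, and for each listed $w$ the set $\underline{a}_{\XC}(\set{w})$ is a regular language (Theorem~\ref{thm:reasonableness} and Proposition~\ref{prop:res-restrict}) that is finite by hypothesis, so a {\sc dfa} for it can be built and its finitely many words enumerated, yielding a computable listing of all conflict and coverage clauses.

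For part (b) I would proceed analogously with the clause set describing admissible sets containing a fixed $w$: the conflict clauses as above, the unit clause $X_w$, and, for every $y\in\XC$ and every attacker $z$ of $y$, a defense clause $\neg X_y\vee\bigvee_{t\rightarrow_{a} z}X_t$ encoding $S\subseteq\FC(S)$. Finitariness again makes each defense clause finite, so by compactness $w$ is credulously accepted wrt admissibility iff every finite fragment is satisfiable; hence $\neg\mbox{{\sc ca}}_{adm}(\tuple{\XC,\AC},w)$ holds iff some finite fragment is unsatisfiable, which the same enumerate-and-test procedure semi-decides. Since $R$ is finite, the target predicate $\forall w\in R\ \neg\mbox{{\sc ca}}_{adm}(\tuple{\XC,\AC},w)$ is a finite conjunction of semi-decidable predicates: I would dovetail the $\card{R}$ procedures and accept once all of them have accepted, which is again semi-decidable.

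The main obstacle, and the point where the finitary hypothesis is indispensable, is guaranteeing that every constraint is a \emph{finite} clause: without finitariness the coverage and defense requirements become infinite disjunctions over an argument's attackers, and compactness no longer reduces global (un)satisfiability to a finite witness. A secondary issue to verify with care is the effectiveness of the clause enumeration, namely that the finite attacker set of each argument is genuinely computable from the {\sc afs} through the regular-language constructions established earlier; granting this, the dovetailed search over finite clause fragments delivers the required semi-decision procedures.
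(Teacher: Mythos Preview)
Your proposal is correct and follows essentially the same route as the paper: encode the target property as an infinite set of finite propositional clauses (conflict-freeness plus coverage for stability; conflict-freeness plus defense plus membership for admissibility), invoke compactness to reduce unsatisfiability to a finite witness, and search for that witness by an effective enumeration. The one minor organizational difference is in part~(b): the paper uses a single clause set with the disjunctive clause $\bigvee_{w\in R} X_w$ (so that unsatisfiability directly says no admissible set meets $R$), whereas you run $|R|$ separate semi-decision procedures, one per $w\in R$ with the unit clause $X_w$, and dovetail them; both are correct and the difference is cosmetic.
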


We note that the restriction to \emph{finitary} frameworks (and finite subsets of $\XC$ in the second part) is needed: 
without this the method used in proving Thm.~\ref{thm:semi-dec} could not be applied.

To conclude this section, as some of the results we provided rely on the condition that an {\sc af} specification gives rise to a finitary argumentation framework, one is interested in conditions ensuring that this holds.
We provide an easy sufficient condition to this purpose, namely the absence of the $*$ operator in the attack expression, leaving further investigations on this specific question for future work.

\begin{restatable}{propn}{propnfinitary}
\label{propn:finitary}
Let $\tuple{\MC,a}$ be an {\sc afs} with induced argumentation framework $\tuple{\XC, \rightarrow_{a}}$.
Let $\KC~=~\set{K_{\Sigma}^{1},~K_{\Sigma}^{2},\ldots,K_{\Sigma}^{r}}$ be the set of regular expressions used in defining $a$, i.e. with operations $b\cdot K_{\Sigma}$, $K_{\Sigma}\cdot b$, $K_{\Sigma}/b$, 
$b/K_{\Sigma}$ and $b\cap K_{\Sigma}$. 
If \emph{no} $K\in\KC$ uses the $*$ operator then $\tuple{\XC, \rightarrow_{a}}$ is finitary.
\end{restatable}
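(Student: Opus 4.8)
The plan is to reduce the finitary property to a finiteness statement about the maps $\underline{p}$ applied to singletons, and then to establish that statement by structural induction on the attack expression. By Definition \ref{defn:generic-inf-dfa}, the set of attackers of an argument $v \in \XC$ is exactly $\underline{a}_{\XC}(\set{v}) = \underline{a}(\set{v}) \cap \XC \subseteq \underline{a}(\set{v})$. Hence it suffices to prove that $\underline{a}(\set{v})$ is a \emph{finite} set for every $v$: then each argument has only finitely many attackers and $\tuple{\XC,\rightarrow_{a}}$ is finitary in the sense of Definition \ref{defn:standard-afs}.

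First I would record the elementary fact that any regular expression over $\Sigma$ built from symbols using only the operations $\set{+,\cdot}$ --- that is, one that uses no $*$ operator --- denotes a \emph{finite} language; this follows by an immediate induction on its construction, since the union and the concatenation of two finite languages are finite. Under the hypothesis of the proposition every $K\in\KC$ occurring in $a$ is therefore a finite language.

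The core of the argument is the following claim, which I would prove by structural induction on $p\in\AC\EC(\Sigma)$ following the cases of Definition \ref{defn:attack-expr}: \emph{if $S\subseteq\Sigma^{*}$ is finite and every regular expression $K_{\Sigma}$ occurring in $p$ uses no $*$ operator (hence is finite), then $\underline{p}(S)$ is finite.} For the base cases, $\underline{\sigma_i}(S)=\set{\sigma_i}$ is a singleton and $\underline{I}(S)=S$ is finite by hypothesis. The union case $p=b\cup c$ is a union of two finite sets by the inductive hypothesis. For $p=hd(b)$ the value is a subset of the finite alphabet $\Sigma$, while for $p=tl(b)$ each word of $\underline{b}(S)$ contributes at most one tail, so $\card{tl(\underline{b}(S))}\leq\card{\underline{b}(S)}$. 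The intersection case $p=b\cap K_{\Sigma}$ yields a subset of the finite set $\underline{b}(S)$, and the two concatenation cases $K_{\Sigma}\cdot b$ and $b\cdot K_{\Sigma}$ concatenate the finite set $\underline{b}(S)$ with the finite language $K_{\Sigma}$, hence remain finite.

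The cases deserving care, and which I expect to be the main obstacle, are the two quotient operations, because they behave asymmetrically. For the right quotient $p=b/K_{\Sigma}$ every element of $\underline{b}(S)/K_{\Sigma}$ is a prefix of some word in the finite set $\underline{b}(S)$, and a finite language has only finitely many prefixes; finiteness thus holds here \emph{irrespective} of $K_{\Sigma}$. By contrast, for the left quotient $p=K_{\Sigma}/b$ every element of $K_{\Sigma}/\underline{b}(S)$ is a prefix of a word in $K_{\Sigma}$, so here it is genuinely necessary that $K_{\Sigma}$ be finite --- witness $0^{*}/\set{\varepsilon}=0^{*}$, which is infinite --- and this is exactly what the hypothesis on the $*$ operator secures. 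This completes the induction; applying the claim to the singleton $S=\set{v}$ gives the finiteness of $\underline{a}(\set{v})$, and the proposition follows.
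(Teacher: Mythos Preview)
Your proof is correct and follows essentially the same approach as the paper: both establish by structural induction (the paper via a minimal-counterexample phrasing, you directly) that $\underline{a}(\set{v})$ is finite, having first observed that a regular expression without $*$ denotes a finite language. Your version is in fact more careful than the paper's: you strengthen the inductive hypothesis to arbitrary finite $S$ rather than singletons (harmless here, since every constructor applies subexpressions to the \emph{same} $S$), and you explicitly analyse the asymmetry between the two quotient cases, which the paper glosses over with ``it is easy to see.''
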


It is easy to see that the condition of Proposition~\ref{propn:finitary}  is not a \emph{necessary} one. Consider, for example
$a=hd(I\cdot(\sigma_1+\sigma_2)^{*})$. This is finitary, however, fails to satisfy the conditions of Propn.~\ref{propn:finitary}.
Less trivially, $a=(\sigma_1\cdot(\sigma_1)^{*})\cdot I\cdot(\sigma_2\cdot(\sigma_{2})^{*})$ will be
finitary when $\XC\cap\set{\sigma_1\cdot(\sigma_1)^{*}\cdot x\cdot\sigma_2\cdot(\sigma_{2})^{*}}$ is bounded
for all $x\in\XC$.

%
%
\section{{\sc af} Specifications at Work}\label{section:example-cases}

In this section we illustrate the suitability of our approach by showing how it can be used to provide a formal representation of four examples which altogether combine different features.
Two of the examples (presented in sections \ref{subsec:camver} and \ref{subsec:dungexamp}) are abstract in nature (they have been previously introduced in the literature mainly for the sake of theoretical analysis), while the examples of sections \ref{examp:negotiation} and \ref{examp:ambient} are inspired to realistic application domains (also taken from the literature) and have been introduced in Section \ref{subsec:multiag}.
The two abstract examples of sections \ref{subsec:camver} and \ref{subsec:dungexamp} concern infinite non-finitary {\sc af}s, while the \virg{application-oriented} examples of sections \ref{examp:negotiation} and \ref{examp:ambient} give rise to infinite finitary {\sc af}s.

Moreover, we remark that the examples of sections \ref{subsec:dungexamp} and \ref{examp:ambient} are based on a formalization in terms of a logic program featuring an infinite Herbrand base.
In fact, the formalization in argumentation terms of logic programs with infinite Herbrand base is, at a general level, one of the \virg{natural} applications of the proposed framework, given that a direct correspondence between logic programs with negation as failure and abstract argumentation frameworks has been established in Dung's paper itself. 

We will present the abstract examples before, as they are more suitable to illustrate in detail the technical use of the formalism as a specification tool in articulated frameworks, and later the \virg{application-oriented} examples, to give an account of some potential practical uses of the formalism without cluttering the description of the more realistic examples with too much technical details, derivable by analogy from the first examples. 

\subsection{The {\sc af} from Caminada and Verheij~\cite{camver:2010}}\label{subsec:camver}

In \cite{camver:2010} Caminada and Verheij describe a (non-finitary) {\sc af}, $\tuple{\XC,\AC}$ (see Figure \ref{fig:caminada}) with the
property that $\tuple{\XC,\AC}$ has no semi-stable extension\footnote{The existence of semi-stable extensions in finitary argumentation frameworks is analyzed in \cite{Weydert:2011}.}, i.e. admissible set $S$ for which $S\cup S^{+}$ is maximal
wrt $\subseteq$. The construction uses arguments
\[
\XC~~=~~\set{~A_i~:~i\geq 1~}~~\cup~~\set{~B_i~:~i\geq1~}~\cup~\set{~C_i~:~i\geq1~}
\]
linked by the attack relation, $\AC$, containing
\[
\begin{array}{l}
\set{~\tuple{A_i,A_i}~:~i\geq1~}~~\cup~~\set{~\tuple{B_j,A_i}~:~j\geq i\geq 1~}~~\cup\\
\set{~\tuple{B_i,C_i}~:~i\geq1~}~~\cup~~\set{~\tuple{B_j,B_i}~:~j>i\geq1~}~~\cup\\
\set{~\tuple{C_i,B_i}~:~i\geq1~}
\end{array}
\]

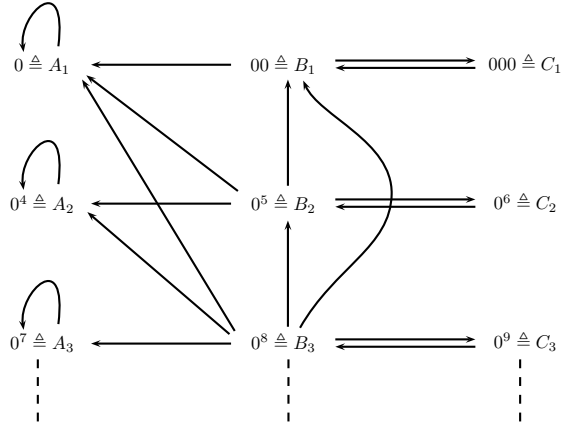
\begin{figure}[!htb]
\begin{center}
\scalebox{0.7} 
{
\begin{pspicture}(0,-4.33)(12.847187,4.33)
\usefont{T1}{ptm}{m}{n}
\rput(1.7754687,2.48){$0 \triangleq A_1$}
\usefont{T1}{ptm}{m}{n}
\rput(6.3554688,2.48){$00 \triangleq B_1$}
\usefont{T1}{ptm}{m}{n}
\rput(10.955469,2.5){$000 \triangleq C_1$}
\usefont{T1}{ptm}{m}{n}
\rput(1.7754687,-0.16){$0^4 \triangleq A_2$}
\usefont{T1}{ptm}{m}{n}
\rput(6.3554688,-0.16){$0^5 \triangleq B_2$}
\usefont{T1}{ptm}{m}{n}
\rput(10.955469,-0.14){$0^6 \triangleq C_2$}
\usefont{T1}{ptm}{m}{n}
\rput(1.7754687,-2.8){$0^7 \triangleq A_3$}
\usefont{T1}{ptm}{m}{n}
\rput(6.3554688,-2.8){$0^8 \triangleq B_3$}
\usefont{T1}{ptm}{m}{n}
\rput(10.955469,-2.78){$0^9 \triangleq C_3$}
\psbezier[linewidth=0.04,arrowsize=0.05291667cm 2.0,arrowlength=1.4,arrowinset=0.4]{->}(2.081875,2.8204918)(2.261875,4.31)(1.321875,3.43)(1.421875,2.71)
\psbezier[linewidth=0.04,arrowsize=0.05291667cm 2.0,arrowlength=1.4,arrowinset=0.4]{->}(2.081875,0.2004918)(2.261875,1.69)(1.321875,0.81)(1.421875,0.09)
\psbezier[linewidth=0.04,arrowsize=0.05291667cm 2.0,arrowlength=1.4,arrowinset=0.4]{->}(2.081875,-2.4595082)(2.261875,-0.97)(1.321875,-1.85)(1.421875,-2.57)
\psline[linewidth=0.04cm,arrowsize=0.05291667cm 2.0,arrowlength=1.4,arrowinset=0.4]{->}(5.361875,2.47)(2.701875,2.47)
\psline[linewidth=0.04cm,arrowsize=0.05291667cm 2.0,arrowlength=1.4,arrowinset=0.4]{->}(5.361875,-0.17)(2.701875,-0.17)
\psline[linewidth=0.04cm,arrowsize=0.05291667cm 2.0,arrowlength=1.4,arrowinset=0.4]{->}(5.361875,-2.83)(2.701875,-2.83)
\psline[linewidth=0.04cm,arrowsize=0.05291667cm 2.0,arrowlength=1.4,arrowinset=0.4]{->}(5.491875,0.07)(2.611875,2.27)
\psline[linewidth=0.04cm,arrowsize=0.05291667cm 2.0,arrowlength=1.4,arrowinset=0.4]{->}(5.431875,-2.59)(2.531875,2.19)
\psline[linewidth=0.04cm,arrowsize=0.05291667cm 2.0,arrowlength=1.4,arrowinset=0.4]{->}(5.331875,-2.65)(2.631875,-0.33)
\psline[linewidth=0.04cm,arrowsize=0.05291667cm 2.0,arrowlength=1.4,arrowinset=0.4]{->}(6.441875,0.17)(6.441875,2.19)
\psline[linewidth=0.04cm,arrowsize=0.05291667cm 2.0,arrowlength=1.4,arrowinset=0.4]{->}(6.441875,-2.51)(6.441875,-0.49)
\psbezier[linewidth=0.04,arrowsize=0.05291667cm 2.0,arrowlength=1.4,arrowinset=0.4]{->}(6.671875,-2.53)(7.331875,-1.37)(8.392074,-0.998178)(8.411875,0.01)(8.431676,1.018178)(7.031875,1.49)(6.731875,2.17)
\psline[linewidth=0.04cm,arrowsize=0.05291667cm 2.0,arrowlength=1.4,arrowinset=0.4]{->}(10.001875,2.41)(7.341875,2.41)
\psline[linewidth=0.04cm,arrowsize=0.05291667cm 2.0,arrowlength=1.4,arrowinset=0.4]{->}(10.001875,-0.23)(7.341875,-0.23)
\psline[linewidth=0.04cm,arrowsize=0.05291667cm 2.0,arrowlength=1.4,arrowinset=0.4]{->}(10.001875,-2.89)(7.341875,-2.89)
\psline[linewidth=0.04cm,arrowsize=0.05291667cm 2.0,arrowlength=1.4,arrowinset=0.4]{<-}(10.001875,2.55)(7.341875,2.55)
\psline[linewidth=0.04cm,arrowsize=0.05291667cm 2.0,arrowlength=1.4,arrowinset=0.4]{<-}(10.001875,-0.09)(7.341875,-0.09)
\psline[linewidth=0.04cm,arrowsize=0.05291667cm 2.0,arrowlength=1.4,arrowinset=0.4]{<-}(10.001875,-2.75)(7.341875,-2.75)
\psline[linewidth=0.04cm,linestyle=dashed,dash=0.16cm 0.16cm](1.691875,-3.13)(1.691875,-4.31)
\psline[linewidth=0.04cm,linestyle=dashed,dash=0.16cm 0.16cm](6.451875,-3.13)(6.451875,-4.31)
\psline[linewidth=0.04cm,linestyle=dashed,dash=0.16cm 0.16cm](10.851875,-3.13)(10.851875,-4.31)
\end{pspicture} 
}
\caption{An infinite {\sc af} with no semi-stable extensions.}
\label{fig:caminada}
\end{center}
\end{figure}

One obvious choice to describe this scheme would be the set $\Sigma~=~\set{A,B,C}$ so that
\[
\XC~=~\set{A}\cdot\set{A}^{*}~\cup~\set{B}\cdot\set{B}^{*}~\cup\set{C}\cdot\set{C}^{*}
\]
It is not too hard to see, however, that it is impossible to describe the required set of attacks via some $a\in\AC\EC(\set{A,B,C})$: for example
suppose $S$ is an infinite regular subset of $\set{A^{i}~:~i\geq1}$. Then
\[
\pi_{a}^{-}(S)~~=~\set{~B^{j}~:~j\geq \min\set{i~:~A^{i}\in S}}~\cup~S
\]
Now while this is a regular language for any \emph{fixed} subset of $\set{A^{i}~:~i\geq1}$ given that
it requires determining $\min\set{~i~:~A^{i}\in S}$ it is not possible to construct a \emph{general} expression
allowing this minimum to be computed.

Nevertheless this scheme can be described within our formalism. Let $\Sigma=\set{0}$ and $\XC~=~\set{0^i~:~i\geq1}~=~\set{0}\cdot\set{0}^{*}$.
We can partition $\XC$ into three sets, $L_A$, $L_B$ and $L_C$, as follows:
\[
\begin{array}{lclcl}
L_A&\mbox{ $=$ }&\set{0^{3i+1}~:~i\geq0}&\mbox{ $=$ }&\set{0}\cdot\set{000}^{*}\\
L_B&\mbox{ $=$ }&\set{0^{3i+2}~:~i\geq0}&\mbox{ $=$ }&\set{00}\cdot\set{000}^{*}\\
L_C&\mbox{ $=$ }&\set{0^{3i}~:~i\geq1}&\mbox{ $=$ }&\set{000}\cdot\set{000}^{*}
\end{array}
\]

We can now identify the attack expression: 
\begin{itemize}
\item each element of $L_A$ is attacked by itself, giving rise to the sub-expression $I\cap L_A$, and by all elements of $L_B$ with greater or equal index, giving rise to the sub-expression $(I\cap L_A)\cdot0\cdot(000)^{*}$;
\item each element of $L_B$ is attacked by the element of $L_C$ with the same index, giving rise to the sub-expression $(I\cap L_B)\cdot 0$, and by all elements of $L_B$ with greater index, giving rise to the sub-expression $(I\cap L_B)\cdot(000)\cdot(000)^{*}$;
\item each element of $L_C$ is attacked by the element of $L_B$ with the same index, giving rise to the sub-expression $tl(I\cap L_C)$.
\end{itemize}

Summing up, we get
\[
\begin{array}{lcl}
a & = & (I\cap L_A)~\cup~ (I\cap L_A)\cdot0\cdot(000)^{*}~~\cup\\
&&(I\cap L_B)\cdot(000)\cdot(000)^{*}~\cup~(I\cap L_B)\cdot0~~\cup\\
&& tl(I\cap L_C)
\end{array}
\]

\noindent 
giving directly

\[
\begin{array}{lcl}
\pi_{a}^{-}(S)&\mbox{ $=$ }&S\cap L_A~\cup~(S\cap L_A)\cdot0\cdot(000)^{*}~\cup\\
&&(S\cap L_B)\cdot(000)\cdot(000)^{*}~\cup~(S\cap L_B)\cdot0~\cup\\
&&tl(S\cap L_C)
\end{array}
\]

In order to compute $\pi_{a}^{+}(S)$, we need to take into account Fact \ref{conjectures-inverse}. From \ref{conjectures-inverse}.\ref{Prop1} the inverted mapping will be the union of the inverted sub-mappings corresponding to the various sub-expressions of $a$.
\begin{itemize}
\item The first term: $I\cap L_A$, according to \ref{conjectures-inverse}.\ref{Prop6} with $b=I$, gives rise to $\inv{I}(S \cap L_A)$, and thus, from \ref{conjectures-inverse}.\ref{Prop0}, to $S \cap L_A$.

\item As to the second term $(I\cap L_A)\cdot 0 \cdot (000)^{*}$, from \ref{conjectures-inverse}.\ref{Prop2} with $b =  (I\cap L_A)$ we obtain $\inv{b}(S / \set{0\cdot(000)^{*}})$. In turn, to obtain $\inv{b}$, \ref{conjectures-inverse}.\ref{Prop6} applies, which letting $c=I$ yields $\inv{c}(L_A \cap (S / \set{0 \cdot(000)^{*}}))$, which applying \ref{conjectures-inverse}.\ref{Prop0} yields $L_A \cap (S / \set{0 \cdot(000)^{*}})$.

\item Following the same steps, from $(I \cap L_B) \cdot (000) \cdot (000)^{*}$ we obtain $L_B \cap (S / \set{000 \cdot (000)^{*}})$, and from $(I \cap L_B) \cdot 0$ we derive $L_B \cap (S/0)$.

\item Finally from $tl(I \cap L_C)$, applying \ref{conjectures-inverse}.\ref{Prop8} with $b= I \cap L_C$ we obtain $\inv{b}(\Sigma \cdot S)$. Applying then \ref{conjectures-inverse}.\ref{Prop6} with $c=I$ we get $\inv{c}(L_C \cap \Sigma \cdot S)$, which applying \ref{conjectures-inverse}.\ref{Prop0} yields $L_C \cap \Sigma \cdot S$.

\end{itemize}

Putting things together (according to Fact \ref{conjectures-inverse}.\ref{Prop1}), we obtain

\[
\begin{array}{lcl}
\pi_{a}^{+}(S)&\mbox{ $=$ }&(S\cap L_A)~\cup~(L_A \cap( S / \set{0 \cdot (000)^{*}}))~\cup\\
&& (L_B \cap( S / \set{000 \cdot (000)^{*}})) \cup (L_B \cap( S / 0))~\cup\\ 
&& (L_C \cap (\Sigma \cdot S))
\end{array}
\]

We can now exemplify the use of the computational procedures\footnote{The reader is referred to the proof of Theorem \ref{thm:afs-ops} for the underlying details.} of Section \ref{sec:computing} in this case.
Let us start with the check of conflict-freeness, which for a set $S$ involves verifying whether $\pi_{a}^{-}(S)\cap S~=~\emptyset$.
From the formulation of $\pi_{a}^{+}(S)$ given above it is easily verifiable that:
\begin{itemize}
\item any set $S$ such that $S \cap L_A \neq \emptyset$ is not conflict-free;
\item any set $S$ such that $\card{S \cap L_B} \geq 2$ is not conflict-free;
\item any set $S \subseteq L_C$ is conflict-free.
\end{itemize}
As to conflict-freeness, leaving apart the empty set, we have therefore to consider only singletons of the form $S=\set{00 \cdot(000)^i}$ with a fixed $i \geq 0$, any set $S \subseteq L_C$, and the union of any of the singletons $\set{00 \cdot(000)^i}$ with any subset of $L_C \setminus \set{000 \cdot(000)^i }$. 
For admissibility of a set $S$, one has to verify whether $\pi_{a}^{+}(S) \supseteq \pi_{a}^{-}(S)$ in addition to conflict-freeness.
For the generic singleton $S=\set{00 \cdot(000)^i}$, we have $\pi_{a}^{+}(S) = \set{0 \cdot(000)^j} \cup \set{00 \cdot(000)^k} \cup \set{000 \cdot(000)^i } $ for $0 \leq j \leq i$ and $0 \leq k  < i$, while $\pi_{a}^{-}(S)= \set{00 \cdot(000)^{i+1} \cdot (000)^{*}} \cup \set{000 \cdot(000)^i }$.
Since  $\set{00 \cdot(000)^{i+1} \cdot (000)^{*}} \cap \pi_{a}^{+}(S) = \emptyset$, it turns out that $\pi_{a}^{+}(S) \nsupseteq \pi_{a}^{-}(S)$ hence no $S=\set{00 \cdot(000)^i}$ is admissible.
Considering instead any set $S \subseteq L_C$ it can be seen that $\pi_{a}^{+}(S) = L_B \cap (S / 0) = tl(S \cap L_C) = \pi_{a}^{-}(S)$.
Hence any subset of $L_C$ is admissible.
Considering now the union $S$ of a singleton $\set{00 \cdot(000)^i}$ and a subset of $L_C$, since $\pi_{a}^{-}(\set{00 \cdot(000)^i}) \supseteq \set{00 \cdot(000)^{i+1} \cdot (000)^{*}}$ then $\set{(000)^{i+2} \cdot (000)^{*}}$ must be contained in $S$.
Taking now into account the facts above, it turns out that $S= \set{00 \cdot(000)^i, (000)^{i+2} \cdot (000)^{*}} \cup L'$ for any $L' \subseteq \set{(000)^j | 1 \leq j \leq i}$.
To determine whether an admissible set $S\subseteq L_C$ is a complete extension we have to check whether $S = \FC(S) = \XC \setminus \pi_{a}^{+}(\XC \setminus \pi_{a}^{+}(S))$.
Considering any such $S \subseteq L_C$, we have already seen that $\pi_{a}^{+}(S) = L_B \cap (S / 0)$.
Hence $\XC \setminus \pi_{a}^{+}(S) = L_A \cup (L_B \setminus (S / 0)) \cup L_C $.
It follows that $\pi_{a}^{+}(\XC \setminus \pi_{a}^{+}(S)) = L_A \cup L_B \cup (L_C \setminus S)$, and hence $\FC(S) = \XC \setminus (L_A \cup L_B \cup (L_C \setminus S)) = S$.
As to the sets of the form $S= \set{00 \cdot(000)^i, (000)^{i+2} \cdot (000)^{*}} \cup L'$, it turns out (using the same reasoning line) that $\FC(S) = \set{00 \cdot (000)^i} \cup (L_C \setminus \set{000 \cdot (000)^i})$, which is also the unique complete extension including $\set{00 \cdot (000)^i}$.
Summing up, all (either finite or infinite) subsets of $L_C$ plus the sets $\set{00 \cdot (000)^i} \cup (L_C \setminus \set{000 \cdot (000)^i})$ with $i \geq 0$ form the set of all the complete extensions of this framework.

\subsection{Dung's example}\label{subsec:dungexamp}

\newcommand{\afdung}{\overline{AF}}

We will now consider the infinite argumentation framework introduced in \cite[p. 331, 352]{dung:1995}. 
The framework is derived from the following logic program\footnote{In the logic program $Q$, $x$ is any natural number, $s(x)$ denotes the successor of $x$ and $q(x)$ can be regarded as any property of all natural numbers.} $Q$. 

$\begin{array}{l l l}
Q:      & r \leftarrow \n p  & (r1)\\
        & p \leftarrow \n q(x)  & (r2)\\
        & q(x) \leftarrow even(x)  & (r3)\\
        & q(x) \leftarrow \n even(x)  & (r4)\\
        & even(s(x)) \leftarrow \n even(x)  & (r5)\\
        & even(0) \leftarrow & (r6)
\end{array}$

The rules for transforming a logic program into an {\sc AF} are defined in \cite[p. 343]{dung:1995} as follows.

First of all, for a logic program $P$, \grounded{P} denotes the \emph{set of all ground instances of clauses}
in $P$. For each literal $h$, the \emph{complement} of $h$ is denoted by \logicalcomplement{h}.
Let $K = \set{\n b_1, \ldots, \n b_m}$ be a set of ground negative literals. A ground atom $k$ is said to be
a \emph{defeasible consequence} of $P, K$ if there is a sequence of ground atoms $(e_0, e_1, \ldots, e_n)$ with $e_n = k$
such that for each $e_i$, either $e_i\leftarrow~ \in \grounded{P}$ or $e_i$ is the head of a
clause $e_i \leftarrow a_1, \ldots, a_t, \n a_{t+1}, \ldots, \n a_{t+r}$ in \grounded{P} such that the positive
literals $a_1, \ldots, a_t$ belong to the preceding members in the sequence and the negative literals $\n a_{t+1}, \ldots, \n a_{t+3}$
belong to $K$. $K$ is said to be a \emph{support for $k$ with respect to $P$}.

A logic program $P$ is transformed into an {\sc af} $AF(P) = \tuple{\XC_P, \AC_P}$ as follows:
\[
\begin{array}{rcl}
\XC_P&=&\mbox{ }~\set{(K, k) | K \textrm{ is a support for $k$ with respect to $P$}}\\
&&\cup\set{\darg{\n k}{\n k}|k \textrm{ is a ground atom}};\\
(K, h)\mbox{ attacks }(K', h')&\Leftrightarrow&\logicalcomplement{h} \in K'
\end{array}
\]

Arguments of the form $\darg{\n k}{\n k}$ capture the idea that $k$ would be concluded false if
there is no acceptable argument supporting $k$. An argument $a$ attacks an argument $b$ when the consequence of $a$ contradicts one of the members of the support of $b$.

The framework $AF(Q) = \tuple{\XC_{Q}, \AC_{Q}}$, derived from the logic program $Q$, turns out to be non finitary and is depicted in Figure \ref{fig:af-non-finitary}. Note that to keep the notation simple each predicate $s(\varx)$ has been replaced by the result of the expression $\varx + 1$.

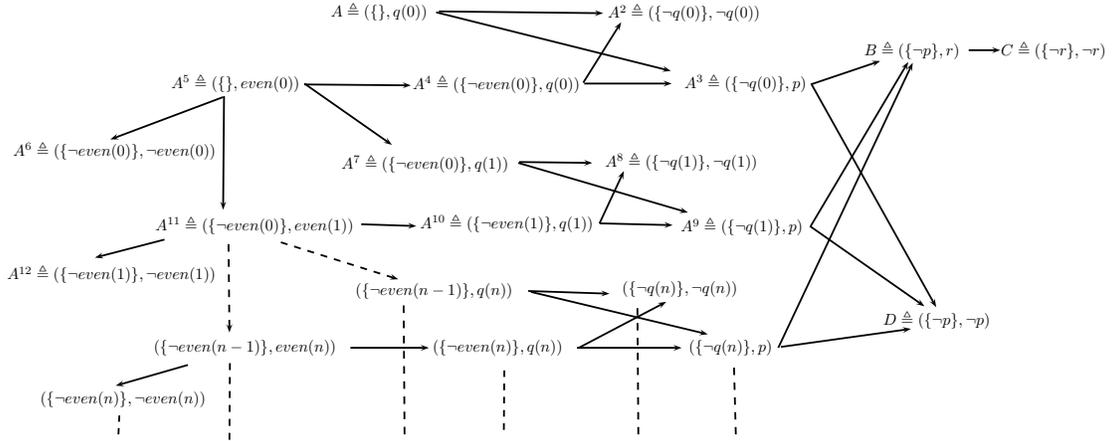
\begin{figure}[!htb]
\begin{center}
\noindent\makebox[\textwidth]{%
\scalebox{0.6} 
{
\begin{pspicture}(0,-4.8509374)(29.047188,4.8709373)
\usefont{T1}{ppl}{m}{n}
\rput(7.4954686,3.0790625){$\cone^5 \name \darg{}{even(0)}$}
\usefont{T1}{ppl}{m}{n}
\rput(10.675468,4.6590624){$\cone \name \darg{}{q(0)}$}
\usefont{T1}{ppl}{m}{n}
\rput(7.9154687,-0.0609375){$\cone^{11} \name \darg{\lnot even(0)}{even(1)}$}
\usefont{T1}{ppl}{m}{n}
\rput(7.7054687,-2.8009374){$\darg{\lnot even(n-1)}{even(n)}$}
\usefont{T1}{ppl}{m}{n}
\rput(4.815469,1.5790625){$\cone^6 \name \darg{\lnot even(0)}{\lnot even(0)}$}
\usefont{T1}{ppl}{m}{n}
\rput(4.7454686,-1.1609375){$\cone^{12} \name \darg{\lnot even(1)}{\lnot even(1)}$}
\usefont{T1}{ppl}{m}{n}
\rput(5.005469,-3.9409375){$\darg{\lnot even(n)}{\lnot even(n)}$}
\usefont{T1}{ppl}{m}{n}
\rput(13.275469,3.0590625){$\cone^4 \name \darg{\lnot even(0)}{q(0)}$}
\usefont{T1}{ppl}{m}{n}
\rput(17.435469,4.6390624){$\cone^2 \name \darg{\lnot q(0)}{\lnot q(0)}$}
\usefont{T1}{ppl}{m}{n}
\rput(18.79547,3.0790625){$\cone^3 \name \darg{\lnot q(0)}{p}$}
\usefont{T1}{ppl}{m}{n}
\rput(11.695469,1.2990625){$\cone^7 \name \darg{\lnot even(0)}{q(1)}$}
\usefont{T1}{ppl}{m}{n}
\rput(13.505468,-0.0209375){$\cone^{10} \name \darg{\lnot even(1)}{q(1)}$}
\usefont{T1}{ppl}{m}{n}
\rput(17.37547,1.3390625){$\cone^8 \name \darg{\lnot q(1)}{\lnot q(1)}$}
\usefont{T1}{ppl}{m}{n}
\rput(18.71547,-0.1009375){$\cone^9 \name \darg{\lnot q(1)}{p}$}
\usefont{T1}{ppl}{m}{n}
\rput(22.485468,3.7990625){$\ctwo \name \darg{\lnot p}{r}$}
\usefont{T1}{ppl}{m}{n}
\rput(25.62547,3.7990625){$\cthree \name \darg{\lnot r}{\lnot r}$}
\usefont{T1}{ppl}{m}{n}
\rput(23.03547,-2.1609375){$\cfour \name \darg{\lnot p}{\lnot p}$}
\usefont{T1}{ppl}{m}{n}
\rput(11.895469,-1.5409375){$\darg{\lnot even(n-1)}{q(n)}$}
\usefont{T1}{ppl}{m}{n}
\rput(13.305469,-2.7809374){$\darg{\lnot even(n)}{q(n)}$}
\usefont{T1}{ppl}{m}{n}
\rput(17.345469,-1.5009375){$\darg{\lnot q(n)}{\lnot q(n)}$}
\usefont{T1}{ppl}{m}{n}
\rput(18.46547,-2.8009374){$\darg{\lnot q(n)}{p}$}
\psline[linewidth=0.04cm,arrowsize=0.05291667cm 2.0,arrowlength=1.4,arrowinset=0.4]{->}(7.241875,2.7890625)(7.221875,0.2690625)
\psline[linewidth=0.04cm,linestyle=dashed,dash=0.16cm 0.16cm,arrowsize=0.05291667cm 2.0,arrowlength=1.4,arrowinset=0.4]{->}(7.341875,-0.4909375)(7.361875,-2.4509375)
\psline[linewidth=0.04cm,arrowsize=0.05291667cm 2.0,arrowlength=1.4,arrowinset=0.4]{->}(7.221875,2.7690625)(4.721875,1.8290625)
\psline[linewidth=0.04cm,arrowsize=0.05291667cm 2.0,arrowlength=1.4,arrowinset=0.4]{->}(9.021875,3.0690625)(10.961875,1.7090625)
\psline[linewidth=0.04cm,arrowsize=0.05291667cm 2.0,arrowlength=1.4,arrowinset=0.4]{->}(5.921875,-0.3909375)(4.381875,-0.7909375)
\psline[linewidth=0.04cm,arrowsize=0.05291667cm 2.0,arrowlength=1.4,arrowinset=0.4]{->}(6.441875,-3.1709375)(4.841875,-3.6309376)
\psline[linewidth=0.04cm,arrowsize=0.05291667cm 2.0,arrowlength=1.4,arrowinset=0.4]{->}(23.741875,3.8090625)(24.441875,3.8090625)
\psline[linewidth=0.04cm,linestyle=dashed,dash=0.16cm 0.16cm](7.3723435,-3.1309376)(7.361875,-4.8309374)
\psline[linewidth=0.04cm,linestyle=dashed,dash=0.16cm 0.16cm](18.541876,-3.2309375)(18.581875,-4.7109375)
\psline[linewidth=0.04cm,linestyle=dashed,dash=0.16cm 0.16cm](13.452344,-3.2909374)(13.441875,-4.6109376)
\psline[linewidth=0.04cm,linestyle=dashed,dash=0.16cm 0.16cm,arrowsize=0.05291667cm 2.0,arrowlength=1.4,arrowinset=0.4]{->}(8.501875,-0.4509375)(11.101875,-1.2709374)
\psline[linewidth=0.04cm,arrowsize=0.05291667cm 2.0,arrowlength=1.4,arrowinset=0.4]{->}(9.021875,3.0490625)(11.381875,3.0290625)
\psline[linewidth=0.04cm,arrowsize=0.05291667cm 2.0,arrowlength=1.4,arrowinset=0.4]{->}(10.281875,-0.0509375)(11.501875,-0.0909375)
\psline[linewidth=0.04cm,arrowsize=0.05291667cm 2.0,arrowlength=1.4,arrowinset=0.4]{->}(10.041875,-2.7909374)(11.781875,-2.7909374)
\psline[linewidth=0.04cm,arrowsize=0.05291667cm 2.0,arrowlength=1.4,arrowinset=0.4]{->}(12.001875,4.6690626)(15.641875,4.6290627)
\psline[linewidth=0.04cm,arrowsize=0.05291667cm 2.0,arrowlength=1.4,arrowinset=0.4]{->}(11.941875,4.6490626)(17.141874,3.3490624)
\psline[linewidth=0.04cm,arrowsize=0.05291667cm 2.0,arrowlength=1.4,arrowinset=0.4]{->}(15.221875,3.0690625)(17.161875,3.0690625)
\psline[linewidth=0.04cm,arrowsize=0.05291667cm 2.0,arrowlength=1.4,arrowinset=0.4]{->}(15.201875,3.0890625)(16.041876,4.4290624)
\psline[linewidth=0.04cm,arrowsize=0.05291667cm 2.0,arrowlength=1.4,arrowinset=0.4]{->}(13.781875,1.3290625)(15.401875,1.3090625)
\psline[linewidth=0.04cm,arrowsize=0.05291667cm 2.0,arrowlength=1.4,arrowinset=0.4]{->}(13.761875,1.3090625)(17.521875,0.2090625)
\psline[linewidth=0.04cm,arrowsize=0.05291667cm 2.0,arrowlength=1.4,arrowinset=0.4]{->}(15.561875,-0.0309375)(17.181875,-0.0709375)
\psline[linewidth=0.04cm,arrowsize=0.05291667cm 2.0,arrowlength=1.4,arrowinset=0.4]{->}(15.561875,-0.0309375)(16.101875,1.1290625)
\psline[linewidth=0.04cm,arrowsize=0.05291667cm 2.0,arrowlength=1.4,arrowinset=0.4]{->}(13.981875,-1.5309376)(15.781875,-1.5909375)
\psline[linewidth=0.04cm,arrowsize=0.05291667cm 2.0,arrowlength=1.4,arrowinset=0.4]{->}(14.041875,-1.5509375)(17.961876,-2.4909375)
\psline[linewidth=0.04cm,arrowsize=0.05291667cm 2.0,arrowlength=1.4,arrowinset=0.4]{->}(15.061875,-2.7909374)(17.381874,-2.7909374)
\psline[linewidth=0.04cm,arrowsize=0.05291667cm 2.0,arrowlength=1.4,arrowinset=0.4]{->}(15.101875,-2.7909374)(17.041876,-1.7709374)
\psline[linewidth=0.04cm,arrowsize=0.05291667cm 2.0,arrowlength=1.4,arrowinset=0.4]{->}(20.241875,3.0490625)(21.781876,3.5690625)
\psline[linewidth=0.04cm,arrowsize=0.05291667cm 2.0,arrowlength=1.4,arrowinset=0.4]{->}(20.261875,3.0490625)(23.021875,-1.8909374)
\psline[linewidth=0.04cm,arrowsize=0.05291667cm 2.0,arrowlength=1.4,arrowinset=0.4]{->}(20.241875,-0.0909375)(22.401875,3.5490625)
\psline[linewidth=0.04cm,arrowsize=0.05291667cm 2.0,arrowlength=1.4,arrowinset=0.4]{->}(20.221874,-0.0909375)(22.761875,-1.8709375)
\psline[linewidth=0.04cm,arrowsize=0.05291667cm 2.0,arrowlength=1.4,arrowinset=0.4]{->}(19.521875,-2.7909374)(22.501875,3.5290625)
\psline[linewidth=0.04cm,arrowsize=0.05291667cm 2.0,arrowlength=1.4,arrowinset=0.4]{->}(19.581875,-2.7709374)(22.461876,-2.3709376)
\psline[linewidth=0.04cm,linestyle=dashed,dash=0.16cm 0.16cm](16.401875,-1.9109375)(16.421875,-4.7109375)
\psline[linewidth=0.04cm,linestyle=dashed,dash=0.16cm 0.16cm](4.921875,-4.2909374)(4.901875,-4.7109375)
\psline[linewidth=0.04cm,linestyle=dashed,dash=0.16cm 0.16cm](11.221875,-1.8509375)(11.241875,-4.7109375)
\end{pspicture} 
}}
\caption{Graphical representation of $AF(Q)$.}
\label{fig:af-non-finitary}
\end{center}
\end{figure}

Let us examine the elements of $\XC_{Q}$.
First, there are arguments of the form $\darg{\n k}{\n k}$ for each ground atom $k$, namely:
\begin{itemize}
\item two \virg{single} arguments for the atoms $p$ and $r$ (at the right of the figure, respectively in the lower and higher part)
\item an infinite sequence of arguments for the atoms with form $q(\varx)$, corresponding to the fifth (from the left) \virg{column} in Figure \ref{fig:af-non-finitary}.
\item an infinite sequence of arguments for the atoms with form $even(\varx)$, corresponding to the first (from the left) \virg{column} in Figure \ref{fig:af-non-finitary}
\end{itemize}

Then, there are arguments of the form $(K, k)$ derived by applying rules (r1)-(r6), where $K$ is a support for $k$.
In particular we have:
\begin{itemize}
\item two arguments with empty support: from (r6) we get $\darg{}{even(0)}$ (top of the second \virg{column}) and from (r6) and (r3) we get $\darg{}{q(0)}$ (above third \virg{column} in Figure \ref{fig:af-non-finitary})
\item a \virg{single} argument $\darg{\n p}{r}$ from (r1) (at the right of the figure, in the higher part)
\item an infinite sequence of arguments of the form $\darg{\n q(\varx)}{p} | \varx \ge 0$ from (r2) (sixth \virg{column})
\item an infinite sequence of arguments of the form $\darg{\n even(\varx)}{q(\varx)} | \varx \ge 0$ from (r4) (fourth \virg{column})
\item an infinite sequence of arguments of the form $\darg{\n even(\varx)}{even(\varx + 1)} | \varx \ge 0$ from (r5) (second \virg{column})
\item an infinite sequence of arguments of the form $\darg{\n even(\varx)}{q(\varx + 1)} | \varx \ge 0$ from (r5) and (r3) (third \virg{column})

\end{itemize}

Turning to the attack relation $\AC_{Q}$, we observe that:
\begin{itemize}

\item each argument in the second \virg{column}, (with consequence $even(\varx)$) attacks the corresponding arguments in the first, third and fourth \virg{columns} (having support $\n even(\varx)$)
\item each argument in the second \virg{column} also attacks its \virg{successor} in the column
\item each argument in the third \virg{column}, (with consequence $q(\varx)$) attacks the corresponding arguments in the fifth and sixth \virg{columns} (having support $\n q(\varx)$)
\item similarly, each argument in the fourth \virg{column}, (with consequence $q(\varx)$) attacks the corresponding arguments in the fifth and sixth \virg{columns} (having support $\n q(\varx)$)
\item each argument in the sixth \virg{column} (with consequence $p$) attacks both arguments $\darg{\n p}{\n p}$ and $\darg{\n p}{r}$
\item argument $\darg{\n p}{r}$ attacks $\darg{\n r}{\n r}$
\end{itemize}

In order to provide an {\sc af} specification for $AF(Q)$ we need first a {\sc dfa} representing the infinite set of arguments $\XC_{Q}$ and then a proper attack expression representing the relation $\AC_{Q}$.

As to the {\sc dfa} representation of $\XC_{Q}$, it is handy to consider separately the infinite sequences corresponding to the six \virg{columns} in Figure \ref{fig:af-non-finitary} and the three \virg{single} arguments $\darg{\n p}{r}$, $\darg{\n r}{\n r}$, and $\darg{\n p}{\n p}$.

As to the arguments included in the \virg{columns}, we define a correspondence with sequences of a unique symbol, namely $\cone$, by exploiting the \virg{regular} structure of the sequences of arguments.
In fact, let us associate the string $\cone$ with argument $\darg{}{q(0)}$,
$\cone\cone$ with argument $\darg{\n q(0)}{\n q(0)}$, $\cone\cone\cone$ with argument $\darg{\n q(0)}{p}$, $\cone\cone\cone\cone$ with argument $\darg{\n even(0)}{q(0)}$, $\cone^5$ with argument $\darg{}{even(0)}$, 
and $\cone^6$ with argument $\darg{\n even(0)}{\n even(0)}$. The association may then continue periodically over the \virg{columns} by putting $\cone^7$ in correspondence with $\darg{\n even(0)}{q(1)}$, $\cone^8$ with $\darg{\n q(1)}{\n q(1)}$ and so on.
More formally, we use the elements of $\cone \cdot \cone^{*}$ to represent arguments as follows:
\[
\begin{array}{l}
\cone \name \darg{}{q(0)} \\
\forall \varx \ge 0, \cone^{2 + 6\varx} \name \darg{\n q(\varx)}{\n q(\varx)}\\
\forall \varx \ge 0, \cone^{3 + 6\varx} \name \darg{\n q(\varx)}{p}\\
\forall \varx \ge 0, \cone^{4 + 6\varx} \name \darg{\n even(\varx)}{q(\varx)}\\
\cone^5 \name \darg{}{even(0)}\\
\forall \varx \ge 0, \cone^{6 + 6\varx} \name \darg{\n even(\varx)}{\n even(\varx)}\\
\forall \varx \ge 0, \cone^{7 + 6\varx} \name \darg{\n even(\varx)}{q(\varx + 1)}\\
\forall \varx \ge 0, \cone^{11 + 6\varx} \name \darg{\n even(\varx)}{even(\varx + 1)}
\end{array}
\]

To simplify the subsequent description it is useful to denote as six distinct (regular) languages the subsets of $\cone^{*}$ corresponding to the different sequences of arguments in $\XC_{Q}$:
\[
\begin{array}{l}
L_{2} \name \set{\cone^{2 + 6\varx} | \varx \ge 0}\\
L_{3} \name \set{\cone^{3 + 6\varx} | \varx \ge 0}\\
L_{4} \name \set{\cone^{4 + 6 \varx} | \varx \ge 0}\\
L_{6} \name \set{\cone^{6 + 6\varx} | \varx \ge 0}\\
L_{7} \name \set{\cone^{7 + 6 \varx} | \varx \ge 0}\\

L_{11} \name \set{\cone^{11+ 6 \varx} | \varx \ge 0}
\end{array}
\]

To complete the representation of $\XC_{Q}$, the remaining three \virg{single} arguments are assigned three distinct alphabet elements as follows:
\begin{itemize}
\item $\ctwo \name \darg{\n p}{r}$
\item $\cthree \name \darg{\n r}{\n r}$
\item $\cfour \name \darg{\n p}{\n p}$
\end{itemize}

In summary, the representation of the arguments in $AF(Q)$ is based on the set of symbols $\Sigma_Q = \set{\cone, \ctwo, \cthree, \cfour}$ and the encoding consists of a {\sc DFA} $\MC_Q$ accepting the language $L(\MC_Q) = \set{\ctwo, \cthree, \cfour} \cup \set{\cone \cdot  \cone^{*}}$.
Clearly $L(\MC_Q)$ is a regular language included in $\Sigma_Q^{*} \setminus \set{\varepsilon}$.

The simple minimal {\sc DFA} accepting $L(\MC_Q)$ is depicted in Figure \ref{fig:af-non-finitary-dfa-minimal}.

\begin{figure}[htb]
\begin{center}
\scalebox{0.5} 
{
\begin{pspicture}[linewidth=1bp](0bp,0bp)(216.31bp,217.08bp)

  \pstVerb{2 setlinejoin} 
\psset{linecolor=black}
  \psbezier[arrows=->](130.78bp,108.96bp)(134.88bp,109.59bp)(139.2bp,110.26bp)(153.54bp,112.47bp)
  \psset{linecolor=[rgb]{0.0,0.0,0.0}}
  \rput(137.15bp,105.94bp){$\cfour$}
  \psset{linecolor=black}
  \psbezier[arrows=->](108.35bp,131.56bp)(109.01bp,135.66bp)(109.71bp,139.97bp)(112.01bp,154.31bp)
  \psset{linecolor=[rgb]{0.0,0.0,0.0}}
  \rput(113.38bp,137.93bp){$\cthree$}
  \psset{linecolor=black}
  \psbezier[arrows=->](62.625bp,149.25bp)(72.833bp,149.25bp)(81bp,147bp)(81bp,142.5bp)(81bp,139.69bp)(77.81bp,137.76bp)(62.625bp,135.75bp)
  \psset{linecolor=[rgb]{0.0,0.0,0.0}}
  \rput(84bp,142.5bp){$\cone$}
  \psset{linecolor=black}
  \psbezier[arrows=->](64.428bp,66.559bp)(68.432bp,70.427bp)(72.736bp,74.584bp)(84.474bp,85.924bp)
  \psbezier[arrows=->](80.062bp,117.38bp)(76.378bp,119.31bp)(72.504bp,121.33bp)(59.627bp,128.06bp)
  \psset{linecolor=[rgb]{0.0,0.0,0.0}}
  \rput(72.341bp,116.37bp){$\cone$}
  \psset{linecolor=black}
  \psbezier[arrows=->](116bp,80.585bp)(117.83bp,76.861bp)(119.76bp,72.944bp)(126.17bp,59.927bp)
  \psset{linecolor=[rgb]{0.0,0.0,0.0}}
  \rput(114.85bp,72.802bp){$\ctwo$}
{%
  \psset{linecolor=[rgb]{0.0,0.0,0.0}}
  \psellipse[](140bp,32bp)(27bp,27bp)
  \psellipse[](140bp,32bp)(31bp,31bp)
  \rput(139.92bp,32bp){$q_1$}
}%
{%
  \psset{linecolor=[rgb]{0.0,0.0,0.0}}
  \psellipse[](104bp,105bp)(27bp,27bp)
  \rput(104.06bp,104.84bp){$q_0$}
}%
{%
  \psset{linecolor=[rgb]{0.0,0.0,0.0}}
  \psellipse[](184bp,117bp)(27bp,27bp)
  \psellipse[](184bp,117bp)(31bp,31bp)
  \rput(184.31bp,117.21bp){$q_3$}
}%
{%
  \psset{linecolor=[rgb]{0.0,0.0,0.0}}
  \psellipse[](117bp,185bp)(27bp,27bp)
  \psellipse[](117bp,185bp)(31bp,31bp)
  \rput(116.96bp,185.08bp){$q_2$}
}%
{%
  \psset{linecolor=[rgb]{0.0,0.0,0.0}}
  \psellipse[](32bp,143bp)(27bp,27bp)
  \psellipse[](32bp,143bp)(31bp,31bp)
  \rput(32bp,142.5bp){$q_4$}
}%
\end{pspicture}
}
\caption{Graphical representation of the minimal DFA describing the argument encoding of $AF(Q)$.}
\label{fig:af-non-finitary-dfa-minimal}
\end{center}
\end{figure}
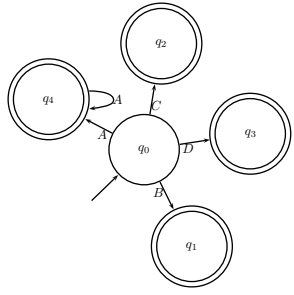

We need now to identify a suitable attack expression $a^Q$ for $AF(Q)$.
To this purpose let us first observe that, for a set of arguments $S$, the function $\pi_{a^Q}^{-}(S)$ must return as result a non-empty set if and only if $S$ has a non empty intersection with the set of arguments which receive an attack in $AF(Q)$, namely $\set{\ctwo, \cthree, \cfour} \cup L_2 \cup L_3 \cup L_4 \cup L_6 \cup L_7 \cup L_{11}$.
As a consequence, the global attack expression may be built as the union of various sub-expressions, each associated with a class of attacked arguments. Each sub-expression:
\begin{itemize}
\item has to select the range of sets for which a non-empty result is returned: this can be achieved specifying the intersection between $I$ and a given sub-language of $\XC_Q$ corresponding to the class of attacked arguments;
\item has to define a set of attackers through a proper symbol manipulation.
\end{itemize}

To exemplify, consider a sub-expression to specify the attackers of argument $\ctwo$, namely $\set{\cone^{3+6\varx} | \varx \ge 0}$ (corresponding to $\set{\darg{\n q(\varx)}{p}}$, $\varx \geq 0$). This can be obtained as 
$tl(I \cap \ctwo) \cdot \cone\cone\cone \cdot (\wseq{\cone}{6})^{*} = tl(I \cap \ctwo) \cdot L_3$.
In fact, $I \cap \ctwo$ yields either $\set{\ctwo}$ or the empty set. In the former case the $tl$ operator yields $\varepsilon$ which, combined with $L_3$, gives the desired result, while in the latter case the combination produces the empty set. 
In a similar (and simpler) way, since $\cthree$ is attacked only by $\ctwo$ we obtain the subexpression $tl(I \cap \cthree) \cdot \ctwo$, while $\cfour$ has exactly the same attackers as $\ctwo$, yielding $tl(I \cap \cfour) \cdot L_3$.

Consider now the specification of the attackers of the elements of sub-languages of $\XC_Q$. 
Starting from $L_2$ we observe that each element $\cone^{2 + 6\varx}$ (corresponding to $\darg{\n q(\varx)}{\n q(\varx)}$, $\varx \geq 0$) has two attackers namely $\cone^{2 + 6\varx - 1}$ (corresponding to $\darg{}{q(0)}$ for $\varx=0$ and to $\darg{\n even(\varx -1)}{q(\varx)}$ for $\varx > 0$) and $\cone^{2 + 6\varx + 2}$ (corresponding to $\darg{\n even(\varx)}{q(\varx)}$, $\varx \geq 0$).
The elements of the first family of attackers can be obtained by applying the $tl$ operator to $L_2$, those of the second family of attackers by concatenating $\cone \cdot \cone$ to $L_2$.
This reasoning gives rise to the sub-expressions $tl(I \cap L_2)$ and $(I \cap L_2) \cdot \cone \cdot \cone$.

Similarly, each element of $L_3$ $\cone^{3 + 6\varx}$ (corresponding to $\darg{\n q(\varx)}{p}$, $\varx \geq 0$) has two attackers namely $\cone^{3 + 6\varx + 1}$ (corresponding to $\darg{\n even(\varx)}{q(\varx)}$, $\varx \geq 0$) and $\cone^{3 + 6\varx - 2}$ (corresponding to $\darg{}{q(0)}$ for $\varx=0$ and to $\darg{\n even(\varx - 1)}{q(\varx)}$ for $\varx \geq 1$).
This reasoning gives rise to the sub-expressions\footnote{In the following, in order to simplify notation, we denote as $tl^n(w)$
the $n^{th}$ application of $tl(\cdot)$ to the word $w$, namely $tl(tl(\ldots(tl(w))))$.}  $(I \cap L_3) \cdot \cone$ and $tl^2(I \cap L_3)$.

As to $L_4$, each element $\cone^{4 + 6\varx}$ (corresponding to $\darg{\n even(\varx)}{q(\varx)}$, $\varx \geq 0$) has one attacker $\cone^{4 + 6\varx + 1}$ (corresponding to $\darg{}{even(0)}$ for $\varx = 0$ and to $\darg{\n even(\varx - 1)}{even(\varx)}$, $\varx \geq 1$).
This gives rise to the sub-expression $(I \cap L_4) \cdot \cone$.

Turning to $L_6$, each element $\cone^{6 + 6\varx}$ (corresponding to $\darg{\n even(\varx)}{\n even(\varx)}$, $\varx \geq 0$) has one attacker $\cone^{6 + 6\varx - 1}$ (corresponding to $\darg{}{even(0)}$ for $\varx = 0$ and to $\darg{\n even(\varx - 1)}{even(\varx)}$, $\varx \geq 1$).
This gives rise to the sub-expression $tl(I \cap L_6) $.

In $L_7$, each element $\cone^{7 + 6\varx}$ (corresponding to $\darg{\n even(\varx)}{q(\varx + 1)}$, $\varx \geq 0$) has one attacker $\cone^{7 + 6\varx - 2}$ (corresponding to $\darg{}{even(0)}$ for $\varx = 0$ and to $\darg{\n even(\varx - 1)}{even(\varx)}$, $\varx \geq 1$).
This gives rise to the sub-expression $tl^2(I \cap L_7)$.

Finally, each element of $L_{11}$ $\cone^{11 + 6\varx}$,(corresponding to $\darg{\n even(\varx)}{even(\varx + 1)}$, $\varx \geq 0$) has one attacker $\cone^{11 + 6\varx - 6}$ (corresponding to $\darg{}{even(0)}$ for $\varx = 0$ and to $\darg{\n even(\varx - 1)}{even(\varx)}$, $\varx \geq 1$).
This gives rise to the sub-expression $tl^6(I \cap L_{11})$.

In summary, we obtain the following attack expression:

{\allowdisplaybreaks
\noindent
\begin{eqnarray*}
a^Q & \name & tl(I \cap \ctwo) \cdot L_3~\cup \\
    &       & tl(I \cap \cthree) \cdot \ctwo~\cup \\
    &       & tl(I \cap \cfour) \cdot L_3~\cup \\
    &       & tl(I \cap L_2)~\cup \\
    &       & (I \cap L_2) \cdot \cone \cdot \cone~\cup \\
    &       & tl^2(I \cap L_3)~\cup \\
    &       & (I \cap L_3) \cdot \cone~\cup \\
    &       & (I \cap L_4) \cdot \cone~\cup \\
    &       & tl(I \cap L_6)~\cup \\
    &       & tl^2(I \cap L_7)~\cup \\
    &       & tl^6(I \cap L_{11})\\
\end{eqnarray*}
}

The relevant mapping $\underline{a^Q}: 2^{\Sigma_Q^{*}} \rightarrow 2^{\Sigma_Q^{*}}$ follows directly:

{\allowdisplaybreaks
\noindent
\begin{eqnarray*}
\underline{a^Q}(S) & =  & tl(S \cap \ctwo) \cdot L_3~\cup \\
                   &        & tl(S \cap \cthree) \cdot \ctwo~\cup \\
                   &        & tl(S \cap \cfour) \cdot L_3~\cup \\
                   &        & tl(S \cap L_2)~\cup \\
                   &        & (S \cap L_2) \cdot \cone \cdot \cone~\cup \\
                   &        & tl^2(S \cap L_3)~\cup \\
                   &        & (S \cap L_3) \cdot \cone~\cup \\
                   &        & (S \cap L_4) \cdot \cone~\cup \\
                   &        & tl(S \cap L_6)~\cup \\
                   &        & tl^2(S \cap L_7)~\cup \\
                   &        & tl^6(S \cap L_{11})\\
\end{eqnarray*}
}

We can now apply 
Fact \ref{conjectures-inverse} to obtain the inverted mapping $\underline{a^{Q}}^{+}$.
First, we observe that on the basis of \ref{conjectures-inverse}.\ref{Prop1} 
the inverted mapping will be the union of the inverted sub-mappings corresponding to the various sub-expressions of $a^{Q}$.

Consider first, for the sake of illustration, the sub-expression $tl(I \cap \ctwo) \cdot L_3$, which has the form $b \cdot K_{\Sigma}$ with $b = tl(I \cap \ctwo)$ and $K_{\Sigma} = L_3$. 
Accordingly, Fact \ref{conjectures-inverse}.\ref{Prop2} 
applies, yielding $\underline{b}^{+}(S/L_3)$. In turn, to obtain $\underline{b}^{+}$, Fact \ref{conjectures-inverse}.\ref{Prop8} applies which, letting $c= I \cap \ctwo$, gives $\underline{c}^{+}(\Sigma_Q \cdot (S/L_3))$. Applying Fact \ref{conjectures-inverse}.\ref{Prop6} (and the base case for $I$) to $c$ we obtain $\ctwo \cap (\Sigma_Q \cdot (S/L_3))$.

The sub-expressions $tl(I \cap \cthree) \cdot \ctwo$ and $tl(I \cap \cfour) \cdot L_3$ are analogous, yielding $\cthree \cap (\Sigma_Q \cdot (S / \ctwo))$ and $\cfour \cap (\Sigma_Q \cdot (S/L_3))$.

From the sub-expression $tl(I \cap L_2)$ orderly applying Fact \ref{conjectures-inverse}.\ref{Prop8} and \ref{conjectures-inverse}.\ref{Prop6} we obtain $L_2 \cap (\Sigma_Q \cdot S)$, while from $(I \cap L_2) \cdot \cone \cdot \cone$ applying \ref{conjectures-inverse}.\ref{Prop2} and \ref{conjectures-inverse}.\ref{Prop6} we have $L_2 \cap (S / (\cone \cdot \cone))$. 

For the sub-expression $tl^2(I \cap L_3)$ we apply \ref{conjectures-inverse}.\ref{Prop8} twice and \ref{conjectures-inverse}.\ref{Prop6}, yielding $L_3 \cap (\Sigma_Q \cdot \Sigma_Q \cdot S)$.

The treatment of each of the remaining sub-expressions is similar to one of the previous cases, yielding the following result.
 
{\allowdisplaybreaks
\noindent
\begin{eqnarray*}
\underline{a^{Q}}^{+}(S) & = & \ctwo \cap (\Sigma_Q \cdot (S/L_3))~\cup \\
 &  & \cthree \cap (\Sigma_Q \cdot (S / \ctwo))~\cup \\
 &  & \cfour \cap (\Sigma_Q \cdot (S/L_3))~\cup \\
 &  & L_2 \cap (\Sigma_Q \cdot S)~\cup \\
 &  & L_2 \cap (S / (\cone \cdot \cone))~\cup \\
 &  & L_3 \cap (\Sigma_Q \cdot \Sigma_Q \cdot S)~\cup \\
 &  & L_3 \cap (S / \cone)~\cup \\
 &  & L_4 \cap (S / \cone)~\cup \\
 &  & L_6 \cap (\Sigma_Q \cdot S)~\cup \\
 &  & L_7 \cap (\Sigma_Q \cdot \Sigma_Q \cdot S)~\cup \\
 &  & L_{11} \cap (\Sigma_Q \cdot \Sigma_Q \cdot \Sigma_Q \cdot \Sigma_Q \cdot \Sigma_Q \cdot \Sigma_Q \cdot S)
\end{eqnarray*}
}

It can be easily observed that both $\underline{a^Q}$ and $\underline{a^{Q}}^{+}$ can not produce results outside $\XC_Q$ hence, for any $S \subseteq \XC_Q$,  $\pi_{a^{Q}}^{-}(S)=\underline{a^Q}(S) \cap \XC_Q = \underline{a^Q}(S)$ and $\pi_{a^{Q}}^{+}(S)= \underline{a^{Q}}^{+}(S) \cap \XC_Q = \underline{a^{Q}}^{+}(S)$.

We can now exemplify the analysis of semantics properties in $AF(Q)$.

Letting $S = \set{\ctwo} \cup \set{\cfour} \cup \set{\cone^{1 + 12\varx} | \varx \ge 0} \cup \set{\cone^{5 + 12\varx} | \varx \ge 0} \cup \set{\cone^{10 + 12\varx} | \varx \ge 0} \cup \set{\cone^{12 + 12\varx} | \varx \ge 0}$ consider the problem of checking whether $S$ is conflict-free (the structure of the infinite set $S$ is evidenced in Figure \ref{fig:af-non-finitary-ev}).

\begin{figure}[!htb]
\begin{center}
\noindent\makebox[\textwidth]{%
\scalebox{0.6} 
{
\begin{pspicture}(0,-5.07)(29.047188,5.05)
\definecolor{color1322b}{rgb}{0.8274509803921568,0.8274509803921568,0.8274509803921568}
\definecolor{color1322}{rgb}{0.996078431372549,0.996078431372549,0.996078431372549}
\definecolor{color883b}{rgb}{0.7411764705882353,0.7411764705882353,0.7411764705882353}
\definecolor{color883}{rgb}{0.803921568627451,0.803921568627451,0.803921568627451}
\psellipse[linewidth=0.04,linecolor=color1322,dimen=outer,fillstyle=solid,fillcolor=color1322b](11.831875,-1.76)(2.11,0.59)
\psellipse[linewidth=0.04,linecolor=color1322,dimen=outer,fillstyle=solid,fillcolor=color1322b](7.581875,-3.04)(2.48,0.59)
\psellipse[linewidth=0.04,linecolor=color883,dimen=outer,shadow=true,fillstyle=solid,fillcolor=color883b](4.661875,-1.4)(2.48,0.59)
\psellipse[linewidth=0.04,linecolor=color883,dimen=outer,shadow=true,fillstyle=solid,fillcolor=color883b](13.481875,-0.26)(2.02,0.59)
\psellipse[linewidth=0.04,linecolor=color883,dimen=outer,shadow=true,fillstyle=solid,fillcolor=color883b](7.391875,2.86)(1.59,0.59)
\psellipse[linewidth=0.04,linecolor=color883,dimen=outer,shadow=true,fillstyle=solid,fillcolor=color883b](10.631875,4.46)(1.25,0.59)
\psellipse[linewidth=0.04,linecolor=color883,dimen=outer,shadow=true,fillstyle=solid,fillcolor=color883b](23.011875,-2.44)(1.41,0.59)
\psellipse[linewidth=0.04,linecolor=color883,dimen=outer,shadow=true,fillstyle=solid,fillcolor=color883b](22.451876,3.58)(1.31,0.59)
\usefont{T1}{ppl}{m}{n}
\rput(7.4954686,2.86){$\cone^5 \name \darg{}{even(0)}$}
\usefont{T1}{ppl}{m}{n}
\rput(10.675468,4.44){$\cone \name \darg{}{q(0)}$}
\usefont{T1}{ppl}{m}{n}
\rput(7.9154687,-0.28){$\cone^{11} \name \darg{\lnot even(0)}{even(1)}$}
\usefont{T1}{ppl}{m}{n}
\rput(7.7054687,-3.02){$\darg{\lnot even(n-1)}{even(n)}$}
\usefont{T1}{ppl}{m}{n}
\rput(4.815469,1.36){$\cone^6 \name \darg{\lnot even(0)}{\lnot even(0)}$}
\usefont{T1}{ppl}{m}{n}
\rput(4.7454686,-1.38){$\cone^{12} \name \darg{\lnot even(1)}{\lnot even(1)}$}
\usefont{T1}{ppl}{m}{n}
\rput(5.005469,-4.16){$\darg{\lnot even(n)}{\lnot even(n)}$}
\usefont{T1}{ppl}{m}{n}
\rput(13.275469,2.84){$\cone^4 \name \darg{\lnot even(0)}{q(0)}$}
\usefont{T1}{ppl}{m}{n}
\rput(17.435469,4.42){$\cone^2 \name \darg{\lnot q(0)}{\lnot q(0)}$}
\usefont{T1}{ppl}{m}{n}
\rput(18.79547,2.86){$\cone^3 \name \darg{\lnot q(0)}{p}$}
\usefont{T1}{ppl}{m}{n}
\rput(11.695469,1.08){$\cone^7 \name \darg{\lnot even(0)}{q(1)}$}
\usefont{T1}{ppl}{m}{n}
\rput(13.505468,-0.24){$\cone^{10} \name \darg{\lnot even(1)}{q(1)}$}
\usefont{T1}{ppl}{m}{n}
\rput(17.37547,1.12){$\cone^8 \name \darg{\lnot q(1)}{\lnot q(1)}$}
\usefont{T1}{ppl}{m}{n}
\rput(18.71547,-0.32){$\cone^9 \name \darg{\lnot q(1)}{p}$}
\usefont{T1}{ppl}{m}{n}
\rput(22.485468,3.58){$\ctwo \name \darg{\lnot p}{r}$}
\usefont{T1}{ppl}{m}{n}
\rput(25.62547,3.58){$\cthree \name \darg{\lnot r}{\lnot r}$}
\usefont{T1}{ppl}{m}{n}
\rput(23.03547,-2.38){$\cfour \name \darg{\lnot p}{\lnot p}$}
\usefont{T1}{ppl}{m}{n}
\rput(11.895469,-1.76){$\darg{\lnot even(n-1)}{q(n)}$}
\usefont{T1}{ppl}{m}{n}
\rput(13.305469,-3.0){$\darg{\lnot even(n)}{q(n)}$}
\usefont{T1}{ppl}{m}{n}
\rput(17.345469,-1.72){$\darg{\lnot q(n)}{\lnot q(n)}$}
\usefont{T1}{ppl}{m}{n}
\rput(18.46547,-3.02){$\darg{\lnot q(n)}{p}$}
\psline[linewidth=0.04cm,arrowsize=0.05291667cm 2.0,arrowlength=1.4,arrowinset=0.4]{->}(7.241875,2.57)(7.221875,0.05)
\psline[linewidth=0.04cm,linestyle=dashed,dash=0.16cm 0.16cm,arrowsize=0.05291667cm 2.0,arrowlength=1.4,arrowinset=0.4]{->}(7.341875,-0.71)(7.361875,-2.67)
\psline[linewidth=0.04cm,arrowsize=0.05291667cm 2.0,arrowlength=1.4,arrowinset=0.4]{->}(7.221875,2.55)(4.721875,1.61)
\psline[linewidth=0.04cm,arrowsize=0.05291667cm 2.0,arrowlength=1.4,arrowinset=0.4]{->}(9.021875,2.85)(10.961875,1.49)
\psline[linewidth=0.04cm,arrowsize=0.05291667cm 2.0,arrowlength=1.4,arrowinset=0.4]{->}(5.921875,-0.61)(4.381875,-1.01)
\psline[linewidth=0.04cm,arrowsize=0.05291667cm 2.0,arrowlength=1.4,arrowinset=0.4]{->}(6.441875,-3.39)(4.841875,-3.85)
\psline[linewidth=0.04cm,arrowsize=0.05291667cm 2.0,arrowlength=1.4,arrowinset=0.4]{->}(23.741875,3.59)(24.441875,3.59)
\psline[linewidth=0.04cm,linestyle=dashed,dash=0.16cm 0.16cm](7.3723435,-3.35)(7.361875,-5.05)
\psline[linewidth=0.04cm,linestyle=dashed,dash=0.16cm 0.16cm](18.541876,-3.45)(18.581875,-4.93)
\psline[linewidth=0.04cm,linestyle=dashed,dash=0.16cm 0.16cm](13.452344,-3.51)(13.441875,-4.83)
\psline[linewidth=0.04cm,linestyle=dashed,dash=0.16cm 0.16cm,arrowsize=0.05291667cm 2.0,arrowlength=1.4,arrowinset=0.4]{->}(8.501875,-0.67)(11.101875,-1.49)
\psline[linewidth=0.04cm,arrowsize=0.05291667cm 2.0,arrowlength=1.4,arrowinset=0.4]{->}(9.021875,2.83)(11.381875,2.81)
\psline[linewidth=0.04cm,arrowsize=0.05291667cm 2.0,arrowlength=1.4,arrowinset=0.4]{->}(10.281875,-0.27)(11.501875,-0.31)
\psline[linewidth=0.04cm,arrowsize=0.05291667cm 2.0,arrowlength=1.4,arrowinset=0.4]{->}(10.041875,-3.01)(11.781875,-3.01)
\psline[linewidth=0.04cm,arrowsize=0.05291667cm 2.0,arrowlength=1.4,arrowinset=0.4]{->}(12.001875,4.45)(15.641875,4.41)
\psline[linewidth=0.04cm,arrowsize=0.05291667cm 2.0,arrowlength=1.4,arrowinset=0.4]{->}(11.941875,4.43)(17.141874,3.13)
\psline[linewidth=0.04cm,arrowsize=0.05291667cm 2.0,arrowlength=1.4,arrowinset=0.4]{->}(15.221875,2.85)(17.161875,2.85)
\psline[linewidth=0.04cm,arrowsize=0.05291667cm 2.0,arrowlength=1.4,arrowinset=0.4]{->}(15.201875,2.87)(16.041876,4.21)
\psline[linewidth=0.04cm,arrowsize=0.05291667cm 2.0,arrowlength=1.4,arrowinset=0.4]{->}(13.781875,1.11)(15.401875,1.09)
\psline[linewidth=0.04cm,arrowsize=0.05291667cm 2.0,arrowlength=1.4,arrowinset=0.4]{->}(13.761875,1.09)(17.521875,-0.01)
\psline[linewidth=0.04cm,arrowsize=0.05291667cm 2.0,arrowlength=1.4,arrowinset=0.4]{->}(15.561875,-0.25)(17.181875,-0.29)
\psline[linewidth=0.04cm,arrowsize=0.05291667cm 2.0,arrowlength=1.4,arrowinset=0.4]{->}(15.561875,-0.25)(16.101875,0.91)
\psline[linewidth=0.04cm,arrowsize=0.05291667cm 2.0,arrowlength=1.4,arrowinset=0.4]{->}(13.981875,-1.75)(15.781875,-1.81)
\psline[linewidth=0.04cm,arrowsize=0.05291667cm 2.0,arrowlength=1.4,arrowinset=0.4]{->}(14.041875,-1.77)(17.961876,-2.71)
\psline[linewidth=0.04cm,arrowsize=0.05291667cm 2.0,arrowlength=1.4,arrowinset=0.4]{->}(15.061875,-3.01)(17.381874,-3.01)
\psline[linewidth=0.04cm,arrowsize=0.05291667cm 2.0,arrowlength=1.4,arrowinset=0.4]{->}(15.101875,-3.01)(17.041876,-1.99)
\psline[linewidth=0.04cm,arrowsize=0.05291667cm 2.0,arrowlength=1.4,arrowinset=0.4]{->}(20.241875,2.83)(21.781876,3.35)
\psline[linewidth=0.04cm,arrowsize=0.05291667cm 2.0,arrowlength=1.4,arrowinset=0.4]{->}(20.261875,2.83)(23.021875,-2.11)
\psline[linewidth=0.04cm,arrowsize=0.05291667cm 2.0,arrowlength=1.4,arrowinset=0.4]{->}(20.241875,-0.31)(22.401875,3.33)
\psline[linewidth=0.04cm,arrowsize=0.05291667cm 2.0,arrowlength=1.4,arrowinset=0.4]{->}(20.221874,-0.31)(22.761875,-2.09)
\psline[linewidth=0.04cm,arrowsize=0.05291667cm 2.0,arrowlength=1.4,arrowinset=0.4]{->}(19.521875,-3.01)(22.501875,3.31)
\psline[linewidth=0.04cm,arrowsize=0.05291667cm 2.0,arrowlength=1.4,arrowinset=0.4]{->}(19.581875,-2.99)(22.461876,-2.59)
\psline[linewidth=0.04cm,linestyle=dashed,dash=0.16cm 0.16cm](16.401875,-2.13)(16.421875,-4.93)
\psline[linewidth=0.04cm,linestyle=dashed,dash=0.16cm 0.16cm](4.921875,-4.51)(4.901875,-4.93)
\psline[linewidth=0.04cm,linestyle=dashed,dash=0.16cm 0.16cm](11.221875,-2.07)(11.241875,-4.93)
\end{pspicture} 
}}
\caption{$AF(Q)$ with an infinite subset evidenced.}
\label{fig:af-non-finitary-ev}
\end{center}
\end{figure}
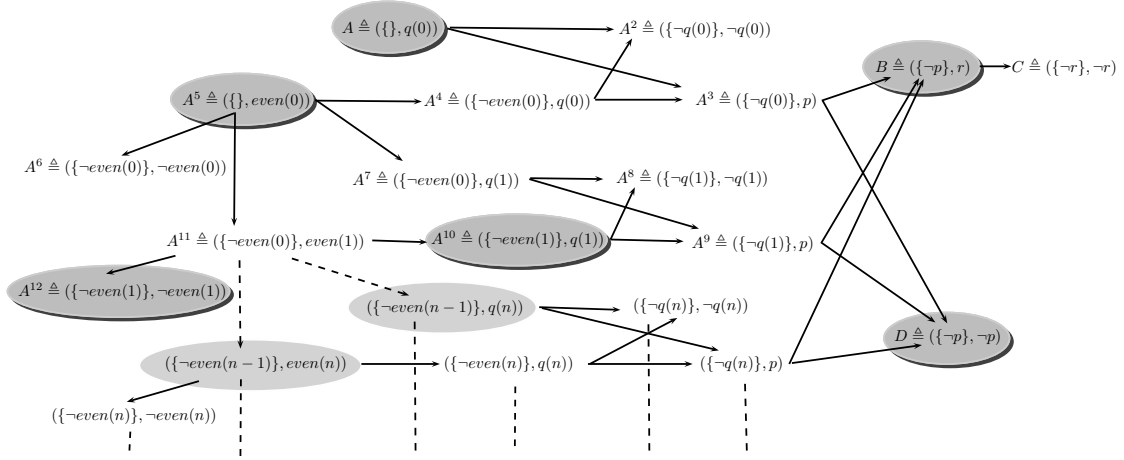

We have to prove that $\pi_{a^{Q}}^{-}(S) \cap S = \emptyset$.

We can now apply $\underline{a^Q}$ to the subsets forming the definition of $S$.
Noting in particular that 
\begin{itemize}
\item $\set{\cone^{1 + 12\varx} | \varx \ge 0}$ intersects $L_7$ \virg{starting} from $\wseq{\cone}{13}$
\item $\set{\cone^{5 + 12\varx} | \varx \ge 0}$ intersects $L_{11}$ \virg{starting} from $\wseq{\cone}{17}$
\item $\set{\cone^{10 + 12\varx} | \varx \ge 0}$ intersects $L_{4}$ \virg{starting} from $\wseq{\cone}{10}$
\item  $\set{\cone^{12 + 12\varx} | \varx \ge 0}$ intersects $L_6$ \virg{starting} from $\wseq{\cone}{12}$
\end{itemize}

we obtain:
{\allowdisplaybreaks
\noindent
\begin{eqnarray*}
\pi_{a^{Q}}^{-}(S) & = & L_3 \cup L_3 \cup \\
 & & tl^2(\wseq{\cone}{13} \cdot (\wseq{\cone}{12})^{*}) \cup \\
 & & tl^6(\wseq{\cone}{17} \cdot (\wseq{\cone}{12})^{*}) \cup \\
 & & (\wseq{\cone}{10} \cdot (\wseq{\cone}{12})^{*}) \cdot \cone \cup  \\
 & & tl(\wseq{\cone}{12} \cdot (\wseq{\cone}{12})^{*}).
\end{eqnarray*}
}

Noting that the last four elements of the above expression coincide, we have
$\pi_{a^{Q}}^{-}(S)= L_3 \cup \wseq{\cone}{11} \cdot (\wseq{\cone}{12})^{*}$.
As to conflict-freeness, it is easily seen that $\pi_{a^{Q}}^{-}(S) \cap S = \emptyset$.

Let us now turn to the problem of acceptability checking, by verifying whether the argument $\ctwo$ is acceptable wrt $S$, i.e. $\ctwo \in \FC_Q(S)$.
This requires to check whether $\pi_{a^{Q}}^{-}(\set{\ctwo}) \setminus \pi_{a^{Q}}^{+}(S) = \emptyset$ (see the proof of part b of Theorem \ref{thm:afs-ops}).

To identify $\pi_{a^{Q}}^{+}(S)$ we can apply $\underline{a^{Q}}^{+}$ to the subsets evidenced in the above definition of $S$.
In particular:
\begin{itemize}
\item the second item in the definition of $\underline{a^{Q}}^{+}$ is effective (i.e. gives a non-empty result) on $\set{\ctwo}$ yielding $\set{\cthree}$
\item no item is effective on $\set{\cfour}$
\item the fourth and sixth items are effective on $\set{\cone^{1 + 12\varx} | \varx \ge 0}$ yielding $\cone \cdot \cone \cdot (\wseq{\cone}{12})^{*}$ and $\cone \cdot \cone \cdot \cone \cdot (\wseq{\cone}{12})^{*}$
\item the eighth, ninth, tenth, and eleventh items are effective on $\set{\cone^{5 + 12\varx} | \varx \ge 0}$ yielding $\wseq{\cone}{4} \cdot (\wseq{\cone}{12})^{*}$, $\wseq{\cone}{6} \cdot (\wseq{\cone}{12})^{*}$, $\wseq{\cone}{7} \cdot (\wseq{\cone}{12})^{*}$, $\wseq{\cone}{11} \cdot (\wseq{\cone}{12})^{*}$
\item the fifth and seventh items are effective on $\set{\cone^{10 + 12\varx} | \varx \ge 0}$ yielding $\wseq{\cone}{8} \cdot (\wseq{\cone}{12})^{*}$ and $\wseq{\cone}{9} \cdot (\wseq{\cone}{12})^{*}$
\item no item is effective on $\set{\cone^{12 + 12\varx} | \varx \ge 0}$

\end{itemize}

Summing up, 

{\allowdisplaybreaks
\noindent
\begin{eqnarray*}
\pi_{a^{Q}}^{+}(S) & = & \set{\cthree} \cup \\
 & & \cone \cdot \cone \cdot (\wseq{\cone}{12})^{*} \cup \\
 & & \cone \cdot \cone \cdot \cone \cdot (\wseq{\cone}{12})^{*} \cup \\
 & & \wseq{\cone}{4} \cdot (\wseq{\cone}{12})^{*} \cup \\
 & & \wseq{\cone}{6} \cdot (\wseq{\cone}{12})^{*} \cup \\
 & & \wseq{\cone}{7} \cdot (\wseq{\cone}{12})^{*} \cup \\
 & & \wseq{\cone}{11} \cdot (\wseq{\cone}{12})^{*} \cup \\
 & &\wseq{\cone}{8} \cdot (\wseq{\cone}{12})^{*} \cup \\
 & &\wseq{\cone}{9} \cdot (\wseq{\cone}{12})^{*}.
\end{eqnarray*}
}

Since $\pi_{a^{Q}}^{-}(\set{\ctwo}) = L_3$ while from the expression derived above we note that $\cone \cdot \cone \cdot \cone \cdot (\wseq{\cone}{12})^{*} \cup \wseq{\cone}{9} \cdot (\wseq{\cone}{12})^{*} = L_3 \subset \pi_{a^{Q}}^{+}(S)$, it follows that $\ctwo \in \FC_Q(S)$.

Let us now check whether $S$ is admissible. We have already proved that $S$ is conflict free, therefore, from part (c) 
of Theorem \ref{thm:afs-ops}, we have to check whether  $\pi_{a^{Q}}^{-}(S) \setminus \pi_{a^{Q}}^{+}(S)=\emptyset$. 

Recalling 

{\allowdisplaybreaks
\noindent
\begin{eqnarray*}
\pi_{a^{Q}}^{-}(S) & = & L_3~\cup~\wseq{\cone}{11} \cdot (\wseq{\cone}{12})^{*}
\end{eqnarray*}
}

it is easily seen that $\pi_{a^{Q}}^{-}(S) \setminus \pi_{a^{Q}}^{+}(S) = \emptyset$.

We can also check whether $S$ is a stable extension. Since $S$ is conflict free, we just need to confirm that $S\cup\pi_{a}^{+}(S)=\XC$:

{\allowdisplaybreaks
\noindent
\begin{eqnarray*}
S \cup \pi_{a^{Q}}^{+}(S) & =  & \ctwo~\cup~\cfour~\cup~\cone \cdot (\wseq{\cone}{12})^{*}~\cup\\
             &   & \wseq{\cone}{5} \cdot (\wseq{\cone}{12})^{*}~\cup~\wseq{\cone}{10} \cdot (\wseq{\cone}{12})^{*}~\cup\\
             &   & \wseq{\cone}{12} \cdot (\wseq{\cone}{12})^{*}~\cup\\
             &   & \set{\cthree} \cup 
 \cone \cdot \cone \cdot (\wseq{\cone}{12})^{*}~\cup~\cone \cdot \cone \cdot \cone \cdot (\wseq{\cone}{12})^{*}~\cup \\
             &   & \wseq{\cone}{4} \cdot (\wseq{\cone}{12})^{*}~\cup~\wseq{\cone}{6} \cdot (\wseq{\cone}{12})^{*}~\cup\\
&   & \wseq{\cone}{7} \cdot (\wseq{\cone}{12})^{*}~\cup \\
&   & \wseq{\cone}{8} \cdot (\wseq{\cone}{12})^{*}~\cup~\wseq{\cone}{9} \cdot (\wseq{\cone}{12})^{*}~\cup \\
             &   & \wseq{\cone}{11} \cdot (\wseq{\cone}{12})^{*}~\cup \\
             & = & \cone \cdot \cone^{*}~\cup~\ctwo~\cup~\cthree~\cup~\cfour \\
             & = & \XC_Q
\end{eqnarray*}
}

Then $S$ is a stable extension of $AF(Q)$. From this fact it follows that $S$ is also a complete extension of $AF(Q)$, hence $\FC_Q(S)=S$.
This could be independently verified, according to part (f) 
of Theorem \ref{thm:afs-ops}, computing  $\FC_Q(S) = \XC_Q~\setminus~\pi^{+}_{a^Q}(~\XC_Q\setminus\pi^{+}_{a^Q}(S)~)$.
As we already know, $\XC_Q\setminus\pi^{+}_{a^Q}(S) = S$, hence  $\FC_Q(S) = \XC_Q~\setminus~\pi^{+}_{a^Q}(S) = S$.

It can also be observed that $AF(Q)$ is well-founded (Definition 29 of \cite{dung:1995}) namely there is no infinite sequence of arguments $X_0, X_1, \ldots, X_n, \ldots$ such that $X_{i+1}$ attacks $X_{i}$.
Note in particular that letting $X_0$ any argument in $L_{11}$, i.e. $X_0=\cone^{11+6\varx}$ for some $\varx \geq 0$, there is only a finite sequence $X_0, \ldots, X_{\varx + 1}$ satisfying the condition of Definition 29 in \cite{dung:1995}, with $X_1 = \cone^{11+6(\varx-1)}, \ldots, X_{\varx + 1}=\cone^{5}$. Note also that the framework would not be well-founded with a \virg{reverse} attack relation, namely if we had $(\cone^{11+6(\varx+1)}, \cone^{11+6(\varx)} ) \in \AC_Q$ instead of having $(\cone^{11+6(\varx-1)}, \cone^{11+6(\varx)} ) \in \AC_Q$.

Since $AF(Q)$ is well-founded, by Theorem 30 of \cite{dung:1995} it has exactly one complete extension which is grounded, preferred and stable, namely the set $S$ identified above.
It is described by the regular language $L_{\EC_{gr}^{Q}}$ accepted by the DFA depicted in Figure \ref{fig:af-non-finitary-dfa-grounded}.

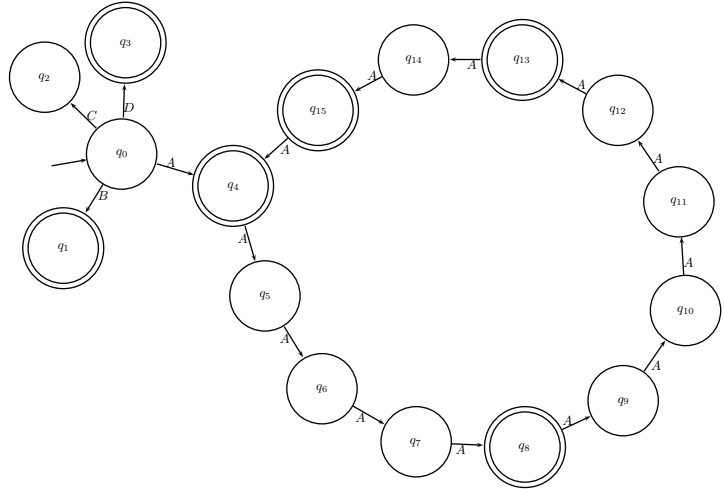
\begin{figure}[!htb]
\begin{center}
\scalebox{0.5}{

\begin{pspicture}[linewidth=1bp](0bp,0bp)(561.25bp,368.8bp)

  \pstVerb{2 setlinejoin} 
\psset{linecolor=black}
  \psbezier[arrows=->](109.32bp,280.67bp)(109.47bp,285.41bp)(109.63bp,290.43bp)(110.12bp,305.65bp)
  \psset{linecolor=[rgb]{0.0,0.0,0.0}}
  \rput(113.55bp,288.06bp){$\cfour$}
  \psset{linecolor=black}
  \psbezier[arrows=->](304.46bp,311.51bp)(300.73bp,309.5bp)(296.81bp,307.39bp)(283.79bp,300.39bp)
  \psset{linecolor=[rgb]{0.0,0.0,0.0}}
  \rput(296.67bp,312.39bp){$\cone$}
  \psset{linecolor=black}
  \psbezier[arrows=->](200.96bp,198.75bp)(202.53bp,193.33bp)(204.19bp,187.62bp)(208.65bp,172.23bp)
  \psset{linecolor=[rgb]{0.0,0.0,0.0}}
  \rput(199.38bp,189.39bp){$\cone$}
  \psset{linecolor=black}
  \psbezier[arrows=->](88.941bp,272.46bp)(85.033bp,276.29bp)(80.861bp,280.38bp)(69.498bp,291.53bp)
  \psset{linecolor=[rgb]{0.0,0.0,0.0}}
  \rput(85.847bp,282.44bp){$\cthree$}
  \psset{linecolor=black}
  \psbezier[arrows=->](233.3bp,265.04bp)(230.07bp,262.16bp)(226.71bp,259.17bp)(215.66bp,249.32bp)
  \psset{linecolor=[rgb]{0.0,0.0,0.0}}
  \rput(231.33bp,256.61bp){$\cone$}
  \psset{linecolor=black}
  \psbezier[arrows=->](457.97bp,298.53bp)(454.23bp,300.53bp)(450.29bp,302.63bp)(437.2bp,309.61bp)
  \psset{linecolor=[rgb]{0.0,0.0,0.0}}
  \rput(454.16bp,305.63bp){$\cone$}
  \psset{linecolor=black}
  \psbezier[arrows=->](512.88bp,240.14bp)(509.73bp,244.8bp)(506.37bp,249.77bp)(497.23bp,263.28bp)
  \psset{linecolor=[rgb]{0.0,0.0,0.0}}
  \rput(511.97bp,249.4bp){$\cone$}
  \psset{linecolor=black}
  \psbezier[arrows=->](356.89bp,34.467bp)(361.14bp,34.275bp)(365.6bp,34.074bp)(380.43bp,33.405bp)
  \psset{linecolor=[rgb]{0.0,0.0,0.0}}
  \rput(363.48bp,30.169bp){$\cone$}
  \psset{linecolor=black}
  \psbezier[arrows=->](282.57bp,62.857bp)(287.27bp,60.139bp)(292.3bp,57.238bp)(305.98bp,49.336bp)
  \psset{linecolor=[rgb]{0.0,0.0,0.0}}
  \rput(287.91bp,54.619bp){$\cone$}
  \psset{linecolor=black}
  \psbezier[arrows=->](501.71bp,89.112bp)(505bp,93.882bp)(508.53bp,99.004bp)(517.7bp,112.33bp)
  \psset{linecolor=[rgb]{0.0,0.0,0.0}}
  \rput(510.85bp,93.572bp){$\cone$}
  \psset{linecolor=black}
  \psbezier[arrows=->](55.233bp,244.13bp)(60.588bp,245.06bp)(66.288bp,246.04bp)(81.779bp,248.72bp)
  \psbezier[arrows=->](378.52bp,324.32bp)(374.24bp,324.33bp)(369.81bp,324.34bp)(355.43bp,324.36bp)
  \psset{linecolor=[rgb]{0.0,0.0,0.0}}
  \rput(371.98bp,320.33bp){$\cone$}
  \psset{linecolor=black}
  \psbezier[arrows=->](134.52bp,245.62bp)(140.33bp,243.91bp)(146.62bp,242.05bp)(162.45bp,237.36bp)
  \psset{linecolor=[rgb]{0.0,0.0,0.0}}
  \rput(144.68bp,246.92bp){$\cone$}
  \psset{linecolor=black}
  \psbezier[arrows=->](531.62bp,161.92bp)(531.27bp,167.76bp)(530.89bp,174.04bp)(529.91bp,190.33bp)
  \psset{linecolor=[rgb]{0.0,0.0,0.0}}
  \rput(535.07bp,171.05bp){$\cone$}
  \psset{linecolor=black}
  \psbezier[arrows=->](230.51bp,122.86bp)(233.37bp,118.21bp)(236.42bp,113.25bp)(244.74bp,99.736bp)
  \psset{linecolor=[rgb]{0.0,0.0,0.0}}
  \rput(230.97bp,113.61bp){$\cone$}
  \psset{linecolor=black}
  \psbezier[arrows=->](93.983bp,230.1bp)(91.475bp,226.07bp)(88.817bp,221.8bp)(80.77bp,208.86bp)
  \psset{linecolor=[rgb]{0.0,0.0,0.0}}
  \rput(94.073bp,221.82bp){$\ctwo$}
  \psset{linecolor=black}
  \psbezier[arrows=->](439.85bp,45.163bp)(443.86bp,47.027bp)(448.01bp,48.959bp)(461.48bp,55.235bp)
  \psset{linecolor=[rgb]{0.0,0.0,0.0}}
  \rput(443.98bp,52.016bp){$\cone$}
{%
  \psset{linecolor=[rgb]{0.0,0.0,0.0}}
  \psellipse[](64bp,182bp)(27bp,27bp)
  \psellipse[](64bp,182bp)(31bp,31bp)
  \rput(64.293bp,182.38bp){$q_1$}
}%
{%
  \psset{linecolor=[rgb]{0.0,0.0,0.0}}
  \psellipse[](108bp,253bp)(27bp,27bp)
  \rput(108.44bp,253.34bp){$q_0$}
}%
{%
  \psset{linecolor=[rgb]{0.0,0.0,0.0}}
  \psellipse[](111bp,337bp)(27bp,27bp)
  \psellipse[](111bp,337bp)(31bp,31bp)
  \rput(111.12bp,336.8bp){$q_3$}
}%
{%
  \psset{linecolor=[rgb]{0.0,0.0,0.0}}
  \psellipse[](50bp,311bp)(27bp,27bp)
  \rput(49.887bp,310.76bp){$q_2$}
}%
{%
  \psset{linecolor=[rgb]{0.0,0.0,0.0}}
  \psellipse[](216bp,146bp)(27bp,27bp)
  \rput(216.23bp,146.04bp){$q_5$}
}%
{%
  \psset{linecolor=[rgb]{0.0,0.0,0.0}}
  \psellipse[](192bp,229bp)(27bp,27bp)
  \psellipse[](192bp,229bp)(31bp,31bp)
  \rput(192.34bp,228.53bp){$q_4$}
}%
{%
  \psset{linecolor=[rgb]{0.0,0.0,0.0}}
  \psellipse[](330bp,36bp)(27bp,27bp)
  \rput(329.59bp,35.698bp){$q_7$}
}%
{%
  \psset{linecolor=[rgb]{0.0,0.0,0.0}}
  \psellipse[](259bp,76bp)(27bp,27bp)
  \rput(259.1bp,76.415bp){$q_6$}
}%
{%
  \psset{linecolor=[rgb]{0.0,0.0,0.0}}
  \psellipse[](486bp,67bp)(27bp,27bp)
  \rput(486.36bp,66.819bp){$q_9$}
}%
{%
  \psset{linecolor=[rgb]{0.0,0.0,0.0}}
  \psellipse[](412bp,32bp)(27bp,27bp)
  \psellipse[](412bp,32bp)(31bp,31bp)
  \rput(411.58bp,32bp){$q_8$}
}%
{%
  \psset{linecolor=[rgb]{0.0,0.0,0.0}}
  \psellipse[](328bp,324bp)(27bp,27bp)
  \rput(328.43bp,324.41bp){$q_{14}$}
}%
{%
  \psset{linecolor=[rgb]{0.0,0.0,0.0}}
  \psellipse[](256bp,286bp)(27bp,27bp)
  \psellipse[](256bp,286bp)(31bp,31bp)
  \rput(256.44bp,285.67bp){$q_{15}$}
}%
{%
  \psset{linecolor=[rgb]{0.0,0.0,0.0}}
  \psellipse[](528bp,217bp)(27bp,27bp)
  \rput(528.27bp,217.41bp){$q_{11}$}
}%
{%
  \psset{linecolor=[rgb]{0.0,0.0,0.0}}
  \psellipse[](533bp,135bp)(27bp,27bp)
  \rput(533.25bp,134.9bp){$q_{10}$}
}%
{%
  \psset{linecolor=[rgb]{0.0,0.0,0.0}}
  \psellipse[](410bp,324bp)(27bp,27bp)
  \psellipse[](410bp,324bp)(31bp,31bp)
  \rput(409.71bp,324.26bp){$q_{13}$}
}%
{%
  \psset{linecolor=[rgb]{0.0,0.0,0.0}}
  \psellipse[](482bp,286bp)(27bp,27bp)
  \rput(482.07bp,285.68bp){$q_{12}$}
}%
\end{pspicture}

}
\caption{A DFA accepting the regular language representing the unique complete, grounded, preferred, and stable extension of $AF(Q)$.}
\label{fig:af-non-finitary-dfa-grounded}
\end{center}
\end{figure}

\subsection{An example in multi-agent negotiation (from Sec. \ref{subsubsec:intr-ex-negot})}\label{examp:negotiation}

Referring to the description of the example given in Section \ref{subsubsec:intr-ex-negot}, the global argumentation framework $AF_{neg}$ arising from the non-terminating message exchanges among the three agents is depicted in Figure \ref{fig:negotiation-three}.

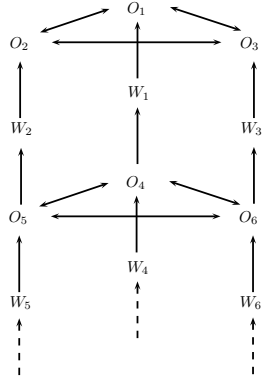
\begin{figure}[!htb]
\begin{center}
\scalebox{0.6} 
{
\begin{pspicture}(0,-4.1709375)(6.2671876,4.1909375)
\usefont{T1}{ppl}{m}{n}
\rput(3.1054688,3.9790626){$O_1$}
\usefont{T1}{ppl}{m}{n}
\rput(0.5054687,3.1990626){$O_2$}
\usefont{T1}{ppl}{m}{n}
\rput(5.6054688,3.1990626){$O_3$}
\usefont{T1}{ppl}{m}{n}
\rput(3.1554687,2.0990624){$W_1$}
\usefont{T1}{ppl}{m}{n}
\rput(0.55546874,1.3190625){$W_2$}
\usefont{T1}{ppl}{m}{n}
\rput(5.655469,1.3190625){$W_3$}
\psline[linewidth=0.04cm,arrowsize=0.05291667cm 2.0,arrowlength=1.4,arrowinset=0.4]{<->}(0.961875,3.4490626)(2.501875,3.9690626)
\psline[linewidth=0.04cm,arrowsize=0.05291667cm 2.0,arrowlength=1.4,arrowinset=0.4]{<->}(3.841875,4.0090623)(5.421875,3.4890625)
\psline[linewidth=0.04cm,arrowsize=0.05291667cm 2.0,arrowlength=1.4,arrowinset=0.4]{<->}(1.221875,3.2290626)(4.981875,3.2290626)
\psline[linewidth=0.04cm,arrowsize=0.05291667cm 2.0,arrowlength=1.4,arrowinset=0.4]{<-}(3.121875,3.6890626)(3.121875,2.4290626)
\psline[linewidth=0.04cm,arrowsize=0.05291667cm 2.0,arrowlength=1.4,arrowinset=0.4]{<-}(0.521875,2.8090625)(0.521875,1.5490625)
\psline[linewidth=0.04cm,arrowsize=0.05291667cm 2.0,arrowlength=1.4,arrowinset=0.4]{<-}(5.701875,2.8090625)(5.701875,1.5490625)
\usefont{T1}{ppl}{m}{n}
\rput(3.0854688,0.1190625){$O_4$}
\usefont{T1}{ppl}{m}{n}
\rput(0.48546875,-0.6609375){$O_5$}
\usefont{T1}{ppl}{m}{n}
\rput(5.585469,-0.6609375){$O_6$}
\usefont{T1}{ppl}{m}{n}
\rput(3.1354687,-1.7609375){$W_4$}
\usefont{T1}{ppl}{m}{n}
\rput(0.53546876,-2.5409374){$W_5$}
\usefont{T1}{ppl}{m}{n}
\rput(5.635469,-2.5409374){$W_6$}
\psline[linewidth=0.04cm,arrowsize=0.05291667cm 2.0,arrowlength=1.4,arrowinset=0.4]{<->}(0.941875,-0.4109375)(2.481875,0.1090625)
\psline[linewidth=0.04cm,arrowsize=0.05291667cm 2.0,arrowlength=1.4,arrowinset=0.4]{<->}(3.821875,0.1490625)(5.401875,-0.3709375)
\psline[linewidth=0.04cm,arrowsize=0.05291667cm 2.0,arrowlength=1.4,arrowinset=0.4]{<->}(1.201875,-0.6309375)(4.961875,-0.6309375)
\psline[linewidth=0.04cm,arrowsize=0.05291667cm 2.0,arrowlength=1.4,arrowinset=0.4]{<-}(3.101875,-0.1709375)(3.101875,-1.4309375)
\psline[linewidth=0.04cm,arrowsize=0.05291667cm 2.0,arrowlength=1.4,arrowinset=0.4]{<-}(0.501875,-1.0509375)(0.501875,-2.3109374)
\psline[linewidth=0.04cm,arrowsize=0.05291667cm 2.0,arrowlength=1.4,arrowinset=0.4]{<-}(5.681875,-1.0509375)(5.681875,-2.3109374)
\psline[linewidth=0.04cm,arrowsize=0.05291667cm 2.0,arrowlength=1.4,arrowinset=0.4]{<-}(0.541875,0.8890625)(0.541875,-0.3709375)
\psline[linewidth=0.04cm,arrowsize=0.05291667cm 2.0,arrowlength=1.4,arrowinset=0.4]{<-}(3.121875,1.7890625)(3.121875,0.5290625)
\psline[linewidth=0.04cm,arrowsize=0.05291667cm 2.0,arrowlength=1.4,arrowinset=0.4]{<-}(5.701875,0.9090625)(5.701875,-0.3509375)
\psline[linewidth=0.04cm,linestyle=dashed,dash=0.16cm 0.16cm,arrowsize=0.05291667cm 2.0,arrowlength=1.4,arrowinset=0.4]{<-}(0.501875,-2.8909376)(0.501875,-4.1509376)
\psline[linewidth=0.04cm,linestyle=dashed,dash=0.16cm 0.16cm,arrowsize=0.05291667cm 2.0,arrowlength=1.4,arrowinset=0.4]{<-}(3.121875,-2.0709374)(3.121875,-3.3309374)
\psline[linewidth=0.04cm,linestyle=dashed,dash=0.16cm 0.16cm,arrowsize=0.05291667cm 2.0,arrowlength=1.4,arrowinset=0.4]{<-}(5.681875,-2.8709376)(5.681875,-4.1309376)
\end{pspicture} 
}
\caption{Graphical representation of $AF_{neg}$.}
\label{fig:negotiation-three}
\end{center}
\end{figure}

Upon detection of a long sequence of withdrawals and reiterations of the same offers (and assuming that the agents programmatically repeat their behavior), the market authority can identify\footnote{The problem of identifying an AF specification from a regular sequence of observations has direct connections with the the widely studied (and partially overlapping) fields of automata identification and grammatical inference \cite{Higuera05}. Defining algorithms for the identification of AF specification is an interesting issue for future work, that we are confident can be faced resorting to techniques borrowed from the above mentioned areas.} the relevant AF specification, which can be given as follows. 

Let $\Sigma=\set{0}$ and $\XC~=~\set{0^i~:~i\geq1}~=~\set{0}\cdot\set{0}^{*}$.
We can partition $\XC$ into six sets $L_A$, $L_B$, $L_C$, $L_D$, $L_E$, and $L_F$ (corresponding respectively to the six sequences $\set{O_1, O_4, \ldots}$, $\set{O_2, O_5, \ldots}$, $\set{O_3, O_6, \ldots}$, $\set{W_1, W_4, \ldots}$, $\set{W_2, W_5, \ldots}$, $\set{W_3, W_6, \ldots}$) as follows:
\[
\begin{array}{lclcl}
L_A&\mbox{ $=$ }&\set{0^{6i+1}~:~i\geq0}&\mbox{ $=$ }&\set{0}\cdot\set{000000}^{*}\\
L_B&\mbox{ $=$ }&\set{0^{6i+2}~:~i\geq0}&\mbox{ $=$ }&\set{00}\cdot\set{000000}^{*}\\
L_C&\mbox{ $=$ }&\set{0^{6i+3}~:~i\geq0}&\mbox{ $=$ }&\set{000}\cdot\set{000000}^{*}\\
L_D&\mbox{ $=$ }&\set{0^{6i+4}~:~i\geq0}&\mbox{ $=$ }&\set{0000}\cdot\set{000000}^{*}\\
L_E&\mbox{ $=$ }&\set{0^{6i+5}~:~i\geq0}&\mbox{ $=$ }&\set{00000}\cdot\set{000000}^{*}\\
L_F&\mbox{ $=$ }&\set{0^{6i}~:~i\geq1}&\mbox{ $=$ }&\set{000000}\cdot\set{000000}^{*}
\end{array}
\]

The attack expression can then be formulated as follows:

$ a = (I \cdot 000)~\cup~ (I\cap L_A)\cdot0 \cup (I\cap L_A)\cdot(00) \cup (I\cap L_B)\cdot 0 \cup tl((I\cap L_B)) \cup  tl(I\cap L_C) \cup tl(tl(I\cap L_C))$.

The market authority can then stop the activities of the agents and check whether some combination of offers and withdrawals can be regarded as a feasible solution (the market authority is interested in favouring the execution of as many exchanges as possible).
Using the algorithms presented in Section \ref{sec:computing}, it can be checked that:
\begin{itemize}
\item all three sets representing the reiteration of a specific offer, namely $L_A$, $L_B$, and $L_C$ corresponding respectively to $\set{O_1, O_4, O_7, \ldots}$, $\set{O_2, O_5, O_8, \ldots}$, and $\set{O_3, O_6, O_9, \ldots}$, are admissible
\item none of the possible pairwise unions of the three sets above is admissible
\item each set consisting of the reiteration of an offer and of the withdrawals of the two other offers (i.e. each of the following sets $L_A \cup L_E \cup L_F$; $L_B \cup L_D \cup L_F$; $L_C \cup L_D \cup L_E$;) is stable.
\end{itemize}

On the basis of these evaluations, it emerges that exactly one of the three exchanges can be executed, with the choice left to the authority itself.

Consider now a similar situation with four agents involved in the loop, with the initial situation as described in Table \ref{tab:4negotiation-start}.

\begin{table}[htb]
  \centering
  \begin{tabular}{| c | c | c | c |}
  \hline
  Agent ID & Owns  & Knows & Preference rank \\
  \hline
  $A_1$    & $R_d$ & $A_2$ owns $R_c$  & $R_a > R_b > R_c > R_d$ \\
  \hline
  $A_2$    & $R_c$ & $A_3$ owns $R_b$  & $R_d > R_a > R_b > R_c$\\
  \hline
  $A_3$    & $R_b$ & $A_4$ owns $R_a$  & $R_c > R_d > R_a > R_b$ \\
  \hline
  $A_4$    & $R_a$ & $A_1$ owns $R_d$  & $R_b > R_c > R_d > R_a$ \\
  \hline
  \end{tabular}
  \caption{Initial state of the negotiation example with 4 agents}
  \label{tab:4negotiation-start}
\end{table}

In this case in the first round we have four offers, namely: 
\begin{itemize}
\item $O_1 = Off(t_0, (A_1,A_2, Exch(R_d,R_c)))$
\item $O_2 = Off(t_0, (A_2,A_3, Exch(R_c,R_b)))$
\item $O_3 = Off(t_0, (A_3,A_4, Exch(R_b,R_a)))$
\item $O_4 = Off(t_0, (A_4,A_1, Exch(R_a,R_d)))$
\end{itemize} 
As in the case above we have consequently four withdrawals, four offers in turn and so on (see the framework $AF_{neg4}$ in Figure \ref{fig:negotiation-four}).

\begin{figure}[!htb]
\begin{center}
\scalebox{0.6} 
{
\begin{pspicture}(0,-4.8009377)(7.5671873,4.8209376)
\usefont{T1}{ptm}{m}{n}
\rput(1.9754688,4.6090627){$O_1$}
\usefont{T1}{ptm}{m}{n}
\rput(5.2654686,4.6090627){$O_2$}
\usefont{T1}{ptm}{m}{n}
\rput(5.2654686,2.5490625){$O_3$}
\usefont{T1}{ptm}{m}{n}
\rput(1.9754688,2.5490625){$O_4$}
\psline[linewidth=0.04cm,arrowsize=0.05291667cm 2.0,arrowlength=1.4,arrowinset=0.4]{<->}(2.781875,4.6590624)(4.661875,4.6590624)
\psline[linewidth=0.04cm,arrowsize=0.05291667cm 2.0,arrowlength=1.4,arrowinset=0.4]{<->}(2.721875,2.5790625)(4.601875,2.5790625)
\psline[linewidth=0.04cm,arrowsize=0.05291667cm 2.0,arrowlength=1.4,arrowinset=0.4]{<->}(2.001875,4.2590623)(2.001875,2.7990625)
\psline[linewidth=0.04cm,arrowsize=0.05291667cm 2.0,arrowlength=1.4,arrowinset=0.4]{<->}(5.301875,4.2590623)(5.301875,2.7990625)
\usefont{T1}{ptm}{m}{n}
\rput(0.49546874,1.6090626){$W_1$}
\usefont{T1}{ptm}{m}{n}
\rput(2.7754688,1.6090626){$W_4$}
\usefont{T1}{ptm}{m}{n}
\rput(4.5154686,1.6090626){$W_3$}
\usefont{T1}{ptm}{m}{n}
\rput(6.9154687,1.6090626){$W_2$}
\psline[linewidth=0.04cm,arrowsize=0.05291667cm 2.0,arrowlength=1.4,arrowinset=0.4]{->}(2.741875,1.8990625)(2.081875,2.2790625)
\psline[linewidth=0.04cm,arrowsize=0.05291667cm 2.0,arrowlength=1.4,arrowinset=0.4]{->}(4.481875,1.8590626)(5.161875,2.2790625)
\psline[linewidth=0.04cm,arrowsize=0.05291667cm 2.0,arrowlength=1.4,arrowinset=0.4]{->}(0.521875,1.9790626)(1.601875,4.3190627)
\psline[linewidth=0.04cm,arrowsize=0.05291667cm 2.0,arrowlength=1.4,arrowinset=0.4]{->}(6.841875,2.0390625)(5.561875,4.3990626)
\usefont{T1}{ptm}{m}{n}
\rput(2.0154688,-0.1109375){$O_5$}
\usefont{T1}{ptm}{m}{n}
\rput(5.3054686,-0.1109375){$O_6$}
\usefont{T1}{ptm}{m}{n}
\rput(5.3054686,-2.1709375){$O_7$}
\usefont{T1}{ptm}{m}{n}
\rput(2.0154688,-2.1709375){$O_8$}
\psline[linewidth=0.04cm,arrowsize=0.05291667cm 2.0,arrowlength=1.4,arrowinset=0.4]{<->}(2.821875,-0.0609375)(4.701875,-0.0609375)
\psline[linewidth=0.04cm,arrowsize=0.05291667cm 2.0,arrowlength=1.4,arrowinset=0.4]{<->}(2.761875,-2.1409376)(4.641875,-2.1409376)
\psline[linewidth=0.04cm,arrowsize=0.05291667cm 2.0,arrowlength=1.4,arrowinset=0.4]{<->}(2.041875,-0.4609375)(2.041875,-1.9209375)
\psline[linewidth=0.04cm,arrowsize=0.05291667cm 2.0,arrowlength=1.4,arrowinset=0.4]{<->}(5.341875,-0.4609375)(5.341875,-1.9209375)
\usefont{T1}{ptm}{m}{n}
\rput(0.53546876,-3.1109376){$W_5$}
\usefont{T1}{ptm}{m}{n}
\rput(2.8154688,-3.1109376){$W_8$}
\usefont{T1}{ptm}{m}{n}
\rput(4.5554686,-3.1109376){$W_7$}
\usefont{T1}{ptm}{m}{n}
\rput(6.9554687,-3.1109376){$W_6$}
\psline[linewidth=0.04cm,arrowsize=0.05291667cm 2.0,arrowlength=1.4,arrowinset=0.4]{->}(2.781875,-2.8209374)(2.121875,-2.4409375)
\psline[linewidth=0.04cm,arrowsize=0.05291667cm 2.0,arrowlength=1.4,arrowinset=0.4]{->}(4.521875,-2.8609376)(5.201875,-2.4409375)
\psline[linewidth=0.04cm,arrowsize=0.05291667cm 2.0,arrowlength=1.4,arrowinset=0.4]{->}(0.561875,-2.7409375)(1.641875,-0.4009375)
\psline[linewidth=0.04cm,arrowsize=0.05291667cm 2.0,arrowlength=1.4,arrowinset=0.4]{->}(6.881875,-2.6809375)(5.601875,-0.3209375)
\psline[linewidth=0.04cm,arrowsize=0.05291667cm 2.0,arrowlength=1.4,arrowinset=0.4]{->}(1.981875,0.1990625)(0.541875,1.2990625)
\psline[linewidth=0.04cm,arrowsize=0.05291667cm 2.0,arrowlength=1.4,arrowinset=0.4]{->}(5.361875,0.1590625)(7.061875,1.3590626)
\psbezier[linewidth=0.04,arrowsize=0.05291667cm 2.0,arrowlength=1.4,arrowinset=0.4]{->}(2.281875,-1.8609375)(3.701875,-0.3009375)(3.161875,0.2190625)(3.061875,1.3590626)
\psbezier[linewidth=0.04,arrowsize=0.05291667cm 2.0,arrowlength=1.4,arrowinset=0.4]{->}(5.081875,-1.8609375)(3.901875,-0.9209375)(4.261875,0.2790625)(4.401875,1.3790625)
\psline[linewidth=0.04cm,linestyle=dashed,dash=0.16cm 0.16cm,arrowsize=0.05291667cm 2.0,arrowlength=1.4,arrowinset=0.4]{->}(6.181875,-4.7609377)(6.901875,-3.5009375)
\psline[linewidth=0.04cm,linestyle=dashed,dash=0.16cm 0.16cm,arrowsize=0.05291667cm 2.0,arrowlength=1.4,arrowinset=0.4]{->}(1.301875,-4.7809377)(0.521875,-3.5009375)
\psline[linewidth=0.04cm,linestyle=dashed,dash=0.16cm 0.16cm,arrowsize=0.05291667cm 2.0,arrowlength=1.4,arrowinset=0.4]{->}(3.001875,-4.7209377)(3.001875,-3.5009375)
\psline[linewidth=0.04cm,linestyle=dashed,dash=0.16cm 0.16cm,arrowsize=0.05291667cm 2.0,arrowlength=1.4,arrowinset=0.4]{->}(4.521875,-4.7209377)(4.521875,-3.5009375)
\end{pspicture} 
}
\caption{Graphical representation of $AF_{neg4}$.}
\label{fig:negotiation-four}
\end{center}
\end{figure}
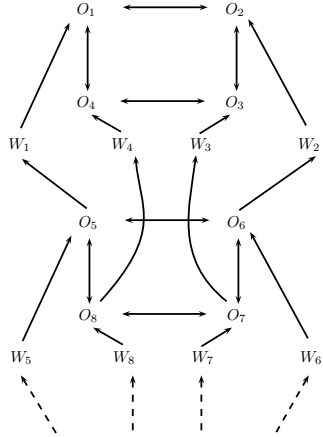

Skipping technical details, it turns out that:
\begin{itemize}
\item all three sets representing the reiteration of a specific offer, namely ${O_1, O_5, O_9, \ldots}$, ${O_2, O_6, O_{10}, \ldots}$, ${O_3, O_7, O_{11}, \ldots}$, and ${O_4, O_8, O_{12}, \ldots}$, are admissible;
\item two of the pairwise unions of these sets are admissible namely ${O_1, O_3, O_5, \ldots}$, and ${O_2, O_4, O_6, \ldots}$;
\item each set consisting of one of the above mentioned pairwise unions and of the withdrawals of the two other offers is stable.
\end{itemize}

On the basis of these evaluations, it emerges that two exchanges can be executed, with the choice left again to the authority. 

In general, using the evaluation of an infinite framework, the authority can go beyond detecting and stopping non terminating situations in this kind of multi-agent dialogues: the added-value consists in identifying which exchanges are anyway feasible in such situations.

\subsection{An example in ambient intelligence (from Sec. \ref{subsubsec:intr-ex-ambient})} \label{examp:ambient}

Referring to the description of the example given in Section \ref{subsubsec:intr-ex-ambient} and omitting the burden of some uninteresting details (in particular all arguments corresponding to default assumptions which are contradicted by facts), the argumentation framework corresponding to the interactions among the components of the ambient intelligence system consists of:

\begin{itemize}
\item a finite part corresponding to basic facts which are not time-dependent, namely $F_1=person(\Brian)$, $F_2=room(\office)$, $F_3=phone(\Brianphone)$, $F_4=owner(\Brianphone, \Brian)$ and are not involved in attack relations;
\item an infinite part consisting of the regular iteration of a section corresponding to even time instants and a section corresponding to odd time instants\footnote{A similar but more articulated structure would arise in case the different sensors produce data with different periods.}.
\end{itemize}

The following arguments and attacks are common to all sections independently of oddness or evenness of the time instant $i$:
\begin{itemize}
\item two facts corresponding to device readings: $NVR(i) = \truen videorecogn(\Brian, \office, i)$, $PI(i)=phonein(\Brianphone, \office, i)$;
\item an argument $PL(i)$ with conclusion $phlocated(\Brianphone, i)$ derived from fact $phonein(\Brianphone, \office, i)$ using $(r8)$;
\item an argument $VV(i)$ with conclusion $videovalid(\office,i)$ derived using $(r9)$ on the basis of the default assumption $\n dark(\office, i)$;
\item an argument $IN(i)$ with conclusion $in(\Brian, \office, i)$ derived using $(r1)$ on the basis of the default assumption $\n videovalid(\office, i)$;
\item an argument $NIN(i)$ with conclusion $\truen in(\Brian, \office, i)$ derived using $(r3)$ on the basis of the fact $\truen videorecogn(\Brian, \office, i)$ and of the previously derived conclusion $videovalid(\office,i)$;
\end{itemize}

The following arguments are included only in sections corresponding to an even time instant $j$:
\begin{itemize}
\item the fact $D(j)$ corresponding to the device reading $dark(\office, j)$;
\item an argument $LO(j)$ with conclusion  $lighton(\office, j+1)$ derived using $(r6)$.
\end{itemize}

The following arguments are included only in sections corresponding to an odd time instant $k$:
\begin{itemize}
\item the fact $ND(k)$ corresponding to the device reading: $\truen dark(\office, k)$;
\item an argument $NLO(k)$ with conclusion  $\truen lighton(\office, k+1)$ derived using $(r7)$.
\end{itemize}

As to attacks:
\begin{itemize}
\item each argument with conclusion $videovalid(\office,i)$ attacks the argument with conclusion $in(\Brian, \office, i)$;
\item arguments with conclusion $in(\Brian, \office, i)$ and $\truen in(\Brian, \office, i)$ mutually attack each other;
\item each fact $dark(\office, i)$ attacks the arguments with conclusions $videovalid(\office,i)$ and $\truen in(\Brian, \office, i)$.
\end{itemize}

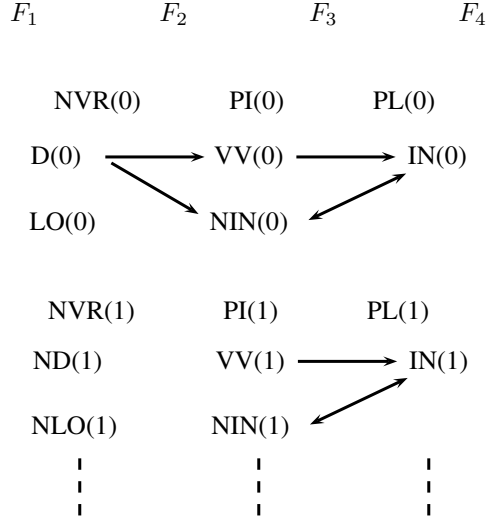
\begin{figure}[!htb]
\begin{center}
\scalebox{1} 
{
\begin{pspicture}(0,-3.4292188)(6.882813,3.4492188)
\usefont{T1}{ptm}{m}{n}
\rput(0.42234376,3.2607813){$F_1$}
\usefont{T1}{ptm}{m}{n}
\rput(2.4090104,3.2607813){$F_2$}
\usefont{T1}{ptm}{m}{n}
\rput(4.395677,3.2607813){$F_3$}
\usefont{T1}{ptm}{m}{n}
\rput(6.382344,3.2607813){$F_4$}
\usefont{T1}{ptm}{m}{n}
\rput(1.3946875,2.0907812){NVR(0)}
\usefont{T1}{ptm}{m}{n}
\rput(3.509375,2.0907812){PI(0)}
\usefont{T1}{ptm}{m}{n}
\rput(5.469375,2.0907812){PL(0)}
\usefont{T1}{ptm}{m}{n}
\rput(0.83921874,1.3507812){D(0)}
\usefont{T1}{ptm}{m}{n}
\rput(3.402969,1.3507812){VV(0)}
\usefont{T1}{ptm}{m}{n}
\rput(5.9078126,1.3507812){IN(0)}
\usefont{T1}{ptm}{m}{n}
\rput(3.4046874,0.48078126){NIN(0)}
\usefont{T1}{ptm}{m}{n}
\rput(0.92875,0.48078126){LO(0)}
\psline[linewidth=0.04cm,arrowsize=0.05291667cm 2.0,arrowlength=1.4,arrowinset=0.4]{->}(1.4973438,1.3607812)(2.8173437,1.3607812)
\psline[linewidth=0.04cm,arrowsize=0.05291667cm 2.0,arrowlength=1.4,arrowinset=0.4]{->}(4.037344,1.3607812)(5.3573437,1.3607812)
\psline[linewidth=0.04cm,arrowsize=0.05291667cm 2.0,arrowlength=1.4,arrowinset=0.4]{->}(1.5773437,1.2807813)(2.7373438,0.60078126)
\psline[linewidth=0.04cm,arrowsize=0.05291667cm 2.0,arrowlength=1.4,arrowinset=0.4]{<->}(4.197344,0.5207813)(5.4973435,1.1407813)
\usefont{T1}{ptm}{m}{n}
\rput(0.9946875,-1.3692187){ND(1)}
\usefont{T1}{ptm}{m}{n}
\rput(3.4229689,-1.3692187){VV(1)}
\usefont{T1}{ptm}{m}{n}
\rput(5.9278126,-1.3692187){IN(1)}
\usefont{T1}{ptm}{m}{n}
\rput(3.4246874,-2.2392187){NIN(1)}
\usefont{T1}{ptm}{m}{n}
\rput(1.0846875,-2.2392187){NLO(1)}
\psline[linewidth=0.04cm,arrowsize=0.05291667cm 2.0,arrowlength=1.4,arrowinset=0.4]{->}(4.057344,-1.3592187)(5.3773437,-1.3592187)
\psline[linewidth=0.04cm,arrowsize=0.05291667cm 2.0,arrowlength=1.4,arrowinset=0.4]{<->}(4.217344,-2.1992188)(5.5173435,-1.5792187)
\usefont{T1}{ptm}{m}{n}
\rput(1.3146875,-0.70921874){NVR(1)}
\usefont{T1}{ptm}{m}{n}
\rput(3.429375,-0.70921874){PI(1)}
\usefont{T1}{ptm}{m}{n}
\rput(5.389375,-0.70921874){PL(1)}
\psline[linewidth=0.04cm,linestyle=dashed,dash=0.16cm 0.16cm](1.1573437,-2.6492188)(1.1573437,-3.4092188)
\psline[linewidth=0.04cm,linestyle=dashed,dash=0.16cm 0.16cm](3.5373437,-2.6492188)(3.5373437,-3.4092188)
\psline[linewidth=0.04cm,linestyle=dashed,dash=0.16cm 0.16cm](5.7973437,-2.6492188)(5.7973437,-3.4092188)
\end{pspicture} 
}
\caption{Graphical representation of $AF_{amb}$.}
\label{fig:ambient}
\end{center}
\end{figure}

The corresponding argumentation framework $AF_{amb}$ is depicted in Figure \ref{fig:ambient}.
The relevant AF specification can be given as follows. 
Let $\Sigma=\set{F_1, F_2, F_3, F_4, 0}$ and $\XC=\set{F_1, F_2, F_3, F_4} \cup \set{0^i~:~i\geq1}$.
We can partition $\XC \setminus \set{F_1, F_2, F_3, F_4}$ into 10 sets $L_A$, $L_B$, $L_C$, $L_D$, $L_E$, $L_F$, $L_G$, $L_H$, $L_I$, $L_J$,(corresponding respectively to the 10 sequences NVR(i), PI(i), PL(i), VV(i), IN(i), NIN(i), D(2i), LO(2i), ND(2i+1), NLO(2i+1), with $i \geq 0$:
\[
\begin{array}{lclcl}
L_A&\mbox{ $=$ }&\set{0^{8i+1}~:~i\geq0}\\
L_B&\mbox{ $=$ }&\set{0^{8i+2}~:~i\geq0}\\
L_C&\mbox{ $=$ }&\set{0^{8i+3}~:~i\geq0}\\
L_D&\mbox{ $=$ }&\set{0^{8i+4}~:~i\geq0}\\
L_E&\mbox{ $=$ }&\set{0^{8i+5}~:~i\geq0}\\
L_F&\mbox{ $=$ }&\set{0^{8i+6}~:~i\geq0}\\
L_G&\mbox{ $=$ }&\set{0^{16i+7}~:~i\geq0}\\
L_H&\mbox{ $=$ }&\set{0^{16i+8}~:~i\geq0}\\
L_I&\mbox{ $=$ }&\set{0^{16i+15}~:~i\geq0}\\
L_J&\mbox{ $=$ }&\set{0^{16i+16}~:~i\geq0}
\end{array}
\]

The attack expression can then be formulated as follows:

$ a = tl(I \cap L_E) \cup (((I \cap L_F) \cdot 0) \cap L_G) \cup (((I \cap L_D) \cdot 000) \cap L_G)) \cup ((I \cap L_E) \cdot 0) \cup (tl(I \cap L_F) )$.

We observe that the attack expression satisfies the hypothesis of Proposition \ref{propn:finitary} hence it can be determined that the argumentation framework is finitary.
Algorithm \ref{algorithm:grounded} can then be applied and it can be verified that it terminates determining the grounded extension $G = \set{F_1, F_2, F_3, F_4} \cup L_A \cup L_B \cup L_C \cup (L_D \cap \set{0^{16i+12}~:~i\geq0}) \cup (L_E \cap \set{0^{16i+5}~:~i\geq0}) \cup (L_F \cap \set{0^{16i+14}~:~i\geq0}) \cup L_G \cup L_H \cup L_I \cup L_J$.
Using the method included in the proof of Theorem \ref{thm:afs-ops} it can also be verified that $G$ is stable, which implies that $G$ is also the unique preferred extension. 

From the argumentation perspective the situation is not pahological \emph{per se} and in fact this oscillating behavior is the desired one in case a person continuously enters and exits a room. Computing a compact representation of the grounded extension is however useful since it can be passed to a higher-level reasoning module which may detect the anomaly that the conclusions entailed by the system involve a person entering and exiting the same room let say every 5 seconds (or less).
It can also be observed that, in this case, the produced sequence of arguments is not actually infinite since the oscillating behavior will stop with the sunrise the morning after (or the semester after if we are in a polar winter). However, we are interested in analyzing (and stopping) such a very long sequence of arguments produced with a regular pattern well before it reaches its \virg{natural} termination. To this purpose it can be definitely more advantageous to treat it as an infinite sequence with compact representation rather than dealing explicitly with a finite sequence of thousands (if not millions) of \virg{machine-produced-always-the-same} arguments.

\section{Related Work}\label{sec-related}
Treatments of infinite {\sc af}s have, as already outlined, been largely limited to specific instances exemplifying
particular properties, e.g. that infinitary frameworks may occur naturally, as in the main example from \cite{dung:1995} presented in the previous section, or the issue of existence of semi-stable extensions \cite{camver:2010,Weydert:2011}. Beyond such examples the principal results have not advanced noticeably since the
general properties proven in \cite{dung:1995} were established. In particular, the question of \emph{computational} issues
in infinite {\sc af}s has not been considered.

At heart (interpretative matters aside) Dung's {\sc af} model is graph-theoretic (a property exploited in much
extant work on algorithmic and complexity treatments of {\sc af}s). The computational theory of infinite directed
graphs has, in contrast, long been recognised as a core area of graph theory, arguably dating back
to the beginning of the 20th century in the work of Thue~\cite{thue:1910}. Indeed, as observed by Morvan~\cite{Morvan:2000}:
``When dealing with computers, infinite graphs are natural objects''.

The idea of viewing vertex sets as a formal language with an edge relationship determined by
operations on words representing vertices dates back at least as far as Muller and Schupp~\cite{muller-schupper:1985} and
much of the focus of such computational treatments from a graph-theoretic perspective has
tended to concentrate on, what may loosely be termed, ``specification processes'' for generating
families of infinite graphs and model-theoretic treatments of logics defined via these processes.
Thus, Courcelle~\cite{Courcelle1989} addresses properties expressible in monadic second-order logic with respect
to bounded-width infinite graphs; Blumensath and Gr{\"a}del~\cite{BlumensathGradel:2004} consider model-theoretic issues
for properties expressible in first-order logic augmented with a quantifier, $\exists^{\omega}$, expressing
the existence of infinitely many objects within its scope. The ``reachability problem'' (given $u$ and $v$ is there
a directed path of edges from $u$ to $v$) of importance in analyses of program behaviour,
has been widely studied, e.g in Thomas~\cite{Thomas:2009} and Colcombet~\cite{Colcombet:2002}.

In these treatments, as well as in our own approach, the central concern is that of ``finite presentations
of infinite objects''  and so, unsurprisingly, the mechanisms adopted exhibit some structural similarities, e.g. in the use
of automata-theoretic models. Overall, however, the issues of interest differ: in particular, aside from specialised studies 
such as that of Bean~\cite{Bean1976} regarding colourings of
infinite graphs, properties impinging directly on graph-theoretic views of extension-based
semantics have not explicitly been dealt with.

Turning to another field related to argumentation, infinite structures have also received a significant deal of attention in the field of logic programming where admitting function symbols and recursion in the language gives rise to possibly infinite domains.
Hence, a significant gain in expressiveness has to be traded off with the possibility of actual implementation in practical solvers.
Focusing on the family of ASP (Answer Set Programming) solvers, Bonatti \cite{Bonatti04,Bonatti08} investigated the class of \emph{finitary} logic programs which admit unbounded (possibly infinite) domains and cyclic definitions while ensuring that inference is r.e.-complete.  Finitary logic programs are therefore amenable to implementation within existing ASP solvers with suitable extensions.
A larger class of logic programs with functions called \emph{finitely grounded} is shown to preserve most of the good properties of finitary programs in \cite{BaseliceBC07}.
Unfortunately the class of \emph{finitary} logic programs is undecidable: several subsequent works have then been devoted to investigate other classes of logic programs allowing functions, trading off expressiveness and tractability in various ways.
In \cite{SimkusEiter07,EiterSimkus10} a decidable class of disjunctive logic programs with function symbols under stable model semantics, called $\mathbb{FDNC}$, is introduced and a method is provided to finitely represent all the (possibly infinite) stable models of a given $\mathbb{FDNC}$ program.
In \cite{Calimerietal08,Calimerietal10} the semi-decidable class of \emph{finitely ground} programs is considered, along with its decidable subclass of \emph{finite domain} programs, 
while another decidable subclass, called \emph{argument restricted}, has been analyzed in \cite{LierlerLifschitz09}. Further, a decidable subclass of finitary programs, called \emph{FP2}, has been recently presented in \cite{BaseliceBonatti10}.
On the implementation side, the DLV solver \cite{Leoneetal06} has been extended to encompass the treatment of finitely ground and finite domain programs resulting in a publicly available system called DLV-complex \cite{Calimerietal09}.

The above studies witness a large interest in reasoning with infinite domains in answer set programming, with a range of motivations including the explicit treatment of recursive data structures like lists and trees, the encoding of problems not admitting a priori bounds on the solution size (e.g. planning or reasoning about actions), and the consideration of potentially infinite processes in time (a biology-inspired example is provided in \cite{EiterSimkus10}).
While many of the above needs are common to argumentation theory (and more generally to any approach to defeasible reasoning, as remarked in Section \ref{section:motivations}) it has to be acknowledged that the significant advancements both on the theoretical and on the application side surveyed above have no counterpart (yet) in the argumentation field, so that the useful connections and interplay between the two fields have definitely to be regarded as a future research subject.
As far as the present work is concerned, it can be remarked in particular that the investigations surveyed above lie at the level of the representation language, which is abstracted away in Dung's framework, hence our work concerns a different, and not directly comparable, abstraction level. 
Moreover the above works are based on the stable model semantics adopted in the context of ASP solvers, while the approach proposed in this paper is not committed to a specific semantics choice and hence is applicable beyond the limits of the stable semantics, which, as well-known, does not always guarantees the existence of extensions (the existence of models in the logic programming context) and does not feature, in general, some desirable properties like directionality or relevance (see \cite{Baronietal11} for a discussion).

%
%

\section{Further Work and Conclusions}
\label{sec-conclusions}
Our main aim in this paper has been to present a formal approach to describe both finite
and infinite {\sc af} structures, the argument set being the set of words
within some regular language, $\XC\subseteq\Sigma^{*}$, and the attack relation, $\AC$ over $\XC\times \XC$ being
given through a sentence, $a\in\AC\EC(\Sigma)$ constructed by a limited set of operations
so that for $S\subseteq\XC$, $\underline{a}(S)$ satisfies additivity (hence also
monotonicity) and preserves regularity.
We provided some illustrations of the flexibility of our approach using examples from \cite{dung:1995,camver:2010}.
More generally, the approach has been shown to be able to capture standard finite {\sc af}s and arbitrary finite combinations of finite and infinite {\sc af}s, which can reasonably be regarded as covering most (if not all) situations of practical interest.

A related research line we are developing in parallel concerns the use of this kind of techniques to represent infinite structures in extended versions of Dung's framework, some initial results concerning the {\sc afra} formalism (Argumentation Framework with Recursive Attacks) having been recently obtained \cite{baroni-et-al-tafa11}.

We have concentrated on the expressive potential of {\sc afs}, indicating that, in contrast to ``naive'' encodings, processes which can be dealt with efficiently in the finite
setting -- deciding conflict-freeness, admissibility, acceptability, verifying whether a set is a stable or complete
extension as well as construction problems such as computing the characteristic function
-- all admit effective decision methods and algorithms for building automata
accepting the corresponding sets, even when the instances being checked or the results reported
are themselves infinite subsets of $\XC$. For the case of two problems, -- existence of stable extensions and determining credulous acceptance wrt preferred semantics -- unlikely to be efficiently decidable
in the finite context we have shown that 
within {\sc afs} these are (at worst) semi-decidable. 

We conclude by reviewing some topics meriting further development, a number of which are the
subject of current work. One such immediate area of interest concerns the \emph{efficiency} with which
particular procedures can be implemented (as opposed to the issue of \emph{effectiveness}). While some
preliminary study of such questions is underway we have chosen, partly for reasons of space, not to
develop this aspect in detail within the current paper. We note that such questions concern two elements: the \emph{size} (i.e. number
of states) of automata achieving particular tasks, and the computational complexity of the problems themselves.
The former, referred to as \emph{state complexity} in the associated literature has been widely
studied\footnote{Important contributions may be found in \cite{Moore:1971,Birget:1992,Jiraskova:2005,PS:2002,YuZhuangSalomaa:1994}.}
so that tight bounds on state complexity delineating the number of states necessary and sufficient for an automaton
accepting $R\theta S$ or $\theta(R)$, in terms of the state complexity of the languages $R$ and $S$ have
been obtained for each of the principal operations $\theta$ with each of the finite automaton forms discussed. It is, clearly,
the case that the extent to which, say, $\pi_{a}^{-}(S)$, may be recognised by a ``small'' automaton
will depend significantly not only on the state complexity of $S$ itself, but also on the
exact specification of $a\in\AC\EC(\Sigma)$. As such it would seem unlikely that a completely general
treatment of state complexity for $\AC\EC(\Sigma)$ (even if such is possible) will yield results of
much interest since this generality is likely to overestimate state complexity for those cases that might arise in practice. 
A rather more promising approach is to consider sub-classes of $\AC\EC(\Sigma)$ obtained by constraining
the operational structures, e.g. given some \emph{finite} ``base language'', $B\subseteq\Sigma^{*}$ consider
attack structures, $\underline{a}$ satisfying ``$v\rightarrow_{a}w$ only if $v\in B$ or $r(|v|,|w|)$'' (so that
$r(...)$ is determined through some aspect of the lengths of $v$ and $w$). In fact preliminary results of
the authors, with $B\subseteq\Sigma^{2}$ and the constraint
``$v\rightarrow_{a} w$ iff ($v\in B$ and $tl(B)=hd(w)$) or ($v\in\Sigma\cdot w)$'' indicate, using a careful
treatment of the {\sc dfa} form accepting $\XC$ that all of the cases shown to be
effectively computable in Thm. \ref{thm:afs-ops} 
may be efficiently implemented
(in terms of state complexity and polynomial run-time).\footnote{We remark that the {\sc afra} structures described
in \cite{baroni-et-al-tafa11} are a special case of this restriction: illustrative efficient automata
constructions (in terms of both state complexity and algorithm run-time) have been obtained.}

A further topic of some interest concerns the use of our approach in \emph{finite} frameworks. Although it is, of
course, unnecessary to resort to {\sc afs} schema to describe finite $\tuple{\XC,\AC}$ there are, however,
cases where it may be advantageous to do so. For example, suppose $|\XC|=2^m$ for some $m\in\textbf{N}$, then using $\Sigma~=~\set{0,1}$
the set $\XC$ can be viewed as $\set{0,1}^m$ a language that is accepted by a {\sc dfa} with exactly $m+1$ states. Thus,
for suitable $\AC\subseteq\XC\times\XC$ the {\sc af}, $\tuple{\XC,\AC}$ rather than requiring a description whose size is $O(2^m)+|\AC|$ could be presented by one whose size is $O(m^k)$ for some $k\in\textbf{N}$. In cases where
such compaction can be achieved, an important issue is the resulting cost of implementing standard
decision procedures: an obvious concern is that, for this particular finite case, some subsets of $\set{0,1}^m$ will require
automata whose state complexity is $2^{O(m)}$. It is, however, unclear whether this behaviour would be
the \emph{only} potential drawback, e.g. what can be said regarding the complexity of $\mbox{{\sc ca}}_{adm}$ (for single, rather
than sets of arguments) in such settings?

As a final collection of problems we note that several issues remain open concerning \emph{effective} decision
processes for extension-based semantics in {\sc af}s. In particular, although we have shown questions
such as $\EC_{stab}=\emptyset$ to be semi-decidable (in finitary {\sc af}s), the status of its converse is open, i.e. is it the
case that $\EC_{stab}\not=\emptyset$ is semi-decidable? A positive answer would, of course, lead to
an effective procedure for $\mbox{{\sc exist}}_{stab}$, while a negative answer motivates the question of identifying
decidable fragments of $\AC\EC(\Sigma)$.

On a different side, it has to be acknowledged that {\sc afs} is not an immediately usable formal tool and that the specification of each example has been crafted individually. In perspective, {\sc afs} can be regarded as a \virg{low level} language which can represent the basis for the definition of higher level constructs for the description of infinite {\sc af}s, accompanied by suitable methodologies for their application. In fact, some recurrent structural and representation patterns can be identified in the examples considered in the paper and procedures to derive {\sc afs} from logic programs could be considered, but a full investigation of these issues is left for future work.

In conclusion we emphasise once more that the development put forward in this paper, while establishing
many cases where an effective treatment of infinite argumentation forms is realistic, provides a 
starting point for a wider investigation of this matter.

\appendix

\section{Formal Languages and Automata}\label{subsection:flt}

A standard approach to the problem of representing an infinite collection of objects
via a finite specification is to exploit so-called formal grammars and their associated
machine models. In this section we review some basic elements and results from this discipline to complement the basic definitions given in Section \ref{subsection:back-flt}.

Given a formal grammar, $G=\tuple{\Sigma,V,P,S}$ (Definition \ref{defn:fg}), and a production
rule $\alpha\rightarrow\beta\in P$, for all $\gamma,~\delta\in(V\cup\Sigma)^{*}$ we say that $\gamma\alpha\delta$
\emph{derives} $\gamma\beta\delta$ in $G$ ($\gamma\alpha\delta\Rightarrow_{G}\gamma\beta\delta$)
and in general $u\Rightarrow_{G}^{*} w$ whenever there is a finite sequence of derivations such that
\[
u\Rightarrow_Gu_1\Rightarrow_Gu_2\Rightarrow_G\cdots\Rightarrow_Gu_k\Rightarrow_Gw
\]
A derivation $u\Rightarrow_Gw$ is \emph{terminated} if $w\in\Sigma^{*}$. The \emph{language generated} by $G=\tuple{\Sigma,V,P,S}$, 
denoted as $L(G)$, is
\[
L(G)~~=~~\set{~w\in\Sigma^{*}~:~S\Rightarrow_{G}^{*}w}
\]
A language, $L\subseteq\Sigma^*$, is \emph{recognisable} if there is a formal grammar $G=\tuple{\Sigma,V,P,S}$ for which
$w\in L$ if and only if $w\in L(G)$.

Notice that, in general, formal grammars provide a process for proving that $w\in L(G)$ and that there is not, necessarily, a \emph{unique}
sequence of derivations under which $S\Rightarrow_{G}^{*}w$.


\begin{defn}
A grammar $\tuple{\Sigma,V,P,S}$ is \emph{unrestricted} if $P$ is allowed to contain arbitrary rules $\alpha\rightarrow\beta$ (subject to the
constraint that $\alpha\not\in\Sigma^{*}$). It is \emph{context--sensitive} if $\forall~\alpha\rightarrow\beta\in P$ we have $|\beta|\geq|\alpha|$;
\emph{context--free} if $\forall~\alpha\rightarrow\beta\in P$ we have $\alpha\in V$ and \emph{right--linear} if every
$\alpha\rightarrow\beta\in P$ has the form $V_i\rightarrow\varepsilon$ or $V_i\rightarrow\sigma$ or $V_i\rightarrow\sigma V_j$ for
$V_i$, $V_j\in V$ and $\sigma\in\Sigma$.
\end{defn}

\label{effectiveness}
Recall that a language $L$ is
\emph{recursively enumerable} (r.e.) if there is a Turing machine (TM) program, $M$, that given any $w\in L$ as input
will eventually halt and accept $w$; with $L$ being \emph{recursive} if there is a TM program, $M$, that given any $w\in\Sigma^{*}$ as input
eventually halts and accepts any $w\in L$ and halts and rejects any $w\not\in L$. We use the term \emph{decidable} to describe
languages which are recursive and \emph{semi-decidable} for those which are recursively enumerable. The term \emph{effective algorithm} for $L$
will be used for an algorithmic process, e.g. a Turing machine program, that witnesses $L$ as 
decidable.\footnote{It should be noted that, some closure properties are established non-constructively so that effective
algorithms yielding machines recognising the resulting language do not necessarily follow, see e.g. \cite[pp.~62--63]{HU:1979}.}

\begin{fact}\label{fact:correspondence}
$\ \ \ $
\begin{enumerate}
\item[a.]
$L\subseteq\Sigma^{*}$ is \emph{recursively enumerable} if and only if there is an unrestricted grammar, $G$ such that
$L(G)=L$.
\item[b.]
$L\subseteq\Sigma^{*}$ is \emph{recursive} if and only if there are unrestricted grammars, $G_1$ and $G_2$ such that
$L(G_1)=L$ and $L(G_2)=\overline{L}$, i.e. $L(G_2)=\Sigma^{*}\setminus L$.
\end{enumerate}
\end{fact}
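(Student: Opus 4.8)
The plan is to establish part (a) as the core result --- the classical equivalence between recursively enumerable languages and those generated by unrestricted (Type-0) grammars --- and then to derive part (b) as a corollary via the standard characterisation ``recursive $=$ r.e.\ and co-r.e.''.

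For part (a) I would prove the two implications separately. For the forward direction (grammar implies r.e.), given an unrestricted grammar $G=\tuple{\Sigma,V,P,S}$, I would construct a nondeterministic Turing machine $M$ that, on input $w$, maintains a sentential form on its tape, initialised to $S$. At each step $M$ nondeterministically selects a production $\alpha\rightarrow\beta\in P$ and an occurrence of $\alpha$ in the current string, rewrites that occurrence to $\beta$ (shifting the tape contents to accommodate the length change, since $|\beta|$ need not equal $|\alpha|$), and accepts if the current string equals $w$. Because $w\in L(G)$ exactly when some finite derivation $S\Rightarrow_G^{*}w$ exists, $M$ has an accepting computation on $w$ iff $w\in L(G)$; invoking the equivalence of nondeterministic and deterministic machines for the class of r.e.\ languages completes this direction. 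For the converse (r.e.\ implies grammar), given a TM $M$ recognising $L$, I would build an unrestricted grammar $G$ whose derivations simulate $M$ in three phases marked by disjoint sets of variables: first, productions generate a sentential form encoding an arbitrary input $w$ alongside a second, ``frozen'' copy of $w$; second, productions mimic the transition function $\delta$ of $M$, rewriting the encoded configuration step by step; third, upon reaching an accepting state, clean-up productions erase all simulation apparatus and the working copy, leaving exactly the frozen $w$. Thus $S\Rightarrow_G^{*}w$ iff $M$ accepts $w$, giving $L(G)=L$.

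Part (b) then follows directly. If $L$ is recursive, a decider for $L$ yields both a recogniser for $L$ (hence, by (a), a grammar $G_1$ with $L(G_1)=L$) and, by flipping its output, a decider and hence a recogniser for $\overline{L}$ (giving $G_2$ with $L(G_2)=\overline{L}$). Conversely, if such $G_1$ and $G_2$ exist, then by (a) both $L$ and $\overline{L}$ are r.e.; running the two recognisers in parallel (dovetailing their computations) on any input $w$, exactly one halts and accepts, since $w$ lies in precisely one of $L$ and $\overline{L}$, so reading off which one decides membership and witnesses $L$ as recursive.

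The main obstacle is the converse direction of part (a): the faithful encoding of Turing-machine configurations as sentential forms and the design of productions that simulate $\delta$ correctly, including the clean-up phase, which must erase all scaffolding without ever permitting a non-accepting computation to produce a terminal string. Ensuring that derivations correspond (up to the phase structure) exactly to halting accepting computations requires delicate bookkeeping, though the construction is entirely standard.
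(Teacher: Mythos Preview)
Your proposal is correct and follows the standard textbook approach to this classical result. Note, however, that the paper does not actually prove this statement: it is presented as a \emph{Fact} in the appendix on formal languages and automata, where the authors are merely recalling background material without proof. So there is no ``paper's own proof'' to compare against; your argument supplies exactly the kind of standard proof (grammar-simulates-TM and TM-simulates-grammar for part (a), then the r.e./co-r.e.\ characterisation for part (b)) that one would find in Hopcroft and Ullman or similar references, which the paper implicitly defers to.
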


It is well known that there are languages that fail to be r.e.

Regular languages (Definition \ref{defn:rl}) are captured by a syntactic formalism called \emph{regular expressions}. A regular expression, $E$ over $\Sigma$ is
constructed by a finite number of applications of the following

\[
\left\{
{
\begin{array}{ll}
\emptyset&\mbox{ is a regular expression}\\
\varepsilon&\mbox{ is a regular expression}\\
\sigma&\mbox{ }\forall~\sigma\in\Sigma\mbox{ is a regular expression}\\
(R+S)&\mbox{ for regular expressions $R$, $S$}\\
R\cdot S&\mbox{ for regular expressions $R$, $S$}\\
(R)^*&\mbox{ for a regular expression $R$}
\end{array}
}
\right.
\]

The associated regular languages $L(E)\subseteq\Sigma^{*}$ being,
\[
\left\{
{
\begin{array}{ll}
\emptyset&\mbox{ if $E=\emptyset$}\\
\set{\varepsilon}&\mbox{ if $E=\varepsilon$}\\
\set{\sigma}&\mbox{ if $E=\sigma$}\\
L(R)\cup L(S)&\mbox{ if $E=(R+S)$}\\
L(R)\cdot L(S)&\mbox{ if $E=R\cdot S$}\\
L(R)^{*}&\mbox{ if $E=(R)^{*}$}
\end{array}
}
\right.
\]
In order to reduce notational complications we will, in general, equate a regular expression, $R$, with the language, $L(R)$,
it describes, thus writing $R$ for both cases. Where no ambiguity arises, we dispense with superflous parentheses.

\begin{fact}\label{fact:closure-props}
Let $\mbox{{\sc reg}}\subseteq 2^{\Sigma^{*}}$ be the property describing all regular languages, i.e. $L\in\mbox{{\sc reg}}$ if and
only if $L$ is a regular language. The class {\sc reg} is closed with respect to \emph{all} of the operations $\theta\in\set{\cup,~\cap,\overline{\set{~~}},\setminus,\cdot,^{*},/,rev}$.
\end{fact}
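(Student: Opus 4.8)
The plan is to split the eight operations into two groups according to how directly they follow from the definitions. The operations $\set{\cup,\cdot,^{*}}$ are handled first and essentially for free: Definition~\ref{defn:rl} builds the regular languages as the least class containing the atomic languages $\emptyset$, $\set{\varepsilon}$ and $\set{\sigma}$ and closed under exactly these three operations, so if $L_1$ and $L_2$ are regular then so are $L_1\cup L_2$, $L_1\cdot L_2$ and $L_1^{*}$ by rules R2--R4. The remaining operations $\set{\cap,\overline{\set{~~}},\setminus,/,rev}$ I would treat by passing to the equivalent finite-automaton representation, appealing to the equivalence between regular expressions and deterministic finite automata (Fact~\ref{fact:reg-fa}); throughout I take $M=\tuple{Q,\Sigma,q_0,F,\delta}$ to be a (complete) {\sc dfa} with $L(M)=L$ and write $\delta^{*}$ for the extension of $\delta$ to words.

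For complement I would observe that, since $M$ is deterministic and total, every $w\in\Sigma^{*}$ drives $M$ to a unique state, and $w\in\overline{L}$ precisely when that state lies outside $F$; hence the {\sc dfa} $\tuple{Q,\Sigma,q_0,Q\setminus F,\delta}$ accepts $\overline{L}$, which is therefore regular. Intersection I would obtain from the standard product construction: given {\sc dfa}s $M_1$ and $M_2$ for $L_1$ and $L_2$, the machine on state set $Q_1\times Q_2$ with transition $\delta((p,q),\sigma)=(\delta_1(p,\sigma),\delta_2(q,\sigma))$ and accepting set $F_1\times F_2$ recognises $L_1\cap L_2$ (alternatively one may invoke De Morgan, $L_1\cap L_2=\overline{\overline{L_1}\cup\overline{L_2}}$, using the $\cup$ and complement cases already established). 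Set difference then follows immediately, since $L_1\setminus L_2=L_1\cap\overline{L_2}$.

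For reversal I would argue by structural induction on a regular expression $E$ with $L(E)=L$, using the identities $rev(\emptyset)=\emptyset$, $rev(\set{\varepsilon})=\set{\varepsilon}$, $rev(\set{\sigma})=\set{\sigma}$, $rev(R\cup S)=rev(R)\cup rev(S)$, $rev(R\cdot S)=rev(S)\cdot rev(R)$ and $rev(R^{*})=(rev(R))^{*}$; each right-hand side is again a regular expression, so $rev(L)$ is regular (equivalently one reverses every transition of an {\sc nfa} for $L$ and exchanges initial and final states). For the quotient $L_1/L_2=\set{u:\exists v\in L_2\ \mbox{s.t.}\ u\cdot v\in L_1}$ I would keep the {\sc dfa} $M_1$ for $L_1$ intact except for its accepting set, redefining it as $F'=\set{q\in Q_1:\exists v\in L_2\ \mbox{s.t.}\ \delta_1^{*}(q,v)\in F_1}$; reading $u$ from $q_0$ reaches some state $q$, and $u\in L_1/L_2$ exactly when $q\in F'$, so $\tuple{Q_1,\Sigma,q_0,F',\delta_1}$ accepts $L_1/L_2$.

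The main obstacle is not any single computation but the reliance on the equivalence between the regular-expression and finite-automaton views of {\sc reg}: the $\set{\cup,\cdot,^{*}}$ cases are natural on expressions while $\set{\overline{\set{~~}},\cap,/}$ are natural on automata, so the proof really rests on being able to move freely between the two descriptions (Fact~\ref{fact:reg-fa}). A secondary subtlety worth flagging is that the quotient argument defines $F'$ by an existential quantification over the (possibly infinite) language $L_2$; this is harmless for \emph{regularity}, since $F'$ is merely \emph{some} subset of the finite set $Q_1$ and the resulting object is a bona fide {\sc dfa}, but it means the construction as stated is non-constructive -- to obtain $F'$ effectively one checks, for each $q\in Q_1$, whether the regular language $\set{v:\delta_1^{*}(q,v)\in F_1}$ meets $L_2$, a decidable emptiness-of-intersection test.
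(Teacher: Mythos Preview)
Your proof is correct and follows the standard textbook approach. The paper itself does not prove this statement: it is recorded as a \emph{Fact} in the appendix on formal-language background, treated as a well-known closure result and left without proof, so there is no paper argument to compare against beyond noting that your treatment is exactly the kind of standard derivation the paper is implicitly invoking.
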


The class of machine models that express exactly the regular languages are the \emph{deterministic finite automata} (Definition \ref{defn:dfa}), other classes of finite automata can also be considered.

\begin{defn} \label{defn:epsilon-ndfa}
A {\em non-deterministic finite automaton} ({\sc ndfa}) $NM=\tuple{\Sigma,Q,q_0,F,\delta}$ has $\delta~:~Q\times\Sigma\rightarrow2^{Q}$, indicating
that in some states and symbols there may be more than one ``next'' state (or even that no state
at all can be reached should $\delta(q,\sigma)=\emptyset$). An $\varepsilon$-{\sc ndfa} has a state transition function
$\delta~:~Q\times\Sigma\cup\set{\varepsilon}\rightarrow2^{Q}$ where the interpretation of $\delta(q,\varepsilon)=Q'\subseteq Q$ is
that having reached state $q$ the automaton may process its next input symbol $\sigma\in\Sigma$ from $q$ itself \emph{or} from 
any state in $\delta(q,\varepsilon)$. We identify a sub-class, the so-called ``$\varepsilon$--{\sc dfa}'' of
$\varepsilon$-{\sc ndfa} via those whose transition function satisfies: $\delta~:~Q\times\Sigma\rightarrow Q$
and $\delta~:~Q\times\set{\varepsilon}\rightarrow2^Q$, i.e. $\varepsilon$-{\sc dfa} specify \emph{exactly one} successor
state for each $q\in Q$ and $\sigma\in\Sigma$ but can allow arbitrary $\varepsilon$ transitions between states.

For a {\sc ndfa}, $M$, $w=w_k\cdot w_2\cdots w_1\in\Sigma^{*}$ is accepted by $M$, written $w\in L(M)$
if there is \emph{at least one} sequence of states $q_{i_1} q_{i_2}\ldots q_{i_{k}}$ such that
$q_{i_1}\in\delta(q_0,w_1)$, $q_{i_j}\in\delta(q_{i_{j-1}},w_j)$ for $2 \leq j\leq k$ and $q_{i_k}\in F$.
For $\varepsilon$-{\sc ndfa} $w=w_k\cdot w_{k-1}\cdots w_1\in\Sigma^{*}$ is accepted by $M$,
if there is a \emph{finite} sequence $q_{j_1} q_{j_2}\ldots q_{j_{r}}$ of states with $r\geq k$ and a
finite sequence $\alpha_1\alpha_2\ldots\alpha_r$ with $\alpha_i\in\set{\varepsilon}\cup\Sigma$ such that:
$\alpha_r\cdot\alpha_{r-1}\cdots\alpha_1=w$, 
$q_{i_1}\in\delta(q_0,\alpha_1)$, $q_{i_j}\in\delta(q_{i_{j-1}},\alpha_j)$ for $2\leq j\leq r$, and $q_{j_r}\in F$.
The concept of acceptance by $\varepsilon$-{\sc dfa} is defined similarly.
\end{defn}

\begin{fact}\label{fact:reg-fa}
For $L\subseteq\Sigma^{*}$ the following are equivalent.
\begin{enumerate}
\item[a.]
$L$ is a regular language.
\item[b.]
There is an $\varepsilon$-{\sc ndfa}, $M$, for which $L(M)=L$.
\item[c.]
There is an $\varepsilon$-{\sc dfa}, $M$, for which $L(M)=L$.
\item[d.]
There is a {\sc ndfa}, $M$, with $L(M)=L$
\item[e.]
There is a {\sc dfa}, $M$ with $L(M)=L$.
\item[f.]
There is a right-linear grammar, $G$, for which $L(G)=L$.
\end{enumerate}
\end{fact}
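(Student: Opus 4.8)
The plan is to prove the six conditions equivalent by establishing one short cycle of substantive implications and filling in the remaining equivalences through the purely syntactic containments among the automaton classes. The three containments are immediate: every {\sc dfa} is a {\sc ndfa} (read the single value $\delta(q,\sigma)$ as the singleton $\set{\delta(q,\sigma)}$); every {\sc ndfa} and every $\varepsilon$-{\sc dfa} embeds into an $\varepsilon$-{\sc ndfa} (setting $\delta(q,\varepsilon)=\emptyset$ in the first case, reading deterministic transitions as singletons in the second); and every {\sc dfa} is an $\varepsilon$-{\sc dfa}. Hence (e)$\Rightarrow$(d), (e)$\Rightarrow$(c), (d)$\Rightarrow$(b) and (c)$\Rightarrow$(b) all hold with no work, and it suffices to close a loop through the strongest and weakest models.

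First I would show (a)$\Rightarrow$(b) by structural induction on a regular expression generating $L$, following the recursive definition of regular expressions recalled above. The base cases $\emptyset$, $\varepsilon$ and $\sigma\in\Sigma$ are handled by explicit two- or three-state $\varepsilon$-{\sc ndfa}; the inductive cases $R+S$, $R\cdot S$ and $R^{*}$ combine the machines for the subexpressions using fresh start and accepting states linked by $\varepsilon$-transitions (Thompson-style composition), verifying in each case that the accepted language is exactly $L(R)\cup L(S)$, $L(R)\cdot L(S)$, or $L(R)^{*}$.

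Next I would prove (b)$\Rightarrow$(e) by the subset (powerset) construction. Given an $\varepsilon$-{\sc ndfa} $M=\tuple{\Sigma,Q,q_0,F,\delta}$, define a {\sc dfa} whose states are subsets of $Q$, whose initial state is the $\varepsilon$-closure of $\set{q_0}$, whose transition on $\sigma$ sends a subset $T$ to the $\varepsilon$-closure of $\bigcup_{q\in T}\delta(q,\sigma)$, and whose accepting states are those subsets meeting $F$; correctness follows by induction on input length, showing the reachable subset after reading $w$ is precisely the set of states $M$ could occupy. To close the cycle I would then establish (e)$\Rightarrow$(a) by state elimination, or equivalently by solving the induced system of language equations with Arden's rule: delete non-initial, non-final states one at a time while relabelling the surviving transitions by regular expressions, until a single expression describing $L(M)$ remains. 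Together with the trivial containments, these three implications make (a)--(e) mutually equivalent.

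Finally I would fold in (f). For (e)$\Rightarrow$(f) I would take the variables to be the states of the {\sc dfa}, with a production $q_i\rightarrow\sigma q_j$ for each transition $\delta(q_i,\sigma)=q_j$ and $q_i\rightarrow\varepsilon$ whenever $q_i\in F$, so that leftmost derivations mirror accepting runs; conversely (f)$\Rightarrow$(d) reads a right-linear grammar as a {\sc ndfa} whose states are the variables plus one accepting sink, with $V_i\rightarrow\sigma V_j$ inducing a $\sigma$-transition and $V_i\rightarrow\sigma$ or $V_i\rightarrow\varepsilon$ handled by transitions into the sink. I expect the main obstacle to be the correctness of (e)$\Rightarrow$(a): unlike the other directions, which are essentially mechanical translations, extracting a regular expression requires maintaining the invariant that every intermediate transition label denotes exactly the set of substrings consumed between the corresponding states, and then arguing that the order of elimination does not affect the final language.
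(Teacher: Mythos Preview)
Your proposal is correct and follows the standard textbook route (Thompson's construction, subset construction with $\varepsilon$-closure, state elimination/Arden's rule, and the mechanical grammar--automaton translations). There is nothing to compare against, however: the paper does not prove this statement. It appears in the appendix as a \emph{Fact}, i.e.\ a piece of background from formal language theory that is simply recalled without argument, in the same spirit as the surrounding Facts on closure properties and minimisation. So your write-up is more than the paper itself supplies; if anything, you could safely abbreviate it to a citation of a standard reference such as Hopcroft and Ullman, which the paper does elsewhere.
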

\begin{fact}\label{fact:properties}
$\ \ \ $
\begin{enumerate}
\item[a.]
Given any finite automaton ({\sc dfa}, {\sc ndfa}, $\varepsilon$-{\sc dfa} or $\varepsilon$-{\sc ndfa}), 
$\MC=\tuple{\Sigma,Q,q_0, F, \delta}$, it may decided in polynomial time (in $|Q|+|\Sigma|$)
if $L(\MC)=\emptyset$.
\item[b.]
Given two {\sc dfa}s accepting languages $L_1$ and $L_2$ there are effective algorithms for constructing a {\sc dfa} accepting $L_1 \cap L_2$, $L_1 \cup L_2$, $L_1 \setminus L_2$, $L_1 / L_2$.
\item[c.]
Every regular language $L\subseteq\Sigma^{*}$ has a \emph{unique}\footnote{``Uniqueness'' is modulo relabelling states
of the automaton.} minimal number of states {\sc dfa}, $M$ for which $L(M)=L$. Furthermore, given $M'$ with $L(M')=L$
the unique minimized automaton, $M$ with $L(M)=L(M')=L$ may be constructed in polynomial time in
$|Q_{M'}|+|\Sigma|$.

\end{enumerate}
\end{fact}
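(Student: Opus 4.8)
The plan is to recognise Fact~\ref{fact:properties} as a collection of classical automata-theoretic results (see \cite{HU:1979}) and to indicate the standard constructions behind each part. For part (a) I would observe that $L(\MC)\neq\emptyset$ exactly when some accepting state is reachable from $q_0$. Concretely, associate to $\MC=\tuple{\Sigma,Q,q_0,F,\delta}$ the directed graph $G_{\MC}$ on vertex set $Q$ containing an edge $q\to q'$ whenever $q'\in\delta(q,\sigma)$ for some $\sigma\in\Sigma\cup\set{\varepsilon}$ (reading $\delta(q,\sigma)$ as the singleton $\set{\delta(q,\sigma)}$ in the deterministic cases). A word is accepted iff there is a path in $G_{\MC}$ from $q_0$ to some state of $F$, so $L(\MC)=\emptyset$ iff no vertex of $F$ is reachable from $q_0$ in $G_{\MC}$. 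Reachability is decided by a single breadth-first search, linear in $\card{Q}+\card{G_{\MC}}$; since $G_{\MC}$ has at most $O(\card{Q}^{2}\card{\Sigma})$ edges this is polynomial in $\card{Q}+\card{\Sigma}$, and $\varepsilon$-transitions need no special handling as they are already edges of $G_{\MC}$.

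For part (b) the Boolean operations are handled by the product construction. Given {\sc dfa}s $M_i=\tuple{\Sigma,Q_i,q_0^{i},F_i,\delta_i}$ for $L_i$ ($i\in\set{1,2}$), form $M=\tuple{\Sigma,Q_1\times Q_2,(q_0^{1},q_0^{2}),F,\delta}$ with $\delta((p,q),\sigma)=(\delta_1(p,\sigma),\delta_2(q,\sigma))$; taking $F=F_1\times F_2$ yields $L_1\cap L_2$, $F=(F_1\times Q_2)\cup(Q_1\times F_2)$ yields $L_1\cup L_2$, and $F=F_1\times(Q_2\setminus F_2)$ yields $L_1\setminus L_2$. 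The quotient $L_1/L_2$ is slightly less immediate: call a state $q\in Q_1$ \emph{good} if the {\sc dfa} obtained from $M_1$ by using $q$ as start state accepts at least one word of $L_2$; equivalently, if the product of that {\sc dfa} with $M_2$ has non-empty language, which is testable by part (a). Since, by Definition~\ref{defn:languages-ops}, $u\in L_1/L_2$ iff $\delta_1^{*}(q_0^{1},u)$ is good, the {\sc dfa} for $L_1/L_2$ is just $M_1$ with its accepting set replaced by the (effectively computable) set of good states. All four constructions are therefore effective.

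For part (c) uniqueness follows from the Myhill--Nerode theorem: the states of a minimal {\sc dfa} for $L$ biject canonically with the classes of the right congruence $u\equiv_{L}v\iff\forall w\in\Sigma^{*}(uw\in L\Leftrightarrow vw\in L)$, and the transitions among these classes are forced, so any two minimal {\sc dfa}s for $L$ are isomorphic up to relabelling. To construct the minimal {\sc dfa} from a given $M'$ I would first delete states unreachable from $q_0$ (one reachability pass, as in part (a)), then compute the equivalence $p\approx q\iff\forall w(\delta^{*}(p,w)\in F\Leftrightarrow\delta^{*}(q,w)\in F)$ by partition refinement: start from the partition $\set{F,Q\setminus F}$ and repeatedly split any block two of whose states send some symbol into distinct current blocks, halting when the partition is stable. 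Quotienting $M'$ by $\approx$ gives the minimal automaton. Moore's refinement runs in $O(\card{Q}^{2}\card{\Sigma})$ and Hopcroft's in $O(\card{Q}\log\card{Q}\cdot\card{\Sigma})$, so the whole procedure is polynomial in $\card{Q_{M'}}+\card{\Sigma}$.

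The only steps requiring genuine argument rather than recollection are the correctness of the good-states characterisation for the quotient and the correctness of partition refinement; the main obstacle is showing that the stable partition produced by refinement coincides with the Nerode classes. One inclusion is immediate (two states left in the same block cannot be $\equiv_L$-distinguished after the process halts), while the converse needs an induction on the length of a shortest distinguishing suffix to guarantee that any two $\not\equiv_L$-related states are eventually separated. Everything else reduces to the product and reachability constructions already used in parts (a) and (b).
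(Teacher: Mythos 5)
Your proposal is correct and follows essentially the same route as the paper, which presents Fact~\ref{fact:properties} as classical background rather than proving it: parts (a) and (b) rest on the standard reachability and product constructions of \cite{HU:1979}, and part (c) on the Myhill--Nerode theorem \cite{Nerode:1958} together with partition-refinement minimization \cite{Hopcroft:1971}. You simply spell out the constructions the paper's citations point to (including the good-state characterisation for the quotient, which is the standard one), and each step is sound.
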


Fact~\ref{fact:properties}~(c) is the Myhill-Nerode Theorem \cite{Nerode:1958}, a polynomial time algorithm
for constructing the minimal automaton may be found in \cite[Ch.~3.4]{HU:1979}; the most efficient
(currently known) algorithm is that of Hopcroft~\cite{Hopcroft:1971}
which takes at most $O(|Q|\log|Q|)$ steps to minimise a {\sc dfa} with $|Q|$ states.

\section{Proofs}
\label{appendix-proofs}

\subsection{Proofs of Section \ref{section:naive-rep}}\label{proofs-naive}

\noformalgrammar*
\begin{proof}
It is well-known that Turing machine programs may be encoded as words in $\set{0,1}^{*}$ in such
a way that if $\beta(M)$ is the encoding of some TM, $M$, then there is a (so-called \emph{universal}) TM
which given the pair $\tuple{\beta(M),w}$ as input, exactly simulates the computational steps of $M$
on input $w$.\footnote{See, for example, Hopcroft and Ullman~\cite[Chap.~8.3]{HU:1979} or Dunne~\cite[Chap.~4]{Dunne:1991}
for example constructions of such universal TMs.} Furthermore it can be decided if any $w\in\set{0,1}^{*}$
is such that $w=\beta(M)$ for some TM program $M$. For any such encoding scheme we may use
the standard \emph{lexicographic ordering}\footnote{That is, the total ordering $<_{{\rm lex}}$ in which
$0<_{{\rm lex}}1$, $w<_{{\rm lex}}u$ if $|w|<|u|$, and, when $|w|=|u|$
$w<_{{\rm lex}}u$ if $w=0\cdot v$, $u=1\cdot x$ or (when $w=\alpha\cdot v$ and $u=\alpha\cdot x$) if $v<_{{\rm lex}}x$.}
of $\set{0,1}^{*}$ to order TM programs, so that
\[
\begin{array}{lcl}
\beta(M_i)&\mbox{ $=$ }&\mbox{The $i$'th word in the lexicographic ordering of $\set{0,1}^*$}\\
&&\mbox{ that describes a valid TM encoding.}
\end{array}
\]
Finally, we recall that there is no formal grammar, $G$, that generates the following language:
\[
\mbox{{\sc non-halt-empty}}~=~\set{\beta(M)~:~\mbox{$M$ does \emph{not} halt given $\varepsilon$ as input}}
\]
We can now define the language $L_{\AC}$ with the property required via
\[
L_{\AC}~~=~~~\set{ 0\cdot1\cdot0^{k}~:~\beta(M_k)\in\mbox{{\sc non-halt-empty}}}
\]
From which it follows that a grammar $G$ with $L(G)=L_{\AC}$ allows a grammar $G_{{\neg\varepsilon-halt}}$
with $L(G_{{\neg\varepsilon-halt}})=\mbox{{\sc non-halt-empty}}$ to be built.
\end{proof}

\unrestrictednondecidable*
\begin{proof}
Immediate consequence of Rice's Theorem for r.e. Index Sets,~\cite{Rice:1956}, see e.g. \cite[pp.~189--192]{HU:1979} or
\cite[pp.~57--66]{Dunne:1991}.\footnote{Rice's Theorem for r.e. Index Sets characterises those ``properties'' of TMs (equivalently,
formal grammars) that are semi-decidable. It is trivial to show that grammars generating subsets
of $\set{0^i\cdot1\cdot0^j~:~i,~j\geq 1}$ fail to meet the conditions for a property to be semi-decidable.}
\end{proof}

\contextsensitivenonsemidecidable*
\begin{proof}
The problem of determining if $L(G)=\emptyset$ for an arbitrary context-sensitive grammar (over alphabet $\set{0,1}$)
is not semi-decidable.
Given a context-sensitive grammar $G$ over $\set{0,1}$ we construct a context-sensitive grammar $G'$ over $\set{0,1}$
with the property that $L(G')\subseteq\set{0^i\cdot1\cdot0^j~:~i,~j\geq 1}$ if and only if $L(G)=\emptyset$. Let $S$
be the start symbol of $G$. Add a new starting symbol $S'$ to $G$ with a single production
$S'\rightarrow1\cdot S$ to give the new grammar $G'$. Then given that any word actually generated
by $G'$ must begin with the symbol $1$, the only way in which $L(G')\subseteq\set{0^i\cdot1\cdot0^j~:~i,~j\geq 1}$
would be if $L(G')=\emptyset$. This can only be the case if $L(G)=\emptyset$ to begin with.
\end{proof}

\contextfreedecidable*

\begin{proof}
First note that $L_{010}=\set{0^i\cdot1\cdot0^j~:~i,~j\geq 1}$ is a regular language, and hence its complement $\overline{L_{010}}$ is a regular language too.
Now, given $G$ a context-free grammar over the alphabet $\set{0,1}$, checking $L(G)\subseteq L_{010}$ is equivalent to check $L(G) \cap \overline{L_{010}} = \emptyset$.
It is well-known \cite{HU:1979} that the intersection of a context-free language (in our case $L(G)$) with a regular language (in our case $\overline{L_{010}}$) is a context-free language, whose specification can be constructed from those of $L(G)$ and $\overline{L_{010}}$.
The conclusion then follows from the fact that verifying the emptyness of the language generated by a context-free grammar can be done in polynomial time \cite{HU:1979}.
\end{proof}
\dfadecidable*
\begin{proof}
The {\sc dfa}, $M$, accepts a subset of $\set{0^i\cdot1\cdot0^j~:~i,~j\geq 1}$ if and only if
$L(M)\setminus\set{0^i\cdot1\cdot0^j~:~i,~j\geq 1}=\emptyset$. Noting the language $\set{0^i\cdot1\cdot0^j~:~i,~j\geq 1}$
is regular and that for {\sc dfa}s, $M$, $M'$
a {\sc dfa} accepting exactly $L(M)\setminus L(M')$ may be constructed in polynomial time, the proof is completed
by observing that $L(M)=\emptyset$ is also decidable in polynomial time for any given {\sc dfa}.
\end{proof}

\limiteddfa*
\begin{proof}
From the Pumping Lemma for regular languages, cf. \cite[Chap.~3.1]{HU:1979}, with any regular language, $L$, there is
an associated constant, $K_L$, such that: for all $w\in L$, with $|w|\geq K_L$, $w=x\cdot y\cdot z$ with $|x\cdot y|<K_L$, $|y|\geq1$
and $x\cdot y^t\cdot z\in L$ for all $t\geq 0$. Thus proceeding by contradiction
it suffices to consider some $w=0^r\cdot1\cdot0^s\in L$ with $r\geq K_L$: note that the existence of a suitable $w$ is
guaranteed by the premise that there are infinitely many distinct values of $k$ for which $0^k\cdot1\cdot0^m\in L$.
Now, since by the condition $|x\cdot y|<K_L \leq r$ $1$ does not belong to $x \cdot y$, we can write $w=x\cdot y\cdot 0^{a} \cdot 1 \cdot 0^{s}=0^p\cdot0^q\cdot1\cdot0^s$ with $q = |y|$, $p = |x| + a$, and $p+q<K_L$.  It follows that all words of the form
$0^{p+tq}\cdot1\cdot0^s$ are in $L$ for all $t\geq0$. Now choosing $t$ so that $p+tq>s$ yields a word which violates
the conditions for membership in $L$.
\end{proof}

\subsection{Proofs of Section \ref{section:sec-generic-representation}}
\label{sec:proof-sect-representation}

\thmreasonablemapping*

\begin{proof}
Let $p$ be an attack expression over $\Sigma$.
We proceed by induction on $size(p)\geq0$.

The inductive base case involves $p\in\set{\sigma_1,\ldots,\sigma_k,I}$. First observe 
that $\underline{p}$ in these cases satisfies the additivity requirement (R1) of Defn.~\ref{defn:reasonable-attack}
since for any $S\subseteq\Sigma^{*}$ we have
$\underline{p}(S)~\in\set{\sigma_1,\ldots,\sigma_k,S}$ and for each $w\in S$,
$\underline{p}(\set{w})~\in\set{\sigma_1,\ldots,\sigma_k,\set{w}}$. Hence in the case $p=\sigma$ we obtain
\[
\underline{p}(S)~~=~~\set{\sigma}~~=~~\bigcup_{w\in S}~\set{\sigma}~~=~~\bigcup_{w\in S}~\underline{p}(\set{w})
\]
whereas for $p=I$ we have
\[
\underline{p}(S)~~=~~S~~=~~\bigcup_{w\in S}~\set{w}~~=~~\bigcup_{w\in S}~\underline{p}(\set{w})
\]
Finally, since $S$
is assumed regular to begin with, for each of the base case possibilities, we have $\underline{p}(S)$ is also regular.

Now inductively assume for some $k>0$ and all attack expressions over $\Sigma$, $q$, with $size(q)<k$ the
mapping given via $\underline{q}$ is a reasonable attack function. Consider any attack expression, $p$, over $\Sigma$
for which $size(p)=k$. Since $size(p)>0$ its construction must involve (at least) one of the operations
from $\set{\cup,K_{\Sigma}\cdot,\cdot K_{\Sigma},/K_{\Sigma},K_{\Sigma}/,\cap K_{\Sigma},hd,tl}$. We consider these in turn.

If $p=q\cup r$ then $\underline{p}(S)=(\underline{q}(S)\cup\underline{r}(S))$. By the inductive hypothesis $\underline{q}$ and $\underline{r}$ are both reasonable, hence since $\cup$ preserves both the properties (R1) and (R2) it follows that $\underline{p}$ is reasonable.

If $p=K_{\Sigma}\cdot q$ for some regular subset $K_{\Sigma}$ of $\Sigma^{*}$
\[
\underline{p}(S)~=~K_{\Sigma}\cdot\underline{q}(S)~=~\left({~\bigcup_{u\in K_{\Sigma}}~u\cdot\underline{q}(S)~}\right)
\]
where, from the inductive hypothesis, $\underline{q}$ is reasonable. Thus
\[
\bigcup_{w\in S}~\underline{p}(\set{w})~=~
\bigcup_{w\in S}~\left({~\bigcup_{u\in K_{\Sigma}}~u\cdot\underline{q}(\set{w})~}\right)
\]
which is
\[
\left({~\bigcup_{u\in K_{\Sigma}}~u\cdot\underline{q}(S)~}\right)
\]
by the additivity of $\underline{q}$. Again (R2) holds by virtue
of the fact that $K_{\Sigma}\cdot$ preserves regularity.

The argument for $p=q\cdot K_{\Sigma}$ is similar.

If $p=q/K_{\Sigma}$ then:
$\underline{p}(S)~=~\underline{q}(S)/K_{\Sigma}$, which, by the additivity of $\underline{q}$, is equivalent to
\[
\left({\bigcup_{w\in S}~q(\set{w})}\right)/K_{\Sigma}~=~\bigcup_{w\in S}~q(\set{w})/K_{\Sigma}~=~\bigcup_{w\in S}~\underline{p}(\set{w})
\]
so that again $\underline{p}$ is additive from the fact that $\underline{q}$ is additive. The closure property is again easily
verified.

The case $p=K_{\Sigma}/q$ is similar.

If $p=q\cap K_{\Sigma}$ then $\underline{p}(S)$ is
\[
\underline{q}(S)\cap K_{\Sigma}~~=~~\left({\bigcup_{w\in S}~\underline{q}(\set{w})}\right)~\cap~K_{\Sigma}
\]
and
\[
\left({\bigcup_{w\in S}~\underline{p}(\set{w})}\right)~~=~~
\left({\bigcup_{w\in S}~\underline{q}(\set{w})~\cap~K_{\Sigma}}\right)~~=~~\left({\bigcup_{w\in S}~\underline{q}(\set{w})}\right)~\cap~K_{\Sigma}
\]
so that again additivity holds. Closure is trivially established.

For $p=hd(q)$, we get
\[
\underline{p}(S)~~=~~hd(\set{~w~:~w\in\underline{q}(S)})~~=~~\bigcup_{u\in S}~hd(\set{~w~:~w\in\underline{q}(\set{u})}~~=~~\bigcup_{u\in S}~\underline{p}(\set{u})
\]
using additivity of $\underline{q}$ for the second equality.

To see that (R2) holds it suffices to note that (given a regular language $S$)
$hd(\underline{q}(S))\subseteq\Sigma$ is finite and hence trivially a regular language.

For $p=tl(q)$:
\[
\underline{p}(S)~~=~~ tl(\set{~w~:~w\in\underline{q}(S)})~~=~~ 
\set{~u~: \exists \sigma \in \Sigma \mbox{ s.t. }~\sigma\cdot u\in\underline{q}(S)}~~= 
\]
\[
\bigcup_{w\in S}~\set{~u~:~\sigma\cdot u\in\underline{q}(\set{w})~}~~=~~\bigcup_{w\in S}~\underline{p}(\set{w}) 
\]
using the additivity of $\underline{q}$ for the third equality. 


It remains to show $\underline{p}(S)$ is regular if $S$ is so. Consider a {\sc dfa}, $\MC_{q}$,
accepting $\underline{q}(S)$ -- such a {\sc dfa} being guaranteed by the fact that $\underline{q}(S)$ is regular.
In order to build a {\sc dfa} accepting $tl(\underline{q}(S))$ it suffices to replace its accepting
states, $\FC_{q}$ by $\set{~r~:~\delta(r,\sigma)\in\FC_{q}}$. We deduce that $tl(q)$ gives rise
to $\underline{p}$ satisfying R2 thus completing the inductive argument.
\end{proof}

\propreasonablemapping*
\begin{proof}

Let $\XC\subseteq\Sigma^{*}$ and $\mu~:~2^{\Sigma^{*}}\rightarrow2^{\Sigma^{*}}$ be a reasonable mapping. First note
that $\mu_{\XC}~:~2^{\XC}\rightarrow2^{\XC}$ is additive since for any $S\subseteq\XC$ we have $\mu_{\XC}(S)$ equal to
\[
\mu(S)\cap\XC~=~\left({\bigcup_{w\in S}~\mu(\set{w})}\right)\cap\XC~=~\bigcup_{w\in S}\mu(\set{w})\cap\XC~=~
\bigcup_{w\in S}~\mu_{\XC}(\set{w})
\]
Finally, that $\mu_{\XC}$ preserves regularity for regular subsets $S$ of $\XC$ is immediate from $\mu_{\XC}(S)=\mu(S)\cap\XC$.
\end{proof}

\thmclosureregular*
\begin{proof}

Consider the various forms that $a\in\AC\EC(\Sigma)$ may have. We show by induction on $size(a)$ that if $S$ is a regular language then
$\inv{a}(S)$ is a regular language too.

\noindent
\textbf{Case 1.} $size(a)=0$ (Inductive base - Fact \ref{conjectures-inverse}.\ref{Prop0})

In this case, $a\in\set{\sigma_1,\ldots,\sigma_k,I}$. 

For $a=\sigma_i$,
\[
\inv{a}(S)~~=~~\left\{
\begin{array}{lcl}
\Sigma^{*}&\mbox{ if }&\sigma_i\in S\\
\emptyset&\mbox{ if }&\sigma_i\not\in S
\end{array}
\right.
\]
We note, in view of the properties $tl(\emptyset)=\emptyset$, $tl(\set{\sigma})=\varepsilon$ and $\emptyset\cdot L=\emptyset$,
that
$\inv{a}(S)=tl(S\cap\set{\sigma_i})\cdot\Sigma^{*}$, i.e. we do not need to explicitly represent the conditional behaviour, thus allowing one to express $a^{+}$ as $tl(I \cap \sigma_i)\cdot\Sigma^{*}$. 

For $a=I$, $\inv{a}(S)=S$.
Thus in each case $\inv{a}(S)$ is a regular language.

Assuming for each $a$ with $size(a)<k$, that $\inv{a}(S)$ is regular, consider $a\in\AC\EC(\Sigma)$ with $size(a)=k$.

\noindent
\textbf{Case 2.1} $a=b\cup c$ (Fact \ref{conjectures-inverse}.\ref{Prop1})

Then
\[
\begin{array}{lcl}
\inv{a}(S)&\mbox{ $=$ }&\set{~v\in\Sigma^{*}~:~\exists u\in S~\mbox{ s.t. }u\in \underline{b}(\set{v})\mbox{ or }u\in\underline{c}(\set{v})}\\
&\mbox{ $=$ }&\set{~v\in\Sigma^{*}~:~\exists u\in S~\mbox{ s.t. }u\in \underline{b}(\set{v})}~\cup~\\
&&\set{~v\in\Sigma^{*}~:~\exists u\in S~\mbox{ s.t. }u\in\underline{c}(\set{v})}\\
&\mbox{ $=$ }&\inv{b}(S)\cup\inv{c}(S)
\end{array}
\]
Via the inductive hypothesis and the closure properties of regular languages (see Fact \ref{fact:closure-props} in \ref{subsection:flt}), this is a regular language.

\noindent
\textbf{Case 2.2} $a=b\cdot K_{\Sigma}$ (Fact \ref{conjectures-inverse}.\ref{Prop2})
\[
\inv{a}(S)~~=~~\set{~v\in\Sigma^{*}~:~\exists u\in S\mbox{ s.t. }u\in \underline{b}(\set{v})\cdot K_{\Sigma}}
\]
Recall that the \emph{quotient} of a language $L_1$ wrt $L_2$ (denoted $L_1/L_2$) is
\[
L_1/L_2~~=~~\set{~p~:~\exists~q\in L_2~\mbox{ s.t. }p\cdot q\in L_1}
\]
It is easily seen that for $a=b\cdot K_{\Sigma}$ this leads to 
\[
\begin{array}{lcl}
\inv{a}(S)&\mbox{ $=$ }&\set{~v\in\Sigma^{*}~:~\exists~p\in S/K_{\Sigma}~\mbox{ s.t. }p\in\underline{b}(\set{v})}\\
&\mbox{ $=$ }&\inv{b}(S/K_{\Sigma})
\end{array}
\]
That is, unless $u\in S$ has the form $p\cdot q$ with $q\in K_{\Sigma}$, then $\inv{a}(\set{u})=\emptyset$; for $u\in S$
which is of the required form, it is necessary to identify which arguments these (with the $K_{\Sigma}$ component
removed, i.e. replacing $p\cdot q$, $q\in K_{\Sigma}$ with $p$) attack according to the specification $b$. 

Again, this case
is completed by recalling that regular languages -- which $S$ and $K_{\Sigma}$ are by definition - are closed under the quotient operator (see Fact \ref{fact:closure-props} in \ref{subsection:flt}) and the inductive hypothesis which
ensures that $\inv{b}$ preserves regularity.

\noindent
\textbf{Case 2.3} $a=K_{\Sigma}\cdot b$ (Fact \ref{conjectures-inverse}.\ref{Prop3})

The argument is similar to that used in Case 2.2, so that:
\[
\inv{a}(S)\mbox{ $=$ }\set{~v\in\Sigma^{*}~:~\exists~u\in S~\mbox{ s.t. }u\in K_{\Sigma}\cdot\underline{b}(\set{v})}
\]
Hence if $w\in\Sigma^{*}$ does not have the form $p\cdot q$ for some $p\in K_{\Sigma}$ then $\inv{a}(\set{w})=\emptyset$
otherwise $\inv{a}(\set{w})=\inv{b}(\set{q})$, i.e.
\[
\inv{a}(S)~~=~~\inv{b}(~\set{~q~:~\exists~p\in K_{\Sigma}~\mbox{s.t.}~p\cdot q\in S}~)
\]
However, noting that $rev(L\cdot R)~~=~~rev(R)\cdot rev(L)$, it
follows that $p\in K_{\Sigma}$ and $p\cdot q\in S$ if and only if
$rev(q)\cdot rev(p)\in rev(S)$ and $rev(p)\in rev(K_{\Sigma})$ so that
$\set{~q~:~\exists~p\in K_{\Sigma}~\mbox{s.t.}~p\cdot q\in S})$ is $rev(T)$ where
\[
\begin{array}{lcl}
T&\mbox{ $=$ }&\set{~q\in\Sigma^{*}~:~\exists~p\in rev(K_{\Sigma})\mbox{ s.t. }q\cdot p\in rev(S)~}\\
&\mbox{ $=$ }&rev(S)/rev(K_{\Sigma})
\end{array}
\]
and $\inv{a}(S)~=~\inv{b}(rev(rev(S)/rev(K_{\Sigma})))$ with this case following since regular languages
are closed under $rev()$.

\noindent
\textbf{Case 2.4} $a=b/K_{\Sigma}$ (Fact \ref{conjectures-inverse}.\ref{Prop4})
\[
\begin{array}{lcl}
\inv{a} (S)&\mbox{ $=$ }&\set{~v\in\Sigma^{*}~:~\exists u\in S\mbox{ s.t. }u\in\underline{b}(\set{v})/K_{\Sigma}}\\
&\mbox{ $=$ }&\inv{b}(S\cdot K_{\Sigma})
\end{array}
\]

\noindent
\textbf{Case 2.5} $a=K_{\Sigma}/b$ (Fact \ref{conjectures-inverse}.\ref{Prop5})

\[
\begin{array}{lcl}
\inv{a} (S)&\mbox{ $=$ }&\set{~v\in\Sigma^{*}~:~\exists u\in S\mbox{ s.t. }u\in K_{\Sigma}/\underline{b}(\set{v})}\\
&\mbox{ $=$ }&\set{~v\in\Sigma^{*}~:~\exists u\in S, w \in\underline{b}(\set{v}) \mbox{ s.t. }u \cdot w \in K_{\Sigma}}\\
&\mbox{ $=$ }& \set{~v\in\Sigma^{*}~:~\exists u\in S, w \in\underline{b}(\set{v}) \mbox{ s.t. } rev(w) \cdot rev(u) \in rev(K_{\Sigma})}\\
&\mbox{ $=$ }& \set{~v\in\Sigma^{*}~:~\exists u\in rev(S), w \in rev(\underline{b}(\set{v})) \mbox{ s.t. } w \cdot u \in rev(K_{\Sigma})}\\
&\mbox{ $=$ }& \inv{b}(rev(rev(K_{\Sigma})/rev(S)))
\end{array}
\]

\noindent

\textbf{Case 2.6} $a=b\cap K_{\Sigma}$ (Fact \ref{conjectures-inverse}.\ref{Prop6})
\[
\begin{array}{lcl}
\inv{a}(S)&\mbox{ $=$ }&\set{~v\in\Sigma^{*}~:~\exists u\in S\mbox{ s.t. }u\in (\underline{b}(\set{v})\cap K_{\Sigma})}\\
&\mbox{ $=$ }&\set{~v\in\Sigma^{*}~:~\exists u\in S \cap K_{\Sigma}\mbox{ s.t. }u\in\underline{b}(\set{v})}\\
&\mbox{ $=$ }&\inv{b}(S\cap K_{\Sigma})
\end{array}
\]

\noindent
\textbf{Case 2.7} $a=hd(b)$ (Fact \ref{conjectures-inverse}.\ref{Prop7})
\[
\begin{array}{lcl}

\inv{a}(S)&\mbox{ $=$ }&\set{~v\in\Sigma^{*}~:~\exists~u\in S~\mbox{ s.t. }u\in hd(\underline{b}(v))}\\
&\mbox{ $=$ }&\set{~v\in\Sigma^{*}~:~\exists~\sigma\in S\cap\Sigma,~q\in\Sigma^{*}~\mbox{ s.t. }~\sigma\cdot q\in \underline{b}(v)}\\
&\mbox{ $=$ }&\inv{b}(((S\cap\Sigma)\cdot\Sigma^{*}))
\end{array}
\]
i.e. if $w\in S$ but $w\not\in\Sigma$
then $\inv{a}(\set{w})=\emptyset$, if $\sigma\in S\cap\Sigma$, then $\sigma$ attacks every argument $v$ such that there in an element of $\underline{b}(\set{v})$ having the form $\sigma \cdot z$.

\noindent
\textbf{Case 2.8} $a=tl(b)$ (Fact \ref{conjectures-inverse}.\ref{Prop8})
\[
\inv{a}(S)~~=~~\set{~v\in\Sigma^{*}~:~\exists u\in S\mbox{ s.t. }u\in tl(\underline{b}(\set{v})}
\]
from which $\inv{a}(S)=\inv{b}(\Sigma\cdot S)$.
\end{proof}

\propnmapping*
\begin{proof}
By definition $\pi_{a}^{+}(S) = \set{~v\in\XC~:~\exists~u\in S\mbox{ s.t. }u \in \underline{a}(v) \cap \XC}$, which, since $S \subseteq \XC$, is equivalent to $\set{~v\in\XC~:~\exists~u\in S\mbox{ s.t. }u \in \underline{a}(v)} = \set{~v\in\Sigma^{*}~:~\exists~u\in S\mbox{ s.t. }u \in \underline{a}(v)} \cap \XC = \inv{a}(S) \cap \XC =\inv{a}_{\XC}(S)$.
\end{proof}

\subsection{Proofs of Section \ref{sec:computing}}
\label{sec:proofs-sect-computing}

\effectivealghs*
\begin{proof}
We first observe that validating an instance $\tuple{\tuple{\MC,a},\MC_S}$, where $\MC_S$ is a finite automaton accepting
$S$, as legal simply involves checking $L(\MC_S)\subseteq L(\MC)$, i.e. constructing an automaton $\MC_V$ accepting $L(\MC_S)\setminus L(\MC)$ and checking that $L(\MC_V) = \emptyset$ (see Fact \ref{fact:properties}(a) in \ref{subsection:flt}).

For (a), $S$ is conflict free if and only if $S^{+}\cap S=S^{-}\cap S = \emptyset$. Thus given a {\sc dfa}, $\MC_S$
with $L(\MC_S)\subseteq\XC$ it suffices to check that $L(\MC_S)\cap\pi_{a}^{-}(S)=\emptyset$, i.e. construct
$\MC_{cf}$ accepting $L(\MC_S)\cap\pi_{a}^{-}(S)$  and check that $L(\MC_{cf})=\emptyset$.

In (b), $x\in\FC(S)$ if and only if $(y\in\pi_{a}^{-}(\set{x}))\Rightarrow (y\in\pi_{a}^{+}(S))$ so that
$x\in\FC(S)$ if and only if $\pi_{a}^{+}(S)\supseteq\pi_{a}^{-}(\set{x})$ which can be verified by constructing
suitable automata $\MC_{S}^{+}$ for $\pi_{a}^{+}(S)$, $\MC_{x}^{-}$ for $\pi_{a}^{-}(\set{x})$ and 
checking that $L(\MC_{x}^{-})\setminus L(\MC_{S}^{+})=\emptyset$.

For (c), $S\in\EC_{adm}(\tuple{\XC,\AC})$ if and only if $S$ is conflict free, which can be verified using
the result of part (a) and $S^{+}\supseteq S^{-}$, i.e. every attacker $y$ of an argument in $S$ is counterattacked
by some argument $z$ of $S$. It follows that to check $S\in\EC_{adm}(\tuple{\XC,\AC})$ having
verified that $S$ is conflict free requires only checking $\pi_{a}^{-}(S)\setminus\pi_{a}^{+}(S)=\emptyset$.

Part (d) follows by checking that $S$ is conflict free and $S\cup\pi_{a}^{+}(S)=\XC$.

To show (e), 
first observe that $\XC\setminus\pi^{+}_{a}(S)$ consists of those arguments in $\XC$ that are \emph{not} attacked
by any argument in $S$. It follows that any argument that is attacked by some $y\in\XC\setminus\pi^{+}_{a}(S)$ cannot be acceptable wrt to $S$
since $S$ does not contain any counterattack. The set of arguments attacked by some $y\in\XC\setminus\pi^{+}_{a}(S)$ is just
$\pi^{+}_{a}(\XC\setminus\pi^{+}_{a}(S))$ and, hence, any argument that does not belong to this set, i.e. arguments in
$\XC\setminus\pi^{+}_{a}(\XC\setminus\pi^{+}_{a}(S))$ are acceptable to $S$. If $S$ is a regular language, then since all stages
preserve regularity, i.e. $\pi^{+}_{a}(S)$, $\XC\setminus\pi^{+}_{a}(S)$, $\pi^{+}_{a}(\XC\setminus\pi^{+}_{a}(S))$ and
$\XC\setminus\pi^{+}_{a}(\XC\setminus\pi^{+}_{a}(S))$ are all regular, from Thm.~\ref{thm:attack-comp} and the fact that there are effective algorithms for constructing a {\sc dfa} accepting $S_1 \setminus L_2$ (see Fact~\ref{fact:properties} in \ref{subsection:flt}) 
we can construct the required {\sc dfa}.

Finally (f) is immediate from (a) and (e) and the definition of $\EC_{comp}$.
\end{proof}

\thmsemidec*
\begin{proof}
The approach used is similar for both results and exploits the (propositional) form of the so-called \emph{Compactness Theorem}\footnote{The
property that $\varphi(Z)$, an infinite collection of finite clauses -- or, more generally, finite propositional formulae --
over an enumerable collection of propositional variables,
is satisfiable if and only if every \emph{finite} subset of clauses from $\varphi(Z)$ is satisfiable.}.

The \emph{lexicographic ordering}, $\leq_{{\rm lex}}$ of $\Sigma^{*}\setminus\set{\varepsilon}$ 
has $u\leq_{{\rm lex}}v$ if $|u|<|v|$ or (when $|u|=|v|$) if $u=\sigma_i x$, $v=\sigma_j y$ and $i<j$ or
when $u=\sigma_i x$, $v=\sigma_i y$ if either $x=y=\varepsilon$ or $x\leq_{{\rm lex}} y$. We use
$w_i$ to denote the $i$'th word in $\Sigma^{*}\setminus\set{\varepsilon}$ under this ordering.

Let $Z=\set{z_1,~z_2,\ldots,z_k,\ldots}$ be an enumerable infinite set of propositional variables and
define a bijective mapping $\chi~:~Z\leftrightarrow\Sigma^{*}$ via $\chi(z_i)=w_i$.
For part (a) consider the following collection of clauses $\varphi(Z)$:
\[
\varphi(Z)~~=~~CONF(Z)~\bigwedge~RANGE(Z)
\]
where
\[
CONF(Z)~~=~~\bigwedge_{\set{z_i~:~\chi(z_i)\in\XC}}~\bigwedge_{\set{z_j~:~\chi(z_j)~\in\XC~\mbox{ \& }\chi(z_j)\rightarrow_a\chi(z_i)}}
(\neg z_i\Or\neg z_j)
\]
\[
RANGE(Z)~~=~~\bigwedge_{\set{z_i~:~\chi(z_i)\in\XC}}~~
\left({~z_i~~\Or~~\LOR_{\set{z_j~:~\chi(z_j)~\in\XC~\mbox{ \& }\chi(z_j)\rightarrow_a\chi(z_i)}}~z_j}\right)
\]
Thus if $S\in\EC_{stab}(\tuple{\XC,\AC})$ then the assignment $z_i=\top$ iff $\chi(z_i)\in S$ will satisfy
$\varphi(Z)$ and, conversely, if $\tuple{\alpha_1,\alpha_2,\ldots,a_k,\ldots}$ is a satisfying assignment
to $Z$ for $\varphi(Z)$ then the subset $\set{\chi(z_i)~:~\alpha_i=\top}\in\EC_{stab}(\tuple{\XC,\AC})$.
It follows that $\EC_{stab}(\tuple{\XC,\AC})=\emptyset$ if and only if $\varphi(Z)$ is unsatisfiable, and hence
via the Compactness Theorem, if and only if there is a \emph{finite} subset of clauses from $\varphi(Z)$ that are
collectively unsatisfiable. 

For any subset $S$ of $\Sigma^{*}$ let
\[
Cl(S)~~=~~~ conf(S)~\bigwedge~range(S)
\]

where 

\[
conf(S)~~=~~\bigwedge_{\set{z_i~:~\chi(z_i)\in S}}~\bigwedge_{\set{z_j~:~\chi(z_j)~\in\XC~\mbox{ \& }\chi(z_j)\rightarrow_a\chi(z_i)}}
(\neg z_i\Or\neg z_j)
\]
\[
range(S)~~=~~\bigwedge_{\set{z_i~:~\chi(z_i)\in S}}~~
\left({~z_i~~\Or~~\LOR_{\set{z_j~:~\chi(z_j)~\in\XC~\mbox{ \& }\chi(z_j)\rightarrow_a\chi(z_i)}}~z_j}\right)
\]

For a finite $S$ both $conf(S)$ and $range(S)$ are finite since for each element $z_i$ of $S$ the set of elements $z_j$ corresponding to its attackers is finite.
Moreover note that the set of clauses in $Cl(S)$ is strictly monotonic wrt inclusion (since each additional element $z_i$ entails the addition of at least a clause in $range(S)$) and that for each clause in $CONF(Z)$ and $RANGE(Z)$ there is a $z_i$ such that if $z_i \in S$ then the clause belongs to $Cl(S)$.
 
Now consider the increasing (wrt inclusion) sequence of finite subsets $S$ of $\Sigma^{*}$ obtained by adding incrementally the $k$-th element of $\Sigma^{*} \setminus \set{\varepsilon} $ in the lexicographic order. 
Then for any finite subset $Q$ of clauses from $\varphi(Z)$ it is clearly the case that there is some $S$ in the sequence for which $Q\subseteq Cl(S)$.

These observations yield the method given in Alg.~\ref{algorithm:stable}.

\begin{algorithm}[h]
\caption{Semi-decision process for $\EC_{stab}(\tuple{\XC,\AC})=\emptyset$ in (finitary) {\sc afs}}\label{algorithm:stable}
\begin{algorithmic}[1]
\STATE $S~:=~\emptyset$; 
\STATE $k~:=~0$
\WHILE{ undecided }
\STATE $k~:=~k+1$;
\STATE $S~:=~S~\cup~\set{~w_k~}$;
\IF{ $Cl(S)$ is unsatisfiable }
\STATE \textbf{report true}
\ENDIF
\ENDWHILE
\end{algorithmic}
\end{algorithm}

To establish correctness it is sufficient to note that, by the compactness theorem, $\varphi(Z)$ is unsatisfiable iff some finite subset $Q$ of its
clauses is so, hence iff there is some finite set $S_Q\subseteq\XC$ with $Q\subseteq Cl(S_Q)$ yielding an unsatisfiable
subset of clauses. Since $S_Q$ is finite, such a subset will eventually have $S_Q\subseteq S$ in the algorithm iff
$\varphi(Z)$ is unsatisfiable, i.e. the Alg.~\ref{algorithm:stable} will terminate whenever $\EC_{stab}(\tuple{\XC,\AC})=\emptyset$.

For part (b), the formula $\varphi(Z)~=~CONF(Z)~\And~DEF(Z)~\And~IN(R)$ is used, where $CONF(Z)$ is as previously,
$DEF(Z)$ is
\[
\LAND_{\set{z_i~:~\chi(z_i)\in\XC}}~\LAND_{\set{z_j~:~\chi(z_j)\in\XC~\mbox{ \& }\chi(z_j)\rightarrow_{a}\chi(z_i)}}~
\left({~\neg z_i~\Or~\LOR_{~\set{z_k~:~\chi(z_k)\in\XC~\mbox{ \& }\chi(z_k)\rightarrow_{a}~\chi(z_j)}}~z_k~}\right)
\]
and
\[
IN(R)~~=~~\LOR_{\set{z~:~\chi(z)\in R}}~z
\]
A similar procedure is used to that of Alg.~\ref{algorithm:stable}, however $S$ in l.~1 is initiated
to $R$ (the finite subset of $\XC$ forming part of the problem instance) and $Cl(S)$ in l.~6 is replaced by
$conf(S) \wedge def(S) \wedge IN(R)$, so that $\varphi(Z)$ is satisfiable iff every finite subset of its clauses that include the
clause $IN(R)$ is satisfiable: note that the assignment $z_i~:=~\bot$ for all $i$ will satisfy every finite
subset of $\varphi(Z)\setminus\set{IN(R)}$.
\end{proof}

\propnfinitary*
\begin{proof}
Suppose $a\in\AC\EC(\Sigma)$ satisfies the conditions of the
proposition statement. Consider any $x\in\XC$. If, in contradiction to the
claim, $\pi_{a}^{-}(\set{x})$ is unbounded then $\underline{a}(\set{x})$ must yield an infinite language.
Let $a$ be a smallest (wrt size) member of $\AC\EC(\Sigma)$ with this property.
Clearly $size(a)>0$ since all $a$ with $size(a)=0$ have $|\underline{a}(\set{x})|=1$.
Then $a$ must have one of the forms $\set{b\cup c, K_{\Sigma}\cdot b, b\cdot K_{\Sigma}, K_{\Sigma}/b, b/K_{\Sigma}, b\cap K_{\Sigma}, tl(b), hd(b)}$, where $size(b) < size(a)$ and $size(c) < size(a)$, hence $|\underline{b}(\set{x})|$ and $|\underline{c}(\set{x})|$ are finite.
The expression, $K_{\Sigma}$,
uses only operations from $\set{\cdot,+}$ and it is easily shown that these cannot generate an infinite subset of $\Sigma^{*}$. Then it is easy to see that all the operators above give rise to a finite language, i.e. $\underline{a}(\set{x})$ is finite.
\end{proof}

\bibliographystyle{plain}
\bibliography{comma-arg,complexity-arg,bib,example,missing,motivation,review}
\end{document}